\documentclass[11pt,twoside, english]{article} 
\usepackage{babel}
\usepackage{array}
\usepackage{float}
\usepackage{textcomp}
\usepackage{mathtools}
\usepackage{amsmath}
\usepackage{amsthm}
\usepackage{amssymb}
\usepackage{graphicx}
\usepackage{xargs}[2008/03/08]

\makeatletter

\providecommand{\tabularnewline}{\\}
\floatstyle{ruled}
\newfloat{algorithm}{tbp}{loa}
\providecommand{\algorithmname}{Algorithm}
\floatname{algorithm}{\protect\algorithmname}

\theoremstyle{plain}
\newtheorem{thm}{\protect\theoremname}
\theoremstyle{definition}
\newtheorem{defn}[thm]{\protect\definitionname}
\theoremstyle{plain}
\newtheorem{prop}[thm]{\protect\propositionname}
\theoremstyle{plain}
\newtheorem*{thm*}{\protect\theoremname}
\theoremstyle{plain}
\newtheorem{lem}[thm]{\protect\lemmaname}
\theoremstyle{plain}
\newtheorem{cor}[thm]{\protect\corollaryname}


\usepackage{algorithm}
\usepackage{algorithmic}

\newcommand{\bx}{\boldsymbol{x}}

\providecommand{\corollaryname}{Corollary}
\providecommand{\lemmaname}{Lemma}
\providecommand{\theoremname}{Theorem}

\@ifundefined{showcaptionsetup}{}{%
 \PassOptionsToPackage{caption=false}{subfig}}
\usepackage{subfig}
\makeatother

\providecommand{\definitionname}{Definition}
\providecommand{\propositionname}{Proposition}
\providecommand{\theoremname}{Theorem}

\setlength{\textwidth}{\paperwidth}
\addtolength{\textwidth}{-6cm}
\setlength{\textheight}{\paperheight}
\addtolength{\textheight}{-4cm}
\addtolength{\textheight}{-1.1\headheight}
\addtolength{\textheight}{-\headsep}
\addtolength{\textheight}{-\footskip}
\setlength{\oddsidemargin}{0.5cm}
\setlength{\evensidemargin}{0.5cm}

\usepackage[utf8]{inputenc} 
\usepackage[T1]{fontenc}
\usepackage{booktabs}       
\usepackage{amsfonts}       
\usepackage{nicefrac}       
\usepackage{microtype}      

\usepackage{epsf}
\usepackage{epsfig}
\usepackage{fancyhdr}
\usepackage{graphics}
\usepackage{psfrag}
\usepackage{fullpage}
\usepackage{pdfpages}

\usepackage{booktabs} 
\usepackage{mathtools}

\usepackage{color}
\usepackage{bbm}
\usepackage{caption}
\captionsetup{font=footnotesize}
\usepackage{sidecap}
\sidecaptionvpos{figure}{c}

\usepackage{url}
\usepackage[colorlinks=True,linkcolor=magenta,citecolor=blue,urlcolor=blue,pagebackref=true,backref=true]
{hyperref}
\renewcommand*{\backref}[1]{\ifx#1\relax \else Page #1 \fi}
\renewcommand*{\backrefalt}[4]{%
    \ifcase #1 \footnotesize{(Not cited.)}%
    \or        \footnotesize{(Cited on page~#2.)}%
    \else      \footnotesize{(Cited on pages~#2.)}%
    \fi}

\begin{document}
\global\long\def\sidenote#1{\marginpar{\small\emph{{\color{Medium}#1}}}}%

\global\long\def\se{\hat{\text{se}}}%
\global\long\def\interior{\text{int}}%
\global\long\def\boundary{\text{bd}}%
\global\long\def\ML{\textsf{ML}}%
\global\long\def\GML{\mathsf{GML}}%
\global\long\def\HMM{\mathsf{HMM}}%
\global\long\def\support{\text{supp}}%
\global\long\def\new{\text{*}}%
\global\long\def\stir{\text{Stirl}}%
\global\long\def\mA{\mathcal{A}}%
\global\long\def\mB{\mathcal{B}}%
\global\long\def\mF{\mathcal{F}}%
\global\long\def\mK{\mathcal{K}}%
\global\long\def\mH{\mathcal{H}}%
\global\long\def\mX{\mathcal{X}}%
\global\long\def\mZ{\mathcal{Z}}%
\global\long\def\mS{\mathcal{S}}%
\global\long\def\Ical{\mathcal{I}}%
\global\long\def\mT{\mathcal{T}}%
\global\long\def\Pcal{\mathcal{P}}%
\global\long\def\dist{d}%
\global\long\def\HX{\entro\left(X\right)}%
\global\long\def\entropyX{\HX}%
\global\long\def\HY{\entro\left(Y\right)}%
\global\long\def\entropyY{\HY}%
\global\long\def\HXY{\entro\left(X,Y\right)}%
\global\long\def\entropyXY{\HXY}%
\global\long\def\mutualXY{\mutual\left(X;Y\right)}%
\global\long\def\mutinfoXY{\mutualXY}%
\global\long\def\given{\mid}%
\global\long\def\gv{\given}%
\global\long\def\goto{\rightarrow}%
\global\long\def\asgoto{\stackrel{a.s.}{\longrightarrow}}%
\global\long\def\pgoto{\stackrel{p}{\longrightarrow}}%
\global\long\def\dgoto{\stackrel{d}{\longrightarrow}}%
\global\long\def\lik{\mathcal{L}}%
\global\long\def\logll{\mathit{l}}%
\global\long\def\vectorize#1{\mathbf{#1}}%

\global\long\def\vt#1{\mathbf{#1}}%
\global\long\def\gvt#1{\boldsymbol{#1}}%
\global\long\def\idp{\ \bot\negthickspace\negthickspace\bot\ }%
\global\long\def\cdp{\idp}%
\global\long\def\das{}%
\global\long\def\id{\mathbb{I}}%
\global\long\def\idarg#1#2{\id\left\{  #1,#2\right\}  }%
\global\long\def\iid{\stackrel{\text{iid}}{\sim}}%
\global\long\def\bzero{\vt 0}%
\global\long\def\bone{\mathbf{1}}%
\global\long\def\boldm{\boldsymbol{m}}%
\global\long\def\bff{\vt f}%
\global\long\def\bx{\boldsymbol{x}}%

\global\long\def\bd{\boldsymbol{d}}%
\global\long\def\bl{\boldsymbol{l}}%
\global\long\def\bu{\boldsymbol{u}}%
\global\long\def\bo{\boldsymbol{o}}%
\global\long\def\bh{\boldsymbol{h}}%
\global\long\def\bs{\boldsymbol{s}}%
\global\long\def\bz{\boldsymbol{z}}%
\global\long\def\xnew{y}%
\global\long\def\bxnew{\boldsymbol{y}}%
\global\long\def\bX{\boldsymbol{X}}%
\global\long\def\tbx{\tilde{\bx}}%
\global\long\def\by{\boldsymbol{y}}%
\global\long\def\bY{\boldsymbol{Y}}%
\global\long\def\bZ{\boldsymbol{Z}}%
\global\long\def\bU{\boldsymbol{U}}%
\global\long\def\bv{\boldsymbol{v}}%
\global\long\def\bn{\boldsymbol{n}}%
\global\long\def\bV{\boldsymbol{V}}%
\global\long\def\bI{\boldsymbol{I}}%
\global\long\def\bw{\vt w}%
\global\long\def\balpha{\gvt{\alpha}}%
\global\long\def\bbeta{\gvt{\beta}}%
\global\long\def\bmu{\gvt{\mu}}%
\global\long\def\btheta{\boldsymbol{\theta}}%
\global\long\def\blambda{\boldsymbol{\lambda}}%
\global\long\def\bgamma{\boldsymbol{\gamma}}%
\global\long\def\bpsi{\boldsymbol{\psi}}%
\global\long\def\bphi{\boldsymbol{\phi}}%
\global\long\def\bpi{\boldsymbol{\pi}}%
\global\long\def\bomega{\boldsymbol{\omega}}%
\global\long\def\bepsilon{\boldsymbol{\epsilon}}%
\global\long\def\btau{\boldsymbol{\tau}}%
\global\long\def\bxi{\boldsymbol{\xi}}%
\global\long\def\realset{\mathbb{R}}%
\global\long\def\realn{\realset^{n}}%
\global\long\def\integerset{\mathbb{Z}}%
\global\long\def\natset{\integerset}%
\global\long\def\integer{\integerset}%

\global\long\def\natn{\natset^{n}}%
\global\long\def\rational{\mathbb{Q}}%
\global\long\def\rationaln{\rational^{n}}%
\global\long\def\complexset{\mathbb{C}}%
\global\long\def\comp{\complexset}%

\global\long\def\compl#1{#1^{\text{c}}}%
\global\long\def\and{\cap}%
\global\long\def\compn{\comp^{n}}%
\global\long\def\comb#1#2{\left({#1\atop #2}\right) }%
\global\long\def\nchoosek#1#2{\left({#1\atop #2}\right)}%
\global\long\def\param{\vt w}%
\global\long\def\Param{\Theta}%
\global\long\def\meanparam{\gvt{\mu}}%
\global\long\def\Meanparam{\mathcal{M}}%
\global\long\def\meanmap{\mathbf{m}}%
\global\long\def\logpart{A}%
\global\long\def\simplex{\Delta}%
\global\long\def\simplexn{\simplex^{n}}%
\global\long\def\dirproc{\text{DP}}%
\global\long\def\ggproc{\text{GG}}%
\global\long\def\DP{\text{DP}}%
\global\long\def\ndp{\text{nDP}}%
\global\long\def\hdp{\text{HDP}}%
\global\long\def\gempdf{\text{GEM}}%
\global\long\def\rfs{\text{RFS}}%
\global\long\def\bernrfs{\text{BernoulliRFS}}%
\global\long\def\poissrfs{\text{PoissonRFS}}%
\global\long\def\grad{\gradient}%
\global\long\def\gradient{\nabla}%
\global\long\def\partdev#1#2{\partialdev{#1}{#2}}%
\global\long\def\partialdev#1#2{\frac{\partial#1}{\partial#2}}%
\global\long\def\partddev#1#2{\partialdevdev{#1}{#2}}%
\global\long\def\partialdevdev#1#2{\frac{\partial^{2}#1}{\partial#2\partial#2^{\top}}}%
\global\long\def\closure{\text{cl}}%
\global\long\def\cpr#1#2{\Pr\left(#1\ |\ #2\right)}%
\global\long\def\var{\text{Var}}%
\global\long\def\Var#1{\text{Var}\left[#1\right]}%
\global\long\def\cov{\text{Cov}}%
\global\long\def\Cov#1{\cov\left[ #1 \right]}%
\global\long\def\COV#1#2{\underset{#2}{\cov}\left[ #1 \right]}%
\global\long\def\corr{\text{Corr}}%
\global\long\def\sst{\text{T}}%
\global\long\def\SST{\sst}%
\global\long\def\ess{\mathbb{E}}%

\global\long\def\Ess#1{\ess\left[#1\right]}%
\newcommandx\ESS[2][usedefault, addprefix=\global, 1=]{\underset{#2}{\ess}\left[#1\right]}%
\global\long\def\fisher{\mathcal{F}}%

\global\long\def\bfield{\mathcal{B}}%
\global\long\def\borel{\mathcal{B}}%
\global\long\def\bernpdf{\text{Bernoulli}}%
\global\long\def\betapdf{\text{Beta}}%
\global\long\def\dirpdf{\text{Dir}}%
\global\long\def\gammapdf{\text{Gamma}}%
\global\long\def\gaussden#1#2{\text{Normal}\left(#1, #2 \right) }%
\global\long\def\gauss{\mathbf{N}}%
\global\long\def\gausspdf#1#2#3{\text{Normal}\left( #1 \lcabra{#2, #3}\right) }%
\global\long\def\multpdf{\text{Mult}}%
\global\long\def\poiss{\text{Pois}}%
\global\long\def\poissonpdf{\text{Poisson}}%
\global\long\def\pgpdf{\text{PG}}%
\global\long\def\wshpdf{\text{Wish}}%
\global\long\def\iwshpdf{\text{InvWish}}%
\global\long\def\nwpdf{\text{NW}}%
\global\long\def\niwpdf{\text{NIW}}%
\global\long\def\studentpdf{\text{Student}}%
\global\long\def\unipdf{\text{Uni}}%
\global\long\def\transp#1{\transpose{#1}}%
\global\long\def\transpose#1{#1^{\mathsf{T}}}%
\global\long\def\mgt{\succ}%
\global\long\def\mge{\succeq}%
\global\long\def\idenmat{\mathbf{I}}%
\global\long\def\trace{\mathrm{tr}}%
\global\long\def\argmax#1{\underset{_{#1}}{\text{argmax}} }%
\global\long\def\argmin#1{\underset{_{#1}}{\text{argmin}\ } }%
\global\long\def\diag{\text{diag}}%
\global\long\def\norm{}%
\global\long\def\spn{\text{span}}%
\global\long\def\vtspace{\mathcal{V}}%
\global\long\def\field{\mathcal{F}}%
\global\long\def\ffield{\mathcal{F}}%
\global\long\def\inner#1#2{\left\langle #1,#2\right\rangle }%
\global\long\def\iprod#1#2{\inner{#1}{#2}}%
\global\long\def\dprod#1#2{#1 \cdot#2}%
\global\long\def\norm#1{\left\Vert #1\right\Vert }%
\global\long\def\entro{\mathbb{H}}%
\global\long\def\entropy{\mathbb{H}}%
\global\long\def\Entro#1{\entro\left[#1\right]}%
\global\long\def\Entropy#1{\Entro{#1}}%
\global\long\def\mutinfo{\mathbb{I}}%
\global\long\def\relH{\mathit{D}}%
\global\long\def\reldiv#1#2{\relH\left(#1||#2\right)}%
\global\long\def\KL{KL}%
\global\long\def\KLdiv#1#2{\KL\left(#1\parallel#2\right)}%
\global\long\def\KLdivergence#1#2{\KL\left(#1\ \parallel\ #2\right)}%
\global\long\def\crossH{\mathcal{C}}%
\global\long\def\crossentropy{\mathcal{C}}%
\global\long\def\crossHxy#1#2{\crossentropy\left(#1\parallel#2\right)}%
\global\long\def\breg{\text{BD}}%
\global\long\def\lcabra#1{\left|#1\right.}%
\global\long\def\lbra#1{\lcabra{#1}}%
\global\long\def\rcabra#1{\left.#1\right|}%
\global\long\def\rbra#1{\rcabra{#1}}%

\begin{center}

{\bf{\LARGE{On Label Shift in Domain Adaptation \\
via Wasserstein Distance}}}
  
\vspace*{.2in}
{\large{
\begin{tabular}{cccc}
Trung Le$^{\ddagger}$ &  Dat Do$^{\mathsection}$  & Tuan Nguyen$^{\ddagger}$ & Huy Nguyen$^{\diamond}$
\end{tabular}
\begin{tabular}{ccc}
Hung Bui$^{\diamond}$ & Nhat Ho$^{\dagger}$ & Dinh Phung$^{\ddagger}$
\end{tabular}
}}

\vspace*{.2in}

\begin{tabular}{c}
Monash University$^\ddagger$; VinAI Research$^\diamond$; \vspace*{-2mm}\\ University of Michigan, Ann Arbor$^{\mathsection}$; University of Texas, Austin$^{\dagger}$\\
\end{tabular}


\vspace*{.2in}

\begin{abstract}
We study the label shift problem between the source and target domains
in general domain adaptation (DA) settings. We consider transformations
transporting the target to source domains, which enable us to align
the source and target examples. Through those transformations, we
define the label shift between two domains via optimal transport and
develop theory to investigate the properties of DA under various DA
settings (e.g., closed-set, partial-set, open-set, and universal settings).
Inspired from the developed theory, we propose\emph{ }\textbf{\emph{L}}\emph{abel
and }\textbf{\emph{D}}\emph{ata Shift }\textbf{\emph{R}}\emph{eduction
via }\textbf{\emph{O}}\emph{ptimal }\textbf{\emph{T}}\emph{ransport}
(LDROT) which can mitigate the data and label shifts simultaneously.
Finally, we conduct comprehensive experiments to verify our theoretical
findings and compare LDROT with state-of-the-art baselines. \vspace{-2mm}
\end{abstract}
\end{center}
\section{Introduction}

The remarkable success of deep learning can be largely attributed
to computational power advancement and large-scale annotated datasets.
However, in many real-world applications such as medicine and autonomous
driving, labeling a sufficient amount of high-quality data to train
accurate deep models is often prohibitively labor-expensive, error-prone,
and time-consuming. Domain adaptation (DA) or transfer learning has
emerged as a vital solution for this issue by transferring knowledge
from a label-rich domain (a.k.a. source domain) to a label-scarce
domain (a.k.a. target domain). Along with practical DA methods \cite{Ganin2015,TzengHDS15,long2015,shu2018a,french2018selfensembling}
which have achieved impressive performance on real-world datasets,
the theoretical results \cite{Mansour2009,Ben-David:2010,redko2017theoretical,zhang19_theory,cortes2019adaptation}
are abundant to provide rigorous and insightful understanding of various
aspects of transfer learning.

For domain adaptation, the source domain consists of the data distribution
$\mathbb{P}^{S}$ with the density $p^{S}$, and the unknown ground-truth
labeling function $f^{S}$ assigning label $y$ to source data $x$,
whilst these are $\mathbb{P}^{T}$, $p^{T}$, and $f^{T}$ for the
target domain, respectively. Moreover, while the data shift can be
characterized as a divergence between $\mathbb{P}^{S}$ and $\mathbb{P}^{T}$
\cite{Mansour2009,Ben-David:2010,redko2017theoretical,zhang19_theory,cortes2019adaptation},
the label shift in these works is commonly characterized as $\mathbb{E}_{\mathbb{P}^{T}}\left[\left|f^{S}\left(\bx\right)-f^{T}\left(\bx\right)\right|\right]$
or $\mathbb{E}_{\mathbb{P}^{S}}\left[\left|f^{S}\left(\bx\right)-f^{T}\left(\bx\right)\right|\right]$
in which the binary classification with deterministic labeling functions
$f^{S}\left(\cdot\right),f^{T}\left(\cdot\right)\in\left\{ 0,1\right\} $
was examined. Additionally, although this label shift term has occurred
in the theoretical analysis of \cite{Mansour2009,Ben-David:2010,redko2017theoretical,zhang19_theory,cortes2019adaptation},
it is restricted in considering the shift between $f^{S}\left(\bx\right)$
and $f^{T}\left(\bx\right)$ at the same data $\bx$, which ignores
the data shift between $\mathbb{P}^{S}$ and $\mathbb{P}^{T}$. This
limitation is illustrated in Figure \ref{fig:motivation_LS}. In particular,
for a white/square point $\bx$ drawn from the target domain as in
$\mathbb{E}_{\mathbb{P}^{T}}\left[\left|f^{S}\left(\bx\right)-f^{T}\left(\bx\right)\right|\right]$,
the source labeling function $f^{S}$ cannot give reasonable prediction
probabilities for $\bx$, hence leading to inaccurate $\left|f^{S}\left(\bx\right)-f^{T}\left(\bx\right)\right|$. 

Label shift has also been examined in an anti-causal setting \cite{pmlr-v80-lipton18a,unified_labelshift_NEURIPS2020},
wherein an intervention on \emph{$p(y)$} induces the shift, but the
process generating $\bx$ given $y$ is fixed, i.e., $p^{S}\left(\bx\mid y\right)=p^{T}\left(\bx\mid y\right)$.
Although this setting is useful in some specific scenarios (e.g.,
a diagnostic problem in which diseases cause symptoms), it is not
sufficiently powerful to cope with a general DA setting. Particularly,
in an anti-causal setting, the source and target data distributions
(i.e., $p^{S}\left(\bx\right)$ and $p^{T}\left(\bx\right)$) are
just simply two different mixtures of the class conditional distributions
$p^{S}\left(\bx\mid y\right)=p^{T}\left(\bx\mid y\right)$, hence
sharing the same support set. This is certainly far from a general
DA setting in which both \emph{data shift}: $p^{S}\left(\bx\right)\neq p^{T}\left(\bx\right)$
with arbitrarily separated support sets and \emph{non-covariate shift}:
$p^{S}\left(y\mid\bx\right)\neq p^{T}\left(y\mid\bx\right)$ appear. 

\textbf{Contribution.} In this paper, we study the label shift for
a general domain adaptation setting in which we have both \emph{data
shift}: $p^{S}\left(\bx\right)\neq p^{T}\left(\bx\right)$ with arbitrarily
separated support sets and \emph{non-covariate shift}: $p^{S}\left(y\mid\bx\right)\neq p^{T}\left(y\mid\bx\right)$.
More specifically, our developed label shift is applicable to a general
DA setting with a data shift between two domains and two totally
different labeling functions (i.e., we cannot use $f^{S}$ to predict
accurately target examples and vice versa). To define the label shift
between two given domains, we utilize transformation $L$ to transport
the target to the source data distributions (i.e., $L\#\mathbb{P}^{T}=\mathbb{P}^{S}$).
This transformation allows us to align the data of two domains. Subsequently,
the label shift between two domains is defined as the infimum of the
label shift induced by such a transformation with respect to all feasible
transformations. This viewpoint of label shift has a connection to
optimal transport~\cite{santambrogio2015optimal,villani2008optimal,peyre2019computational},
which enables us to develop theory to quantify the label shift for
various DA settings, e.g., anti-causal, closed-set, partial-set, open-set,
and universal settings. Overall, our contributions can be summarized
as follows: 

\textbf{1.} We characterize the label shift for a general DA setting
via optimal transport. From that, we develop a theory to estimate
the label shift for various DA settings and study the trade-off of
learning domain-invariant representations and WS label shift. 

\textbf{2.} Inspired from the theoretical development, we propose
\textbf{\emph{L}}\emph{abel and }\textbf{\emph{D}}\emph{ata Shift
}\textbf{\emph{R}}\emph{eductions via }\textbf{\emph{O}}\emph{ptimal
}\textbf{\emph{T}}\emph{ransport} (LDROT) which aims to mitigate both
data and label shifts. We conduct comprehensive experiments to verify
our theoretical findings and compare the proposed LDROT with the baselines
to demonstrate the favorable performance of our method. 

\textbf{Related works.} Several attempts have been proposed to characterize
the gap between general losses of source and target domains in DA,
notably \cite{Mansour2009,Ben-David:2010,redko2017theoretical,zhang19_theory,cortes2019adaptation}.
\cite{ben2012b,ben2012c,zhang19_theory} study the impossibility
theorems for DA, attempting to characterize the conditions under which
it is nearly impossible to perform transferability between domains.
PAC-Bayesian view on DA using weighted majority vote learning has
been rigorously studied in \cite{ger13,ger16}. Meanwhile, \cite{zhao2019learning,pmlr-v89-johansson19a}
interestingly indicate the insufficiency of learning domain-invariant
representation for successful adaptation. Specifically, \cite{zhao2019learning}
points out the degradation in target predictive performance if forcing
domain invariant representations to be learned while two marginal
label distributions of the source and target domains are overly divergent.
\cite{pmlr-v89-johansson19a} analyzes the information loss of non-invertible
transformations and proposes a generalization upper bound that directly
takes it into account. \cite{pmlr-v139-le21a} employed a transformation
to align two domains and developed theories based on this assumption.
Moreover, label shift has been examined for the anti-causal setting
\cite{pmlr-v80-lipton18a,unified_labelshift_NEURIPS2020}, which
seems not sufficiently realistic for a general DA setting. Optimal
transport theory has been theoretically leveraged with domain adaptation
\cite{courty2017joint}. We compare our proposed LDROT to DeepJDOT
\cite{damodaran2018deepjdot} (a deep DA approach based on the theory
of \cite{courty2017joint}), and other OT-based DDA approaches, including
SWD \cite{chenyu2019swd}, DASPOT \cite{yujia2019onscalable}, ETD
\cite{li2020enhanceOT}, RWOT \cite{xu2020reliable}. Finally, in
\cite{tachet2020domain} , a generator $g$ is said to produce generalized
label shift (GLS) representations if it transports source class conditional
distributions to corresponding target ones. Further theories were
developed to indicate that GLR representations are satisfied if we
enforce clustering structure assumption assisting us in training a
perfect classifier. Evidently, our work which focuses on how to quantify
the label shift between two different domains taking into account
the inherent data shift via optimal transport theory is totally different
form that work in terms of motivation and developed theory.

\section{Label Shift with Wasserstein Distance}

\subsection{Preliminaries }

\textbf{Notation.} For a positive integer $n$ and a real number $p\in[1,\infty)$,
$[n]$ indicates the set $\{1,2,\ldots,n\}$ while $\|x\|_{p}$ denotes
the $l_{p}$-norm of a vector $x\in\mathbb{R}^{n}$. Let $\mathcal{Y}^{S}$
and $\mathcal{Y}^{T}$ be the label sets of the source and target
domains that have $M^{S}:=\left|\mathcal{Y}^{S}\right|$ and $M^{T}:=\left|\mathcal{Y}^{T}\right|$
elements, respectively. Meanwhile, $\mathcal{Y}=\mathcal{Y}^{S}\cup\mathcal{Y}^{T}$
stands for the label set of both domains which has the cardinality
of $M:=\left|\mathcal{Y}\right|$. Subsequently, we denote $\mathcal{Y}_{\Delta}$,
$\mathcal{Y}_{\Delta}^{S}$, and $\mathcal{Y}_{\Delta}^{T}$ as the
simplices corresponding to $\mathcal{Y},\mathcal{Y}^{S}$, and $\mathcal{Y}^{T}$
respectively. Finally, let $f^{S}\left(\cdot\right)\in\mathcal{Y}_{\simplex}$
and $f^{T}\left(\cdot\right)\in\mathcal{Y}_{\simplex}$ be the labeling
functions of the source and target domains, respectively, by filling
zeros for the missing labels.

We now examine a general supervised learning setting. Consider a hypothesis
$h\left(\cdot\right)\in\mathcal{Y}_{\triangle}$ in a hypothesis class
$\mathcal{H}$ and a labeling function $f\left(\cdot\right)\in\mathcal{Y}_{\triangle}$
where $\mathcal{Y}_{\triangle}:=\left\{ \bpi\in\mathbb{R}^{M}:\norm{\bpi}_{1}=1\,\text{and \ensuremath{\bpi\geq\bzero}}\right\} $.
Let $d_{Y}$ be a metric over $\mathcal{Y}_{\triangle}$, we further
define a general loss of the hypothesis $h$ with respect to the labeling
function $f$ and the data distribution $\mathbb{P}$ as: $\mathcal{L}\left(h,f,\mathbb{P}\right):=\int d_{Y}\left(h\left(\bx\right),f\left(\bx\right)\right)\mathrm{d}\mathbb{P}\left(\bx\right)$.

Next, we consider a domain adaptation setting in which we have source
space $\mathcal{X}^{S}$ endowed with a distribution $\mathbb{P}^{S}$
and the density function $p^{S}\left(\bx\right)$, and a target space
$\mathcal{X}^{T}$ endowed with a distribution $\mathbb{P}^{T}$ and
the density function $p^{T}\left(\bx\right)$. We examine various
DA settings based on the labels of source and target domains including
(1) \emph{closed-set DA}: $\mathcal{Y}^{S}=\mathcal{Y}^{T}$, (2)
\emph{open-set DA}: $\mathcal{Y}^{S}\subset\mathcal{Y}^{T}$, (3)
\emph{partial-set DA}: $\mathcal{Y}^{T}\subset\mathcal{Y}^{S}$, and
(4) \emph{universal DA}: $\mathcal{Y}^{S}\subsetneq\mathcal{Y}^{T}$
and $\mathcal{Y}^{T}\subsetneq\mathcal{Y}^{S}$. 
\begin{figure}[!t]
\centering{}
\includegraphics[width=0.8\textwidth]{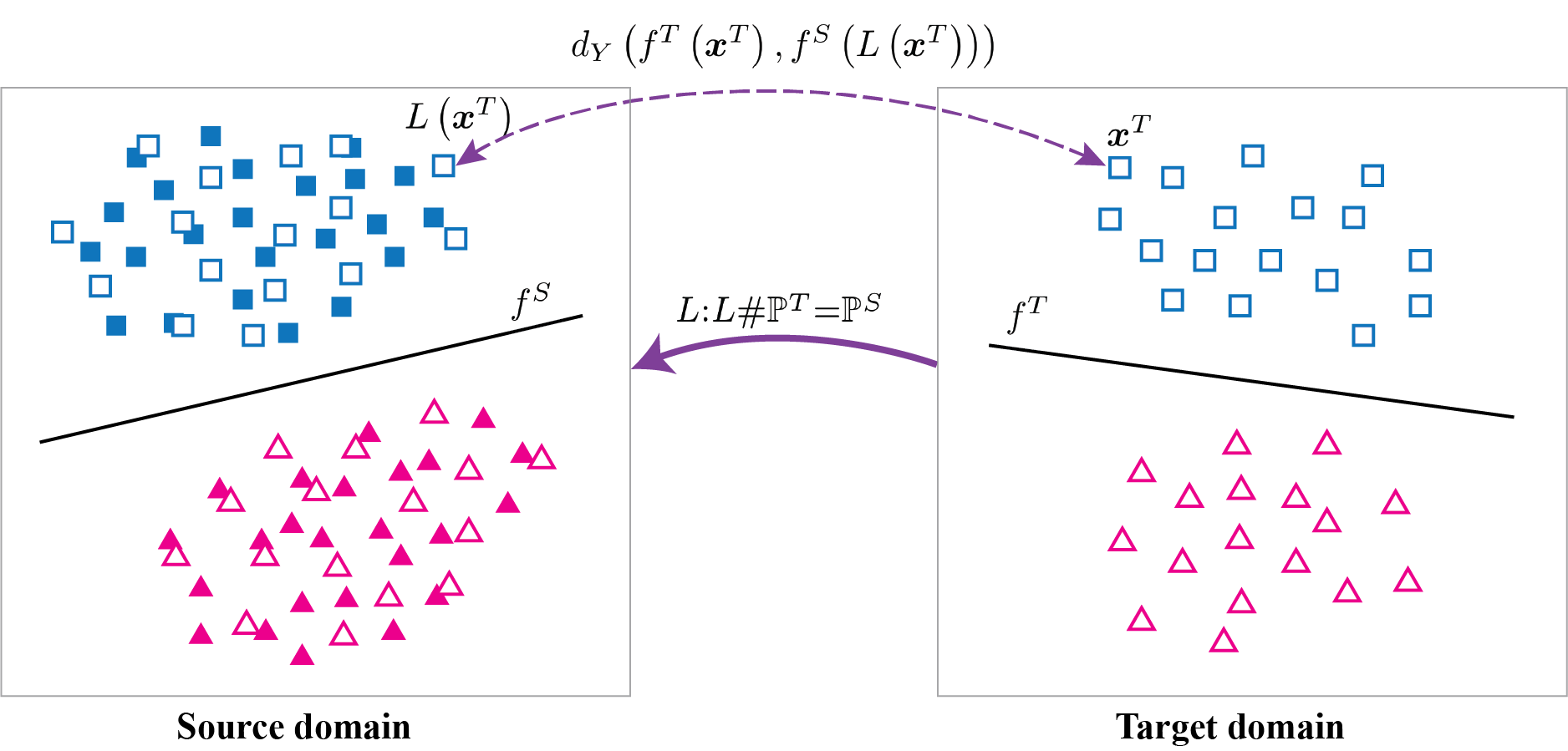}
\caption{An illustration of our label shift definition. We employ a transformation
$L:L\#\mathbb{P}^{T}=\mathbb{P}^{S}$ to align two domains. The white/square
points $L\left(\protect\bx^{T}\right)$ on the source domain correspond
to the white/square points $\protect\bx^{T}$ on the target domain.
We measure $d_{Y}\left(d_{Y}\left(f^{T}\left(\protect\bx^{T}\right),f^{S}\left(L\left(\protect\bx^{T}\right)\right)\right)\right)$
and define the label shift w.r.t. $L$ as $LS\left(S,T;L\right):=\mathbb{E}_{\mathbb{P}^{T}}\left[d_{Y}\left(f^{T}\left(\protect\bx^{T}\right),f^{S}\left(L\left(\protect\bx^{T}\right)\right)\right)\right]$.
Finally, we take infimum over all valid $L$ to define the label shift
between two domains.\label{fig:motivation_LS}}
\end{figure}

\subsection{Background on label shift}

Together with data shift, the study of label shift is important for
a general DA problem. However, due to the occurrence of data shift,
it is challenging to formulate label shift in a general DA setting.
Recent works \cite{pmlr-v80-lipton18a,unified_labelshift_NEURIPS2020}
have studied label shift for the anti-causal setting in which an intervention
on \emph{$p(y)$} induces the shift, but the process generating $\bx$
given $y$ is fixed, i.e., $p^{S}\left(\bx\mid y\right)=p^{T}\left(\bx\mid y\right)$.
In spite of being useful in some specific cases, the anti-causal setting
is restricted and cannot represent data shift broadly because the
source data distribution $p^{S}\left(\bx\right)$ and the target data
distribution $p^{T}\left(\bx\right)$ are simply just two different
mixtures of identical class conditional distributions. Furthermore,
the label shift framework from these works is non-trivial to generalize
to all settings of DA.

In this paper, we address the issues of the previous works by defining
a novel label shift framework via Wasserstein (WS) distance for a
general DA setting that takes into account the data shift between
two domains. We then develop a theory for our proposed label shift
based on useful properties of WS distance, such as its horizontal
view, numeric stability, and continuity \cite{pmlr-v70-arjovsky17a}.
We refer readers to \cite{santambrogio2015optimal,villani2008optimal,peyre2019computational}
for a comprehensive knowledge body of optimal transport theory and
WS distance, and Appendix \ref{sec:Notations} for necessary backgrounds
of WS for this work. 

\subsection{Label Shift via Wasserstein Distance \label{sec:label_shift_optimal_transport}
}

To facilitate our ensuing discussion, we assume that the source (S)
and target (T) distributions $\mathbb{P}^{S}$ and $\mathbb{P}^{T}$
are atomless distributions on Polish spaces. Therefore, there exists
a transformation $L:\mathcal{X}^{T}\goto\mathcal{X}^{S}$ such that
$L\#\mathbb{P}^{T}=\mathbb{P}^{S}$ \cite{villani2008optimal}. Given
that mapping $L$, a data example $\bx^{T}\sim\mathbb{P}^{T}$ with
the ground-truth prediction probability $f^{T}\left(\bx^{T}\right)$
corresponds to another data example $\bx^{S}=L\left(\bx^{T}\right)\sim\mathbb{P}^{S}$
with the ground-truth prediction probability $f^{S}\left(\bx^{S}\right)$.
Hence, it induces a label mismatch loss 
\[
d_{Y}\left(f^{T}\left(\bx^{T}\right),f^{S}\left(\bx^{S}\right)\right)=d_{Y}\left(f^{T}\left(\bx^{T}\right),f^{S}\left(L\left(\bx^{T}\right)\right)\right),
\]
where $d_{Y}$ is a given metric over $\mathcal{Y}_{\Delta}$. Based
on that concept, the label shift between the source and target domains
induced by the transformation $L$ can be defined as 
\begin{align}
LS\left(S,T;L\right):=\mathbb{E}_{\mathbb{P}^{T}}\left[d_{Y}\left(f^{T}\left(\bx^{T}\right),f^{S}\left(L\left(\bx^{T}\right)\right)\right)\right].
\end{align}
By finding the optimal mapping $L$, the label shift between two domains
is defined as follows.
\begin{defn}
\label{def:label_shift} Let $d_{Y}$ be a metric over the simplex
$\mathcal{Y}_{\Delta}$. The label shift between the source and the
target domains is defined as the infimum of the label shift induced
by all valid transformations $L$: {\footnotesize{}
\begin{equation}
LS\left(S,T\right):=\inf_{L:L\#\mathbb{P}^{T}=\mathbb{P}^{S}}LS\left(S,T;L\right)=\inf_{L:L\#\mathbb{P}^{T}=\mathbb{P}^{S}}\mathbb{E}_{\mathbb{P}^{T}}\left[d_{Y}\left(f^{T}\left(\bx^{T}\right),f^{S}\left(L\left(\bx^{T}\right)\right)\right)\right].\label{eq:ls_def}
\end{equation}
} We give an illustration for Definition~\ref{def:label_shift} in
Figure \ref{fig:motivation_LS}. The label shift in Eq.~(\ref{eq:ls_def})
suggests finding the optimal transformation $L^{*}$ to optimally
align the source and target domains with a minimal label mismatch.
\end{defn}

\textbf{Properties of label shift via Wasserstein distance:} To show
the connection between the aforementioned label shift and optimal
transport, we introduce two ways of calculating the label shift via
Wasserstein distance.
\begin{prop}
\label{thm:label_shift} (i) Denote by $\mathbb{P}_{f^{S}}^{S}$ the
joint distribution of $\left(\bx,f^{S}\left(\bx\right)\right)$, where
$\bx\sim\mathbb{P}^{S}$, and $\mathbb{P}_{f^{T}}^{T}$ the joint
distribution of $\left(\bx,f^{T}\left(\bx\right)\right)$, where $\bx\sim\mathbb{P}^{T}$.
Then, we have: $LS\left(S,T\right)=\mathcal{W}_{d_{Y}}\left(\mathbb{P}_{f^{S}}^{S},\mathbb{P}_{f^{T}}^{T}\right).$

 (ii) Let $\mathbb{P}_{f^{S}}$ and $\mathbb{P}_{f^{T}}$ be the push-forward
measures of $\mathbb{P}^{S}$ and $\mathbb{P}^{T}$ via $f^{S}$ and
$f^{T}$ respectively, i.e., $\mathbb{P}_{f^{S}}=f^{S}\#\mathbb{P}^{S}$
and $\mathbb{P}_{f^{T}}=f^{T}\#\mathbb{P}^{T}$. Then, we have: $LS\left(S,T\right)=\mathcal{W}_{d_{Y}}\left(\mathbb{P}_{f^{S}},\mathbb{P}_{f^{T}}\right).$
\end{prop}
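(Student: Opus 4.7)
The plan is to prove both parts by first recasting $LS(S,T)$ as a Kantorovich optimal transport cost between $\mathbb{P}^S$ and $\mathbb{P}^T$ with the pulled-back cost $c(\bx^S,\bx^T) := d_Y\!\left(f^S(\bx^S),f^T(\bx^T)\right)$, and then independently matching each right-hand side to that quantity. The underlying observation is that a Monge map $L$ with $L\#\mathbb{P}^T=\mathbb{P}^S$ is just a particular coupling $(\mathrm{id}\times L)\#\mathbb{P}^T \in \Pi(\mathbb{P}^T,\mathbb{P}^S)$, and by the classical Monge--Kantorovich equivalence of Pratelli on Polish spaces with atomless source (the hypotheses already stated in the preamble to the proposition), the infimum over such maps equals the infimum over couplings. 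Hence $LS(S,T)=\mathcal{W}_{c}\!\left(\mathbb{P}^S,\mathbb{P}^T\right)$.

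For part (i), I would exploit the fact that $\mathbb{P}_{f^S}^S$ is supported on the graph $\{(\bx,f^S(\bx))\}\subset \mathcal{X}^S\times \mathcal{Y}_\Delta$, and similarly for $\mathbb{P}_{f^T}^T$. This gives a canonical bijection between couplings of $(\mathbb{P}_{f^S}^S,\mathbb{P}_{f^T}^T)$ and couplings of $(\mathbb{P}^S,\mathbb{P}^T)$: on one side, the $\mathcal{X}$-marginal of a coupling $\gamma\in\Pi(\mathbb{P}_{f^S}^S,\mathbb{P}_{f^T}^T)$ yields a coupling $\tilde{\pi}\in\Pi(\mathbb{P}^S,\mathbb{P}^T)$; on the other side, the push-forward under $(\bx^S,\bx^T)\mapsto \big((\bx^S,f^S(\bx^S)),(\bx^T,f^T(\bx^T))\big)$ sends $\tilde{\pi}$ to a valid $\gamma$. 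Since $\gamma$-almost surely we have $y^S=f^S(\bx^S)$ and $y^T=f^T(\bx^T)$, the cost $d_Y(y^S,y^T)$ equals $c(\bx^S,\bx^T)$ along this correspondence, so the two Kantorovich infima coincide and part (i) follows.

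For part (ii), one inequality is immediate: for any $\tilde{\pi}\in\Pi(\mathbb{P}^S,\mathbb{P}^T)$, the push-forward $(f^S\otimes f^T)\#\tilde{\pi}$ is a coupling of $\mathbb{P}_{f^S}$ and $\mathbb{P}_{f^T}$ with the same integrated $d_Y$ cost, giving $\mathcal{W}_{d_Y}(\mathbb{P}_{f^S},\mathbb{P}_{f^T}) \leq \mathcal{W}_{c}(\mathbb{P}^S,\mathbb{P}^T) = LS(S,T)$. For the reverse inequality I would use a gluing argument based on disintegration. Using the Polish-space disintegration theorem, write $\mathbb{P}^S=\int \mathbb{P}^S_{y}\, d\mathbb{P}_{f^S}(y)$ and $\mathbb{P}^T=\int \mathbb{P}^T_{y}\, d\mathbb{P}_{f^T}(y)$, where $\mathbb{P}^S_{y}$ (resp.\ $\mathbb{P}^T_{y}$) is the conditional law of $\bx$ given $f^S(\bx)=y$ (resp.\ $f^T(\bx)=y$). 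Then for any $\gamma'\in\Pi(\mathbb{P}_{f^S},\mathbb{P}_{f^T})$, define
\begin{equation*}
\tilde{\pi} := \int \mathbb{P}^S_{y^S}\otimes \mathbb{P}^T_{y^T}\, d\gamma'(y^S,y^T),
\end{equation*}
which is a well-defined coupling in $\Pi(\mathbb{P}^S,\mathbb{P}^T)$ whose integrated cost equals $\int d_Y(y^S,y^T)\, d\gamma'$, proving $\mathcal{W}_{c}(\mathbb{P}^S,\mathbb{P}^T)\leq \mathcal{W}_{d_Y}(\mathbb{P}_{f^S},\mathbb{P}_{f^T})$.

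The main obstacle will be justifying the gluing construction in part (ii): one has to verify measurability of $y\mapsto \mathbb{P}^S_y$ and $y\mapsto \mathbb{P}^T_y$ and confirm that the integrated measure is a genuine probability coupling with the correct marginals and cost. This is routine given the Polish and atomless assumptions already in place, and is the standard disintegration-plus-independent-coupling trick from optimal transport theory. A secondary, smaller concern is the appeal to the Monge--Kantorovich equivalence in the initial reformulation, but this too is immediate under the stated hypotheses; the measurability of $c$ follows from the assumed measurability of $f^S,f^T$ and of $d_Y$.
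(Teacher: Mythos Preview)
Your proposal is correct and close in spirit to the paper's proof, but the routes differ in presentation. For part (i) the paper stays entirely at the Monge level: it shows directly that Monge maps $H:\mathrm{supp}(\mathbb{P}_{f^T}^T)\to\mathrm{supp}(\mathbb{P}_{f^S}^S)$ correspond to Monge maps $K:\mathcal{X}^T\to\mathcal{X}^S$ via $K(\bx)=H_1(\bx,f^T(\bx))$ and, conversely, $H(\bx,f^T(\bx))=(K(\bx),f^S(K(\bx)))$, so the two Monge infima coincide without ever passing to couplings. You instead invoke the Monge--Kantorovich equivalence once up front and then argue at the coupling level via the graph-support bijection; this is equally valid and arguably cleaner, at the cost of one appeal to Pratelli that the paper's argument avoids. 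For part (ii) the two proofs are essentially identical in content: the paper isolates your disintegration-plus-independent-coupling construction as a separate ``gluing lemma'' (obtained by applying the standard optimal-transport gluing lemma to the triple consisting of $(\mathrm{id},f^S)\#\mathbb{P}^S$, the given coupling $\gamma\in\Gamma(\mathbb{P}_{f^S},\mathbb{P}_{f^T})$, and $(\mathrm{id},f^T)\#\mathbb{P}^T$), whereas you write out the disintegration explicitly. The push-forward direction is handled the same way in both.
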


The results of Proposition~\ref{thm:label_shift} indicate that
we can compute the label shift via the Wasserstein distance on the
simplex. For example, when $d_{Y}(y,y')=\|y-y'\|_{p}^{p}$, the label
shift can be computed via the familiar $\mathcal{W}_{p}$ distance
between $\mathbb{P}_{f^{S}}$ and $\mathbb{P}_{f^{T}}$, i.e., $LS(S,T)=\mathcal{W}_{p}^{p}(\mathbb{P}_{f^{S}},\mathbb{P}_{f^{T}})$.
Note that $\mathcal{W}_{d}\left(\mathbb{P}_{f^{S}}^{S},\mathbb{P}_{f^{T}}^{T}\right)$
with $d=\lambda d_{X}+d_{Y}$ was studied in \cite{courty2017joint}
for proposing a DA method that can mitigate both label and data shifts.
However, the concept label shift was not characterized and defined  explicitly
in that work. Moreover, our motivation and theory development in this
work are different and independent from \cite{courty2017joint}.

In addition, \cite{pmlr-v139-le21a} uses a transformation $L$ with
the aim to reduce the data shift between the source and target domains. The variance of general losses of a source classifier when predicting on the source domain and that of the corresponding target classifier when predicting on the target domain is inspected, which introduces the label shift via this transformation (cf. Theorem 1 in that paper). However, the definition of the label shift is totally dependent on the transformation $L$. It is worth noting that in this work, we do not consider an arbitrary transformation $L$ as in \cite{pmlr-v139-le21a}. Instead, we look in $L:L\#\mathbb{P}^{T}=\mathbb{P}^{S}$ and define the label shift as the infimum of the label shifts induced by \textit{valid} transformations. To give a better understanding of our label shift definition, we now present some bounds for it in general and specific cases.
\begin{prop}
\label{proposition:properties_labelshift} Denote by $p_{Y}^{S}=(p_{Y}^{S}(y))_{y=1}^{M}$
and $p_{Y}^{T}=(p_{Y}^{T}(y))_{y=1}^{M}$ the marginal distributions
of the source and target domain labels, i.e., $p_{Y}^{S}(y)=\int_{\mathcal{X}^{S}}p^{S}(x,y)dx$
and $p_{Y}^{T}(y)=\int_{\mathcal{X}^{T}}p^{T}(x,y)dx$. For $d_{Y}(y,y')=\|y-y'\|_{p}^{p}$
when $p\geq1$, the following holds:

(i) $\mathcal{L}\left(h^{T},f^{T},\mathbb{P}^{T}\right)\leq LS\left(S,T\right)+\mathcal{L}\left(h^{S},f^{S},\mathbb{P}^{S}\right)+\mathcal{W}_{d_{Y}}\left(\mathbb{P}_{h^{S}}^{S},\mathbb{P}_{h^{T}}^{T}\right)+const$
where the constant can be viewed as a reconstruction term: $\sup_{L,K:L\#\mathbb{P}^{T}=\mathbb{P}^{S},K\#\mathbb{P}^{S}=\mathbb{P}^{T}}\mathbb{E}_{\mathbb{P}^{T}}\left[d_{Y}\left(f^{T}\left(K\left(L\left(\bx\right)\right)\right),f^{T}\left(\bx\right)\right)\right]$; 

(ii) $LS(S,T)\geq\|p_{Y}^{S}-p_{Y}^{T}\|_{p}^{p}$;

(iii) In the setting that $\mathbb{P}^{S}$ and $\mathbb{P}^{T}$
are mixtures of well-separated Gaussian distributions, i.e., $p^{a}(\bx)=\sum_{y=1}^{M}p^{a}(y)\mathcal{N}(\bx|\mu_{y}^{a},\Sigma_{y}^{a})$
with $\|\mu_{y}^{a}-\mu_{y'}^{a}\|_{2}\geq D\times\max\{\|\Sigma_{y}^{a}\|_{op}^{1/2},\|\Sigma_{y'}^{a}\|_{op}^{1/2}\}\,\forall\,a\in\{S,T\},y\neq y'$,
in which $\|\cdot\|_{op}$ denotes the operator norm and $D$ is sufficiently
large, we have 
\begin{equation}
\left|LS(S,T)-\mathcal{W}_{p}^{p}(\mathbb{P}_{Y}^{S},\mathbb{P}_{Y}^{T})\right|\leq\epsilon(D),
\end{equation}
where $\epsilon(D)$ is a small constant depending on $D,p_{Y}^{S},p_{Y}^{T},(\Sigma_{y}^{S},\Sigma_{y}^{T})_{y=1}^{M}$,
and it goes to 0 as $D\to\infty$. 
\end{prop}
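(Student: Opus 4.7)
The three parts require separate techniques: iterated triangle inequalities combined with an inf--sup manipulation for (i); Jensen's inequality for (ii); and a Gaussian concentration argument combined with Proposition~\ref{thm:label_shift} for (iii).

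For (i), my plan is to apply the triangle inequality for $d_Y$ three times to the integrand $d_Y(h^T(\bx), f^T(\bx))$ by inserting the intermediate points $h^S(L(\bx))$, $f^S(L(\bx))$, and $f^T(K(L(\bx)))$, where $L$ and $K$ are arbitrary transformations satisfying $L\#\mathbb{P}^T = \mathbb{P}^S$ and $K\#\mathbb{P}^S = \mathbb{P}^T$. Taking $\mathbb{E}_{\mathbb{P}^T}$ on both sides and using the push-forward identity $L\#\mathbb{P}^T = \mathbb{P}^S$ as a change of variable, the right-hand side decomposes into four contributions: (a) $\mathbb{E}_{\mathbb{P}^T}[d_Y(h^T(\bx), h^S(L(\bx)))]$, depending only on $L$; (b) $\mathcal{L}(h^S, f^S, \mathbb{P}^S)$; (c) $\mathbb{E}_{\mathbb{P}^S}[d_Y(f^S(\bx), f^T(K(\bx)))]$, depending only on $K$; and (d) $\mathbb{E}_{\mathbb{P}^T}[d_Y(f^T(K(L(\bx))), f^T(\bx))]$, bounded by the supremum defining the stated constant. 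The infimum over $L$ in (a) equals $\mathcal{W}_{d_Y}(\mathbb{P}_{h^S}^S, \mathbb{P}_{h^T}^T)$ by Proposition~\ref{thm:label_shift}(i) applied to $h^S, h^T$, while the infimum over $K$ in (c) equals $LS(T,S) = LS(S,T)$ by Proposition~\ref{thm:label_shift}(ii) and the symmetry of the Wasserstein distance. Since (a), (c) involve disjoint variables and (d) can be separated via $\inf_{L,K}(x_1(L) + x_2 + x_3(K) + x_4(L,K)) \le \inf_L x_1 + x_2 + \inf_K x_3 + \sup_{L,K} x_4$, the claimed bound follows.

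For (ii), apply Jensen's inequality to the convex function $\|\cdot\|_p^p$ on $\mathbb{R}^M$: for any valid $L$,
\[
\mathbb{E}_{\mathbb{P}^T}\bigl[\|f^T(\bx) - f^S(L(\bx))\|_p^p\bigr] \ge \bigl\|\mathbb{E}_{\mathbb{P}^T}[f^T(\bx) - f^S(L(\bx))]\bigr\|_p^p.
\]
The inner expectation equals $p_Y^T - p_Y^S$ because $\mathbb{E}_{\mathbb{P}^T}[f^T(\bx)] = p_Y^T$ componentwise and $\mathbb{E}_{\mathbb{P}^T}[f^S(L(\bx))] = \mathbb{E}_{\mathbb{P}^S}[f^S(\bx)] = p_Y^S$ by the push-forward identity. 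Since the resulting lower bound is independent of $L$, the infimum defining $LS(S,T)$ preserves it.

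For (iii), my plan is to approximate the push-forward $\mathbb{P}_{f^a} = f^a\#\mathbb{P}^a$ from Proposition~\ref{thm:label_shift}(ii) by the discrete simplex-valued distribution $\sum_{y=1}^M p_Y^a(y)\,\delta_{e_y}$, which realizes $\mathcal{W}_p^p(\mathbb{P}_Y^S, \mathbb{P}_Y^T)$ after identifying labels with their one-hot embeddings in $\mathcal{Y}_\Delta$. Under the well-separation assumption, I would partition $\mathcal{X}^a$ into Voronoi-like regions $A_y$ around each $\mu_y^a$ and establish (a) $|\mathbb{P}^a(A_y) - p_Y^a(y)|$ is exponentially small in $D^2$ via standard Gaussian tail bounds, and (b) for $\bx \in A_y$ outside a thin ambiguous overlap of mass $O(e^{-cD^2})$, the Bayes posterior $f^a(\bx)_{y'} = p^a(y')\,\mathcal{N}(\bx\mid\mu_{y'}^a, \Sigma_{y'}^a)/p^a(\bx)$ satisfies $\|f^a(\bx) - e_y\|_p^p = O(e^{-cD^2})$. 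The coupling that matches $(\bx, f^a(\bx))$ with $e_y$ whenever $\bx \in A_y$ then gives $\mathcal{W}_p^p(\mathbb{P}_{f^a}, \sum_y p_Y^a(y)\,\delta_{e_y}) \le \epsilon_a(D) \to 0$. Combining with the Wasserstein triangle inequality, converted to $\mathcal{W}_p^p$ via $(a+b+c)^p \le 3^{p-1}(a^p + b^p + c^p)$, yields the claim with $\epsilon(D) = O(\epsilon_S(D) + \epsilon_T(D))$. The main obstacle is the quantitative Gaussian analysis: explicitly bounding the posterior ratio in the well-separated regime and tracking how $\epsilon(D)$ depends on the covariance operator norms, the label marginals, and $p$.
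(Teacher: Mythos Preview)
Your proposal is correct and matches the paper's approach in all three parts. Parts (i) and (ii) are identical to the paper's argument (for (ii) the paper applies Jensen coordinate-wise, which is equivalent to your vector form since $\|v\|_p^p=\sum_y|v_y|^p$).

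For (iii) the paper uses the same concentration strategy---balls $B_y=\{x:\|\Sigma_y^{-1/2}(x-\mu_y)\|_2\le\sqrt{D}\}$, a chi-square tail bound $\mathbb{P}(\chi_d^2>D)\le e^{-(D-2d)/4}$, and a posterior-ratio estimate---but handles the passage from $\mathcal{W}_p$ to $\mathcal{W}_p^p$ differently. Your proposed inequality $(a+b+c)^p\le 3^{p-1}(a^p+b^p+c^p)$ applied to $\mathcal{W}_p(\mathbb{P}_{f^S},\mathbb{P}_{f^T})\le \mathcal{W}_p(\mathbb{P}_{f^S},\mathbb{P}_Y^S)+\mathcal{W}_p(\mathbb{P}_Y^S,\mathbb{P}_Y^T)+\mathcal{W}_p(\mathbb{P}_Y^T,\mathbb{P}_{f^T})$ leaves a residual term $(3^{p-1}-1)\,\mathcal{W}_p^p(\mathbb{P}_Y^S,\mathbb{P}_Y^T)$ that does not vanish as $D\to\infty$ when $p>1$. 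The paper instead uses the factorization $|a^p-b^p|\le pM^{p-1}|a-b|$ (valid because all the Wasserstein distances involved are bounded by the simplex diameter $M$) and then applies the ordinary triangle inequality to $|\mathcal{W}_p(\mathbb{P}_{f^S},\mathbb{P}_{f^T})-\mathcal{W}_p(\mathbb{P}_Y^S,\mathbb{P}_Y^T)|$, which gives the clean two-sided bound directly. This is the only point where your plan needs adjustment.
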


A few comments on Proposition~\ref{proposition:properties_labelshift}
are in order. The\emph{ inequality in (i) }bounds the target loss
by the source loss and the label shift. Though this inequality has
the same form as those in \cite{Mansour2009,Ben-David:2010,redko2017theoretical,zhang19_theory,cortes2019adaptation},
the label shift in our inequality is more reasonably expressed. The
\emph{inequality in (ii)} reads that the \emph{marginal label shift}
$\|p_{Y}^{S}-p_{Y}^{T}\|_{p}^{p}$ is a lower bound of our label shift.
Therefore, the label shift induced by the best transformation $L^{*}$
can not be less than this quantity. A direct consequence is that $LS(S,T)=0$
implies $p^{S}(y)=p^{T}(y)$, for all $y\in[M]$ (no label shift).
Finally, the \emph{inequality in (iii)} shows that when the classes
are well-separated, the label shift will almost achieve the lower
bound in the first inequality, which implies its tightness. The key
step in the proof is proving that $\mathbb{P}_{f^{S}}$ and $\mathbb{P}_{f^{T}}$
will be concentrated around the vertices and it is also provable for
sub-Gaussian distributions with some extra work. The bound gives a
simple way to estimate the label shift in this scenario: instead of
measuring the Wasserstein distance $\mathcal{W}(\mathbb{P}_{f^{S}},\mathbb{P}_{f^{T}})$
on the simplex $\mathcal{Y}_{\Delta}$, we only need to measure the
Wasserstein distance between the vertices equipped with the masses
$p_{Y}^{S}$ and $p_{Y}^{T}$. The first experiment in Section \ref{subsec:label_shift_estimate}
also supports this finding.

Our label shift formulation can also serve as a tool to elaborate
other aspects of DA, as we will see below.

\textbf{Minimizing data shift while ignoring label shift can hurt
the prediction on test set:} Consider two classifiers on the source
and target domains $h^{S}=h\circ g^{S}$ and $h^{T}=h\circ g^{T}$
where $g^{S}:\mathcal{X}^{S}\goto\mathcal{Z}$, $g^{T}:\mathcal{X}^{T}\goto\mathcal{Z}$
are the source and target feature extractors, and $h:\mathcal{Z}\goto\mathcal{Y}_{\simplex}$
with $h\in\mathcal{H}$. We define a new metric $d_{Z}$ with respect
to the family $\mathcal{H}$ as follows: 
\[
d_{Z}\left(\bz_{1},\bz_{2}\right)=\sup_{h\in\mathcal{H}}d_{Y}\left(h\left(\bz_{1}\right),h\left(\bz_{2}\right)\right),
\]
where $\bz_{1}$ and $\bz_{2}$ lie on the latent space $\mathcal{Z}$.
The necessary (also sufficient) condition under which $d_{Z}$ is
a proper metric on the latent space (see the proof in Appendix \ref{sec:proof_key_results})
is realistic and not hard to be satisfied (e.g., the family $\mathcal{H}$
contains any bijection). We now can define a Wasserstein distance
$W_{d_{Z}}$ that will be used in the development of Theorem~\ref{thm:bound_py_diff}. 
\begin{thm}
\label{thm:bound_py_diff} With regard to the latent space $\mathcal{Z}$,
we can upper-bound the label shift as 
\begin{align*}
LS\left(S,T\right)\leq\mathcal{L}\left(h^{S},f^{S},\mathbb{P}^{S}\right)+\mathcal{L}\left(h^{T},f^{T},\mathbb{P}^{T}\right)+\mathcal{W}_{d_{Z}}\left(g^{S}\#\mathbb{P}^{S},g^{T}\#\mathbb{P}^{T}\right).
\end{align*}
\end{thm}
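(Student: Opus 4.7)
The plan is to combine the characterization of the label shift from Proposition~\ref{thm:label_shift}(ii) with the triangle inequality for the Wasserstein distance, using $h^{S}\#\mathbb{P}^{S}$ and $h^{T}\#\mathbb{P}^{T}$ as pivot measures on the simplex $\mathcal{Y}_{\Delta}$. By Proposition~\ref{thm:label_shift}(ii), $LS(S,T)=\mathcal{W}_{d_{Y}}(f^{S}\#\mathbb{P}^{S},f^{T}\#\mathbb{P}^{T})$, so since $d_{Y}$ is a metric and hence $\mathcal{W}_{d_{Y}}$ is a metric on the space of probability measures on $\mathcal{Y}_{\Delta}$, one obtains
\begin{align*}
LS(S,T) \;\leq\; \mathcal{W}_{d_{Y}}\!\left(f^{S}\#\mathbb{P}^{S},\,h^{S}\#\mathbb{P}^{S}\right)
\;+\;\mathcal{W}_{d_{Y}}\!\left(h^{S}\#\mathbb{P}^{S},\,h^{T}\#\mathbb{P}^{T}\right)
\;+\;\mathcal{W}_{d_{Y}}\!\left(h^{T}\#\mathbb{P}^{T},\,f^{T}\#\mathbb{P}^{T}\right).
\end{align*}

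Next I would control the two outer terms by the respective source/target losses. For the first term, the coupling obtained by sending $\mathbb{P}^{S}$ through the pair $x\mapsto(f^{S}(x),h^{S}(x))$ is a valid transport plan between $f^{S}\#\mathbb{P}^{S}$ and $h^{S}\#\mathbb{P}^{S}$; its $d_{Y}$-cost equals $\int d_{Y}(f^{S}(x),h^{S}(x))\,d\mathbb{P}^{S}(x)=\mathcal{L}(h^{S},f^{S},\mathbb{P}^{S})$. Taking the infimum over couplings yields $\mathcal{W}_{d_{Y}}(f^{S}\#\mathbb{P}^{S},h^{S}\#\mathbb{P}^{S})\leq \mathcal{L}(h^{S},f^{S},\mathbb{P}^{S})$, and the symmetric argument gives the analogous bound for the target term.

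It then remains to dominate the middle term by $\mathcal{W}_{d_{Z}}(g^{S}\#\mathbb{P}^{S},g^{T}\#\mathbb{P}^{T})$. Writing $h^{S}\#\mathbb{P}^{S}=h\#(g^{S}\#\mathbb{P}^{S})$ and similarly for the target, I would take any coupling $\gamma$ of $g^{S}\#\mathbb{P}^{S}$ and $g^{T}\#\mathbb{P}^{T}$ and push it through $(h,h)$ to obtain a coupling of $h^{S}\#\mathbb{P}^{S}$ and $h^{T}\#\mathbb{P}^{T}$. By the definition $d_{Z}(z_{1},z_{2})=\sup_{h'\in\mathcal{H}} d_{Y}(h'(z_{1}),h'(z_{2}))$, we have $d_{Y}(h(z_{1}),h(z_{2}))\leq d_{Z}(z_{1},z_{2})$ pointwise, so
\begin{equation*}
\int d_{Y}(h(z_{1}),h(z_{2}))\,d\gamma(z_{1},z_{2})\;\leq\;\int d_{Z}(z_{1},z_{2})\,d\gamma(z_{1},z_{2}).
\end{equation*}
Taking the infimum over $\gamma$ yields $\mathcal{W}_{d_{Y}}(h^{S}\#\mathbb{P}^{S},h^{T}\#\mathbb{P}^{T})\leq \mathcal{W}_{d_{Z}}(g^{S}\#\mathbb{P}^{S},g^{T}\#\mathbb{P}^{T})$, and assembling the three estimates gives the claim.

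The main obstacle is essentially bookkeeping: one must be sure that $d_{Z}$ really is a metric (so that $\mathcal{W}_{d_{Z}}$ is well-defined and satisfies the needed properties), which the excerpt defers to Appendix~\ref{sec:proof_key_results}, and one must verify that the constructed couplings have the correct marginals, in particular that the pair map $(f^{S},h^{S})\#\mathbb{P}^{S}$ is a valid transport plan. Both points are routine but need to be stated carefully; no further technical machinery is required.
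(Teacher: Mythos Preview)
Your proof is correct and follows essentially the same route as the paper: apply Proposition~\ref{thm:label_shift}(ii), use the triangle inequality for $\mathcal{W}_{d_{Y}}$ with the pivots $h^{S}\#\mathbb{P}^{S}$ and $h^{T}\#\mathbb{P}^{T}$, bound the outer terms by the losses, and control the middle term by pushing a coupling of $g^{S}\#\mathbb{P}^{S},g^{T}\#\mathbb{P}^{T}$ through $(h,h)$ together with $d_{Y}(h(z_{1}),h(z_{2}))\leq d_{Z}(z_{1},z_{2})$. The paper bounds the outer terms via Corollary~\ref{coro:min_WS_new} (choosing $L$ to be the identity in the Monge formulation), which is equivalent to your direct coupling $(f^{S},h^{S})\#\mathbb{P}^{S}$; note also that $d_{Z}$ being a genuine metric is not actually needed for your argument, since the triangle inequality is only applied to $\mathcal{W}_{d_{Y}}$.
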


Theorem \ref{thm:bound_py_diff} indicates a trade-off of learning
domain-invariant representation by forcing $g^{S}\#\mathbb{P}^{S}=g^{T}\#\mathbb{P}^{T}$
(e.g., $\min\,\mathcal{W}_{d_{Z}}\left(g^{S}\#\mathbb{P}^{S},g^{T}\#\mathbb{P}^{T}\right)$).
It is evident that if the label shift between domains is significant,
because $\mathcal{L}\left(h^{S},f^{S},\mathbb{P}^{S}\right)$ can
be trained to be sufficiently small, learning domain-invariant representation
by minimizing $\mathcal{W}_{d_{Z}}\left(g^{S}\#\mathbb{P}^{S},g^{T}\#\mathbb{P}^{T}\right)$
leads to a hurt in the performance of the target classifier $h^{T}$
on the target domain. Similar theoretical result was discovered in
\cite{zhao2019learning} for the binary classification (see Theorem
4.9 in that paper). However, our theory is developed based on our
label shift formulation in a more general multi-class classification
setting and uses WS distance rather than Jensen-Shannon (JS) distance
\cite{JS_distance} as in \cite{zhao2019learning} for which the advantages of WS distance over JS distance
have been thoughtfully discussed in \cite{pmlr-v70-arjovsky17a}.

\textbf{Label shift under different settings of DA:} An advantage
of our method is that it can measure the label shift under various
DA settings (i.e., open-set, partial-set, and universal DA). Doing
this task is not straight-forward using other label shift methods.
For example, if there is a label that appears in a domain but not
in the other, it is not meaningful to measure the ratio between the
marginal distribution of this label as in~\cite{NEURIPS2020_219e0524}. 

In what follows, we elaborate our label shift in those settings. Particularly,
we provide some lower bounds for it, implying that the label shift
is higher if there is label mismatch between two domains. Recall that
when the source and target domains do not have the same number of
labels, we can extend $f^{S}$ and $f^{T}$ to be functions taking
values on $\mathcal{Y}_{\Delta}$ by filling $0$ for the missing
labels. For the sake of presenting the results, let $\mathcal{Y}_{S}\cap\mathcal{Y}_{T}=\{1,\dots,C\}$
be the common labels of two domains, $\mathbb{Q}_{S}$ the marginal
of $\mathbb{P}_{f_{S}}$ and $\mathbb{Q}_{T}$ the marginal of $\mathbb{P}_{f_{T}}$
on the first $(C-1)$ dimensions. $\mathbb{Q}_{S\setminus T}$ denotes
the marginal of $\mathbb{P}_{f_{S}}$ in the space of variables having
labels $\mathcal{Y}_{S}\setminus\mathcal{Y}_{T}$ and $\mathbb{Q}_{T\setminus S}$
denotes the marginal of $\mathbb{P}_{f_{T}}$ in the space of variables
having labels $\mathcal{Y}_{T}\setminus\mathcal{Y}_{S}$.
\begin{thm}
\label{thm:label_shift_setting}Assume that $d_{Y}(y,y')=\|y-y'\|_{p}^{p}$.
Then, the following holds:

i) For the partial-set setting (i.e., $\mathcal{Y}^{T}\subset\mathcal{Y}^{S}$),
we obtain 
\begin{equation}
LS(S,T)\geq W_{p}^{p}(\mathbb{Q}_{S},\mathbb{Q}_{T})+\mathbb{E}_{X\sim\mathbb{Q}_{S\setminus T}}\left[\norm X_{p}^{p}\right].\label{eq:partial_bound}
\end{equation}

ii) For the open-set setting (i.e., $\mathcal{Y}^{S}\subset\mathcal{Y}^{T}$),
we obtain 
\begin{equation}
LS(S,T)\geq W_{p}^{p}(\mathbb{Q}_{S},\mathbb{Q}_{T})+\mathbb{E}_{X\sim\mathbb{Q}_{T\setminus S}}\left[\norm X_{p}^{p}\right].\label{eq:open_set_bound}
\end{equation}

iii) For the universal setting (i.e., $\mathcal{Y}^{T}\subsetneq\mathcal{Y}^{S}$
and $\mathcal{Y}^{S}\subsetneq\mathcal{Y}^{T}$), we have 
\begin{equation}
LS(S,T)\geq W_{p}^{p}(\mathbb{Q}_{S},\mathbb{Q}_{T})+\mathbb{E}_{X\sim\mathbb{Q}_{T\setminus S}}\left[\norm X_{p}^{p}\right]+\mathbb{E}_{Y\sim\mathbb{Q}_{S\setminus T}}\left[\norm Y_{p}^{p}\right].\label{eq:universal_bound}
\end{equation}
\end{thm}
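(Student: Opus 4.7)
The plan is to apply Proposition~\ref{thm:label_shift}(ii), which identifies $LS(S,T) = \mathcal{W}_{d_Y}(\mathbb{P}_{f^S}, \mathbb{P}_{f^T})$ with $d_Y(y,y') = \|y-y'\|_p^p$, and then to decompose the transport cost coordinate-by-coordinate by exploiting the zero-padding convention applied to the labeling functions on missing labels. The key observation is that for any coordinate corresponding to a label present in only one domain, one of the two endpoints of the transport is deterministically zero, so that coordinate's contribution to $\|x-y\|_p^p$ is the $p$-th power of the other endpoint's magnitude and depends only on a marginal.

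For the partial-set case ($\mathcal{Y}^T \subset \mathcal{Y}^S$), I would index each $y \in \mathcal{Y}_\Delta$ by its coordinates in $\mathcal{Y}^S \cap \mathcal{Y}^T$ and in $\mathcal{Y}^S \setminus \mathcal{Y}^T$. Since $f^T$ is zero-padded outside $\mathcal{Y}^T$, every $y$ in the support of $\mathbb{P}_{f^T}$ satisfies $y_i = 0$ for $i \in \mathcal{Y}^S \setminus \mathcal{Y}^T$. Hence for any coupling $\gamma$ of $\mathbb{P}_{f^S}$ and $\mathbb{P}_{f^T}$,
\begin{align*}
\int \|x-y\|_p^p \, d\gamma(x,y)
 = \int \sum_{i \in \mathcal{Y}^S \cap \mathcal{Y}^T} |x_i - y_i|^p \, d\gamma(x,y)
 + \int \sum_{i \in \mathcal{Y}^S \setminus \mathcal{Y}^T} |x_i|^p \, d\gamma(x,y).
\end{align*}
The second summand depends only on the $\mathbb{P}_{f^S}$-marginal and equals $\mathbb{E}_{X \sim \mathbb{Q}_{S \setminus T}}[\|X\|_p^p]$ independently of the choice of $\gamma$. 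For the first summand, the push-forward of $\gamma$ onto the common-label coordinates is itself a coupling between the corresponding marginals of $\mathbb{P}_{f^S}$ and $\mathbb{P}_{f^T}$, so taking the infimum over $\gamma$ is bounded below by the Wasserstein distance between those marginals; combining this with the monotonicity of $\mathcal{W}_p^p$ under coordinate projection will yield Eq.~(\ref{eq:partial_bound}).

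The open-set case is entirely symmetric: now $f^S$ is zero-padded on $\mathcal{Y}^T \setminus \mathcal{Y}^S$, and the same two-term split produces Eq.~(\ref{eq:open_set_bound}). For the universal case, both label differences are nonempty, so the decomposition has three pieces --- the common labels, the source-exclusive ones (where $y_i = 0$ on the support of $\mathbb{P}_{f^T}$), and the target-exclusive ones (where $x_i = 0$ on the support of $\mathbb{P}_{f^S}$). Both exclusive sums become deterministic marginal expectations independent of $\gamma$, while the common-labels term is lower-bounded as before, yielding Eq.~(\ref{eq:universal_bound}).

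The only delicate step is the passage from the $|\mathcal{Y}^S \cap \mathcal{Y}^T|$-dimensional marginals on common labels to the $(C-1)$-dimensional marginals $\mathbb{Q}_S, \mathbb{Q}_T$ appearing in the statement. This drop of one coordinate is justified by the standard projection inequality for $\mathcal{W}_p$: any coupling of the higher-dimensional marginals pushes forward to a coupling of the lower-dimensional ones with no larger cost. The inequality obtained this way is in general non-tight, which is exactly consistent with the lower-bound form of the theorem, and is the reason the stated bounds are inequalities rather than equalities.
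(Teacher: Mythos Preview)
Your proposal is correct and follows essentially the same approach as the paper: both use Proposition~\ref{thm:label_shift}(ii), decompose the $\ell_p^p$ cost coordinatewise, observe that the zero-padding makes the contribution from exclusive labels a deterministic marginal expectation, and bound the common-label block below via the projection inequality for $\mathcal{W}_p^p$ (stated in the paper as Lemma~\ref{lemma:W_inequality}). The only cosmetic difference is the order of operations --- the paper first drops the $C$-th coordinate and then splits the remaining $M-1$ coordinates via the lemma, whereas you first split into common and exclusive blocks and then project away one common coordinate --- but the logic is identical.
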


 Theorem \ref{thm:label_shift_setting} reveals that the label shifts
for the partial-set, open-set, or universal DA settings are higher
than the vanilla closed-set setting due to the missing and unmatching
labels of two domains (see Section \ref{subsec:label_shift_estimate}). Our theory also implicitly
indicates that by setting appropriate weights for source and target
examples (i.e., low weights for the examples with missing and unmatching
labels), we can deduct label shift by reducing the second term of
lower-bounds in Eqs.~\eqref{eq:partial_bound}, \eqref{eq:open_set_bound},
and \eqref{eq:universal_bound} to mitigate the negative transfer
\cite{cao2019_partial}. We leave this interesting investigation
to our future work. Moreover, further analysis can be found in Appendix
\ref{sec:proof_remaining_results}.

\section{Label and Data Shift Reductions via Optimal Transport}
\subsection{Motivation theorem for our approach}
We consider the source classifier $h^{S}=\bar{h}^{S}\circ g$ and the target classifier $h^{T}=\bar{h}^{T}\circ g$. Given a decreasing function $\phi:\mathbb{R}\to [0,1]$, a classifier $h$ is said to be $\phi$-Lipschitz transferable \cite{courty2017joint} w.r.t a joint distribution $\gamma\in\Gamma(\mathbb{P}^{S},\mathbb{P}^{T})$, the metric $d_X$ on the data space, and the metric $d_Y$ on the $\mathcal{Y}_{\Delta}$ if for all $\delta>0$, we have
\begin{align*}
    \mathbb{P}_{(\bx^S,\bx^T)\sim\gamma}\Big[d_Y\big(h(\bx^S),h(\bx^T)\big)>\delta d_X(\bx^S,\bx^T)\Big]\leq \phi(\delta).
\end{align*}
\begin{thm}\label{theorem:motivation}
Assume that $\mathcal{X}^{T}=\mathcal{X}^{S}=\mathcal{X}$, the target classifier $h^T$ is $\phi$-Lipschitz transferable w.r.t the optimal joint distribution $\gamma^*\in\Gamma(\mathbb{P}^{S},\mathbb{P}^{T})$ of $W_{d_X}(\mathbb{P}^{S},\mathbb{P}^{T})$ with the metric $d_X$ on the data space $\mathcal{X}$ and $d_Y$ on the simplex $\mathcal{Y}_{\Delta}$, we have
\begin{align}\label{eq:motivation_theorem}
    LS(h^T,f^T,\mathbb{P}^{T})&\leq LS(S,T)+\mathcal{L}(h^S,f^S,\mathbb{P}^S)+\frac{1}{2}\delta W_{d_X}(\mathbb{P}^{S},\mathbb{P}^{T})\nonumber\\
    &\qquad\qquad+\frac{1}{2}W_{d_Y}(\mathbb{P}^{S}_{h^S},\mathbb{P}^{T}_{h^T})+\frac{[1+\phi(\delta)]N}{2},
\end{align}
where $N=\sup_{\pi,\pi^\prime\in\mathcal{Y}_{\Delta}}d_Y(\pi,\pi^\prime)$ and the label shift $LS(h^T,f^T,\mathbb{P}^T)$ is defined as 
\begin{align*}
    \inf_{L:L\#\mathbb{P}^T=\mathbb{P}^T}\mathbb{E}_{\mathbb{P}^T}\Big[d_Y\big(h^T(\bx),f^T(L(\bx))\big)\Big],
\end{align*}
representing how accurate the target classifier imitates its ground-truth labeling function on the target domain.
\end{thm}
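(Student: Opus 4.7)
The plan is to apply Proposition~\ref{thm:label_shift}(i) to recast $LS(h^T,f^T,\mathbb{P}^T)$ as a Wasserstein distance between joint distributions, then derive two distinct upper bounds by instantiating the Wasserstein triangle inequality along two different intermediate joints, and finally combine them by arithmetic averaging. The $\tfrac12$ coefficients in front of $\delta W_{d_X}$ and $W_{d_Y}(\mathbb{P}^S_{h^S},\mathbb{P}^T_{h^T})$, together with the $\tfrac{(1+\phi(\delta))N}{2}$ offset, are the fingerprint of this averaging step: given $X\leq B_1$ and $X\leq B_2$, one automatically has $X\leq\tfrac12(B_1+B_2)$.

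Applying Proposition~\ref{thm:label_shift}(i) with both ``source'' and ``target'' equal to $\mathbb{P}^T$ and labeling functions $h^T$ and $f^T$, I first identify $LS(h^T,f^T,\mathbb{P}^T)=W_{d_Y}(\mathbb{P}^T_{h^T},\mathbb{P}^T_{f^T})$, where $\mathbb{P}^T_{h^T}$ denotes the joint law of $(\bx,h^T(\bx))$ under $\bx\sim\mathbb{P}^T$ and the Wasserstein cost only charges the label coordinate. For the first bound (Path~A), I insert $\mathbb{P}^S_{h^S}$ as an intermediate in the Wasserstein triangle:
\begin{align*}
W_{d_Y}(\mathbb{P}^T_{h^T},\mathbb{P}^T_{f^T}) \leq W_{d_Y}(\mathbb{P}^T_{h^T},\mathbb{P}^S_{h^S}) + W_{d_Y}(\mathbb{P}^S_{h^S},\mathbb{P}^S_{f^S}) + W_{d_Y}(\mathbb{P}^S_{f^S},\mathbb{P}^T_{f^T}).
\end{align*}
By symmetry of $W_{d_Y}$, by the identity coupling on $\mathbb{P}^S$, and by Proposition~\ref{thm:label_shift}(i) respectively, the three right-hand terms are upper-bounded by $W_{d_Y}(\mathbb{P}^S_{h^S},\mathbb{P}^T_{h^T})$, $\mathcal{L}(h^S,f^S,\mathbb{P}^S)$, and $LS(S,T)$, producing Bound~A with no appeal to $\phi$-Lipschitz transferability.

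For the second bound (Path~B), I insert instead the intermediate joint $\mathbb{P}^S_{h^T}$ (meaningful since $\mathcal{X}^S=\mathcal{X}^T$). The jump $W_{d_Y}(\mathbb{P}^T_{h^T},\mathbb{P}^S_{h^T})$ is controlled using the coupling of the two joints induced by $\gamma^*\in\Gamma(\mathbb{P}^S,\mathbb{P}^T)$: the cost reduces to $\mathbb{E}_{\gamma^*}[d_Y(h^T(\bx^S),h^T(\bx^T))]$, and splitting this expectation on the Lipschitz event $\{d_Y\leq \delta d_X\}$ (whose complement has probability at most $\phi(\delta)$ and cost at most $N$) gives $\delta W_{d_X}(\mathbb{P}^S,\mathbb{P}^T)+N\phi(\delta)$. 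For the middle jump $W_{d_Y}(\mathbb{P}^S_{h^T},\mathbb{P}^S_{f^S})$, I use the identity coupling on $\mathbb{P}^S$ and the pointwise triangle $d_Y(h^T(\bx),f^S(\bx))\leq d_Y(h^T(\bx),h^S(\bx))+d_Y(h^S(\bx),f^S(\bx))$, controlling the first summand crudely by the diameter $N$ and the second by $\mathcal{L}(h^S,f^S,\mathbb{P}^S)$. Combined with $W_{d_Y}(\mathbb{P}^S_{f^S},\mathbb{P}^T_{f^T})=LS(S,T)$, this yields Bound~B: $LS(h^T,f^T,\mathbb{P}^T)\leq LS(S,T)+\mathcal{L}(h^S,f^S,\mathbb{P}^S)+\delta W_{d_X}(\mathbb{P}^S,\mathbb{P}^T)+(1+\phi(\delta))N$.

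Averaging Bounds~A and~B immediately produces~\eqref{eq:motivation_theorem}. The main conceptual obstacle is recognizing that neither path alone gives the stated form: Path~A cleanly isolates $W_{d_Y}(\mathbb{P}^S_{h^S},\mathbb{P}^T_{h^T})$ but wastes the $\phi$-Lipschitz hypothesis, while Path~B leverages transferability to bring in $W_{d_X}$ but pays a crude $N$ for comparing $h^T$ and $h^S$ on the source. Only the arithmetic mean absorbs both strengths at half weight, which is exactly the form of~\eqref{eq:motivation_theorem}. A minor technical point is that each Wasserstein distance on the right of the triangle may be minimized over its own optimal plan, so no gluing lemma is required.
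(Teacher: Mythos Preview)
Your proof is correct and uses essentially the same ingredients as the paper: the Wasserstein triangle inequality through $\mathbb{P}^S_{h^S}$ (your Path~A is exactly the paper's first step), the $\phi$-Lipschitz splitting on the event $\{d_Y>\delta d_X\}$ to extract $\delta W_{d_X}+N\phi(\delta)$, and the crude diameter bound $N$ on $d_Y(h^S(\bx^S),h^T(\bx^S))$. The only organizational difference is that the paper writes the single term $W_{d_Y}(\mathbb{P}^S_{h^S},\mathbb{P}^T_{h^T})$ as $\tfrac12 W+\tfrac12 W$ and bounds the second half via the $\phi$-Lipschitz/diameter argument, whereas you run a second triangle through the intermediate $\mathbb{P}^S_{h^T}$ and then average the two resulting bounds; the estimates and final inequality are identical.
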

Theorem~\ref{theorem:motivation} hints us how to devise our \textbf{\emph{L}}\emph{abel and}\textbf{\emph{D}}\emph{ata Shift }\textbf{\emph{R}}\emph{eductions
via }\textbf{\emph{O}}\emph{ptimal }\textbf{\emph{T}}\emph{ransport}
(LDROT), aiming to reduce both data and label shifts simultaneously. First, we train the source classifier to work well on the source domain by minimizing $\mathcal{L}(h^S,f^S,\mathbb{P}^S)$. Second, we minimize the shifting loss $W_{d_Y}(\mathbb{P}^{S}_{h^S},\mathbb{P}^{T}_{h^T})$ to encourage $h^T$ to imitate $h^S$. We will later explain how minimizing this shifting loss helps to reduce both data and label shifts simultaneously. Third, we encourage the target classifier $h^T$ to atisfy the
Lipschitz condition or become smoother by making $\delta$ and $\phi(\delta)$ as small as possible.

\subsection{Objective function of LDROT}
Our LDROT consists of three losses which are as follows:

(i) \textit{Standard loss $\mathcal{L}^{S}$}: We train the source
classifier $h^{S}$ on the labeled source data by minimizing the loss
$\mathcal{L}^{S}:=\mathcal{L}\left(h^{S},f^{S},\mathbb{P}^{S}\right)$;

(ii) \textit{Shifting loss $\mathcal{L}^{shift}$}: Furthermore, to
mitigate both label and data shifts, we propose to further regularize
the loss $\mathcal{L}^{S}$ by $\mathcal{L}^{shift}:=\mathcal{W}_{d}\left(\mathbb{P}_{h^{S}}^{S},\mathbb{P}_{h^{T}}^{T}\right)$,
where the ground metric $d$ is defined as 
\begin{equation}
d\left(\bz^{S},\bz^{T}\right)=\lambda\cdot d_{X}\left(g\left(\bx^{S}\right),g\left(\bx^{T}\right)\right)+d_{Y}\left(h^{S}\left(\bx^{S}\right),h^{T}\left(\bx^{T}\right)\right),\label{eq:d_metric}
\end{equation}
where $\bz^{S}=\left(\bx^{S},h^{S}\left(\bx^{S}\right)\right)$ with
$\bx^{S}\sim\mathbb{P}^{S}$ and $\bz^{T}=\left(\bx^{T},h^{T}\left(\bx^{T}\right)\right)$
with $\bx^{T}\sim\mathbb{P}^{T}$;

(iii) \textit{Clustering loss $\mathcal{L}^{clus}$}: Finally, to \textit{enhance the smoothness} of $h^{T}$, we enforce the clustering
assumption \cite{chapelle2005semi} to enable $h^{T}$ giving the
same prediction for source and target examples on the same cluster.
To employ the clustering assumption \cite{shu2018a}, we use Virtual
Adversarial Training (VAT) \cite{VAT} in conjunction with minimizing
the entropy of prediction \cite{grandvalet2015entropy}: $\mathcal{L}^{clus}:=\mathcal{L}^{ent}+\mathcal{L}^{vat}$
with{\small{}
\begin{align*}
\mathcal{L}^{ent} & =\mathbb{E}_{\mathbb{P}^{T}}\left[\mathbb{H}\left(h^{T}\left(g\left(\bx\right)\right)\right)\right], \\
\mathcal{L}^{vat} & =  \mathbb{E}_{0.5\mathbb{P}^{S}+0.5\mathbb{P}^{T}}\left[\text{max }_{\bx'\in B_{\theta}\left(\bx\right)}D_{KL}\left(h^{T}\left(g\left(\bx\right)\right),h^{T}\left(g\left(\bx'\right)\right)\right)\right],
\end{align*}
} where $D_{KL}$ represents a Kullback-Leibler divergence, $\theta$
is a very small positive number, $B_{\theta}\left(\bx\right):=\left\{ \bx':\norm{\bx'-\bx}_{2}<\theta\right\} $,
and $\mathcal{\mathbb{H}}$ specifies the entropy.

Combining the above losses, we arrive at the following objective function
of LDROT. 
\begin{align}
\inf_{g,\bar{h}^{S},\bar{h}^{T}}\{\mathcal{L}^{S}+\alpha\mathcal{L}^{shift}+\beta\mathcal{L}^{clus}\},\label{eq:LDROT_loss}
\end{align}
where $\alpha,\beta>0$. In addition, we use the target classifier
$h^{T}$ to predict target examples.

\textbf{Remark on the shifting term:} We now explain why including
the shifting term $\mathcal{W}_{d}\left(\mathbb{P}_{h^{S}}^{S},\mathbb{P}_{h^{T}}^{T}\right)$
supports to reduce label and data shifts. First, we have the following
inequality whose proof can be found in Appendix \ref{sec:proof_remaining_results}:
\begin{equation}
\mathcal{W}_{d_{X}}\left(g\#\mathbb{P}^{S},g\#\mathbb{P}^{T}\right)=\mathcal{W}_{d_{X}}\left(\mathbb{P}_{h^{S}}^{S},\mathbb{P}_{h^{T}}^{T}\right)\leq\mathcal{W}_{d}\left(\mathbb{P}_{h^{S}}^{S},\mathbb{P}_{h^{T}}^{T}\right)\label{eq:data_shift_bound}
\end{equation}
since $d_{X}\leq d$. 

Therefore, by including the shifting term $\mathcal{W}_{d}\left(\mathbb{P}_{h^{S}}^{S},\mathbb{P}_{h^{T}}^{T}\right)$,
we aim to reduce $\mathcal{W}_{d_{X}}\left(g\#\mathbb{P}^{S},g\#\mathbb{P}^{T}\right)$,
which is useful for reducing the data shift on the latent space.

Second, we find that $\mathcal{W}_{d_{Y}}\left(\mathbb{P}_{h^{S}}^{S},\mathbb{P}_{h^{T}}^{T}\right)\leq\mathcal{W}_{d}\left(\mathbb{P}_{h^{S}}^{S},\mathbb{P}_{h^{T}}^{T}\right)$
since $d_{Y}\leq d$. The inequality (i) in Proposition \ref{proposition:properties_labelshift}
suggests that reducing the label shift $\mathcal{W}_{d_{Y}}\left(\mathbb{P}_{h^{S}}^{S},\mathbb{P}_{h^{T}}^{T}\right)$
helps to reduces the loss on the target domain, therefore increasing
the quality of our DA method.  Finally, by including $\mathcal{W}_{d}\left(\mathbb{P}_{h^{S}}^{S},\mathbb{P}_{h^{T}}^{T}\right)$,
we aim to simultaneously reduce both terms $\mathcal{W}_{d_{Y}}\left(\mathbb{P}_{h^{S}}^{S},\mathbb{P}_{h^{T}}^{T}\right)$
and $\mathcal{W}_{d_{X}}\left(\mathbb{P}_{h^{S}}^{S},\mathbb{P}_{h^{T}}^{T}\right)$,
which is equal to $\mathcal{W}_{d_{X}}\left(g\#\mathbb{P}^{S},g\#\mathbb{P}^{T}\right)$.
That step helps reduce the data shift between $g\#\mathbb{P}^{S}$and
$g\#\mathbb{P}^{T}$, while forcing $h^{T}$ to mimic $h^{S}$ for
predicting well on the target domain via reducing the label shift
$\mathcal{W}_{d_{Y}}\left(\mathbb{P}_{h^{S}}^{S},\mathbb{P}_{h^{T}}^{T}\right)$.

\subsection{Training procedure of LDROT}

We now discuss a few important aspects of the training procedure of
LDROT.

\vspace{0.5 em}
\noindent
\textbf{Similarity-aware version of the ground metric $d$:} The weight
$\lambda$ in the ground metric $d$ in Eq.~(\ref{eq:d_metric})
represents the matching extent of $g\left(\bx^{T}\right)$ and $g\left(\bx^{S}\right)$.
Ideally, we would like to replace this fixed constant $\lambda$ by
varied weights $\bw\left(\bx^{S},\bx^{T}\right)$ in such a way that
$\bw\left(\bx^{S},\bx^{T}\right)$ is high if $\bx^{T}$ and $\bx^{S}$
share the same label and low otherwise. However, it is not possible
because the label of $\bx^{T}$ is unknown. As an alternative, it
appears that if we can have a good way to estimate the pairwise similarity
$s\left(\bx^{S},\bx^{T}\right)$ of $\bx^{T}$ and $\bx^{S}$, $s\left(\bx^{S},\bx^{T}\right)$
seems to be high if $\bx^{T}$ and $\bx^{S}$ share the same label
and low if otherwise. Based on this observation, we instead propose
using a similarity-aware version of metric $d$ as follows: 
\[
\bar{d}\left(\bz^{S},\bz^{T}\right)=\bw\left(\bx^{S},\bx^{T}\right)d_{X}\left(g\left(\bx^{S}\right),g\left(\bx^{T}\right)\right)+d_{Y}\left(h^{S}\left(\bx^{S}\right),h^{T}\left(\bx^{T}\right)\right),
\]
where the weight $\bw\left(\bx^{S},\bx^{T}\right)$ is estimated based
on $s\left(\bx^{S},\bx^{T}\right)$. 

\vspace{0.5 em}
\noindent
\textbf{Entropic regularized version of shifting term:} Since computing
directly the shifting term $\mathcal{W}_{d}\left(\mathbb{P}_{h^{S}}^{S},\mathbb{P}_{h^{T}}^{T}\right)$
is expensive, we use the entropic regularized version of $\mathcal{W}_{d}\left(\mathbb{P}_{h^{S}}^{S},\mathbb{P}_{h^{T}}^{T}\right)$
instead, which we denote $\mathcal{W}_{d}^{\epsilon}\left(\mathbb{P}_{h^{S}}^{S},\mathbb{P}_{h^{T}}^{T}\right)$
where $\epsilon$ is a positive regularized term (Detailed definition
and discussion of the entropic regularized Wasserstein metric is in
Appendix A). The dual-form~\cite{stochastic_ws} of that entropic
regularized term with respect to the ground metric $\bar{d}$ admits
the following form:{\footnotesize{}
\begin{equation}
\max_{\phi}\Biggl\{\frac{1}{N_{S}}\sum_{i=1}^{N_{S}}\phi\left(g\left(\bx_{i}^{S}\right)\right)-\frac{\epsilon}{N_{T}}\sum_{j=1}^{N_{T}}\Biggl[\log\Biggl(\frac{1}{N_{S}}\sum_{i=1}^{N_{S}}\exp\Biggl\{\frac{\phi\left(g\left(\bx_{i}^{S}\right)\right)-\bar{d}\left(\bz_{i}^{S},\bz_{j}^{T}\right)}{\epsilon}\Biggr\}\Biggr)\Biggr]\Biggr\},\label{eq:entropic_WS-1}
\end{equation}
}where $\phi$ is a neural net named the Kantorovich potential network,
$\left\{ \left(\bx_{i}^{S},y_{i}^{S}\right)\right\} _{i=1}^{N_{S}}$and
$\left\{ \bx_{i}^{T}\right\} _{i=1}^{N_{T}}$ are source and target
data, $\bz_{i}^{S}=(\bx_{i}^{S},h^{S}(\bx_{i}^{S}))$, $\bz_{j}^{T}=(\bx_{j}^{T},h^{T}(\bx_{j}^{T}))$,
and $\bw_{ij}=\bw\left(\bx_{i}^{S},\bx_{j}^{T}\right)$.

\vspace{0.5 em}
\noindent
\textbf{Evaluating the weights $\bw_{ij}$:} The weights $\bw_{ij}$
are evaluated based on the similarity scores $s_{ij}:=s\left(\bx_{i}^{S},\bx_{j}^{T}\right)$.
Basically, we train from scratch or fine-tune a pre-trained deep net
(e.g., ResNet~\cite{he2016resnet}) using source dataset with labels
and compute cosine similarity of latent representations $\boldsymbol{r}_{j}^{T}$
and $\boldsymbol{r}_{i}^{S}$ of $\bx_{j}^{T}$ and $\bx_{i}^{S}$
as $s_{ij}=s\left(\bx_{i}^{S},\bx_{j}^{T}\right)=\text{cosine-sim}\left(\boldsymbol{r}_{i}^{S},\boldsymbol{r}_{j}^{T}\right).$

To ease the computation, we estimate the weights $\bw_{ij}$ according
to source and target batches. Specifically, we consider a balanced
source batch of $Mb$ source examples (i.e., $M$ is the number of
classes and $b$ is source batch size for each class). For a target
example $\bx_{j}^{T}$ in the target batch, we sort the similarity
array $\left[s_{ij}\right]_{i=1}^{Mb}$ in an ascending order. Ideally,
we expect that the similarity scores of the target example $\bx_{j}^{T}$
and the source examples $\bx_{i}^{S}$ with the same class as $\bx_{j}^{T}$
(i.e., totally we have $b$ of theirs) are higher than other similarity
scores in the current source batch. Therefore, we find $\mu_{i}$
as the $\frac{M-1}{M}$-percentile of the ascending similarity array
$\left[s_{ij}\right]_{i=1}^{Mb}$ and compute the weights as $\bw_{ij}=\exp\left\{ \frac{\bs_{ij}-\mu_{i}}{\tau}\right\} $
with a temperature variable $\tau$. It is worth noting that this
weight evaluation strategy assists us in \emph{sharpening} and \emph{contrasting}
the weights for the pairs in the similar and different classes. More
specifically, for the pairs in the same classes, $\bs_{ij}$ tend
to be bigger than $\mu_{i}$, whilst for the pairs in different classes,
$\bs_{ij}$ tend to be smaller than $\mu_{i}$. Hence, with the support
of exponential form and temperature variable $\tau$, $\bw_{ij}$
for the pairs in the same classes tend to be higher than those for
the pairs in different classes. More specifically, for the pairs in the same classes, $s_{ij}$ tend
to be bigger than $\mu_i$, whilst for the pairs in different classes,
$s_{ij}$ tend to be smaller than $\mu_i$. Hence, with the support of
exponential form and temperature variable $\tau$, $w_{ij}$ for the
pairs in the same classes tend to be higher than those for
the pairs in different classes.

\paragraph{Comparing to DeepJDOT \cite{damodaran2018deepjdot}: }

The $\mathcal{W}_{d}\left(\mathbb{P}_{h^{S}}^{S},\mathbb{P}_{h^{T}}^{T}\right)$
with $d=\lambda d_{X}+d_{Y}$ was investigated in DeepJDOT \cite{damodaran2018deepjdot}. However,
ours is different from that work in some aspects: (i) \emph{similarity
based dynamic weighting}, (ii) \emph{clustering loss} for enforcing
clustering assumption for target classifier, and (iii) \emph{entropic
dual form} for training rather than Sinkhorn as in \cite{damodaran2018deepjdot}. More analysis of  LDROT can be found in Section \ref{more_analysis}.

\begin{table}[t!]
\centering{} 
 \caption{Classification accuracy (\%) on Office-31 dataset for unsupervised
DA (ResNet-50).\label{tab:office31-dataset-1}}
\resizebox{0.6\columnwidth}{!}{\centering\setlength{\tabcolsep}{2pt}
\begin{tabular}{cccccccc}
\hline 
Method  & A$\rightarrow$W  & A$\rightarrow$D  & D$\rightarrow$W  & W$\rightarrow$D  & D$\rightarrow$A  & W$\rightarrow$A  & Avg\tabularnewline
\hline 
ResNet-50 \cite{he2016resnet}  & 70.0  & 65.5  & 96.1  & 99.3  & 62.8  & 60.5  & 75.7\tabularnewline
DeepCORAL \cite{sun2016coral}  & 83.0  & 71.5  & 97.9  & 98.0  & 63.7  & 64.5  & 79.8\tabularnewline
DANN \cite{ganin2016domain}  & 81.5  & 74.3  & 97.1  & 99.6  & 65.5  & 63.2  & 80.2\tabularnewline
ADDA \cite{tzeng2017ADDA}  & 86.2  & 78.8  & 96.8  & 99.1  & 69.5  & 68.5  & 83.2\tabularnewline
CDAN \cite{long2018cdan}  & 94.1  & 92.9  & 98.6  & \textbf{100.0}  & 71.0  & 69.3  & 87.7\tabularnewline
TPN \cite{pan2019tpn}  & 91.2  & 89.9  & 97.7  & 99.5  & 70.5  & 73.5  & 87.1\tabularnewline
SAFN \cite{xu2019sfan}  & 90.1  & 90.7  & 98.6  & 99.8  & 73.0  & 70.2  & 87.1\tabularnewline
rRevGrad+CAT \cite{deng2019cluster}  & 94.4  & 90.8  & 98.0  & \textbf{100.0}  & 72.2  & 70.2  & 87.6\tabularnewline
DeepJDOT \cite{damodaran2018deepjdot}  & 88.9  & 88.2  & 98.5  & 99.6  & 72.1  & 70.1  & 86.2\tabularnewline
ETD \cite{li2020enhanceOT}  & 92.1  & 88.0  & \textbf{100.0}  & \textbf{100.0}  & 71.0  & 67.8  & 86.2\tabularnewline
RWOT \cite{xu2020reliable}  & 95.1  & 94.5  & 99.5  & \textbf{100.0}  & 77.5  & 77.9  & 90.8\tabularnewline
\hline 
\textbf{LDROT}  & \textbf{95.6}  & \textbf{98.0}  & 98.1  & \textbf{100.0}  & \textbf{85.6}  & \textbf{84.9}  & \textbf{93.7}\tabularnewline
\hline 
\end{tabular}}
 
\end{table}
\begin{figure}[t!]
\begin{centering}
\subfloat[{{Closed-set setting with\protect \protect \\
 ~~$\mathcal{Y}^{S}=\mathcal{Y}^{T}=[9]$.}}]{\includegraphics[width=0.5\textwidth]{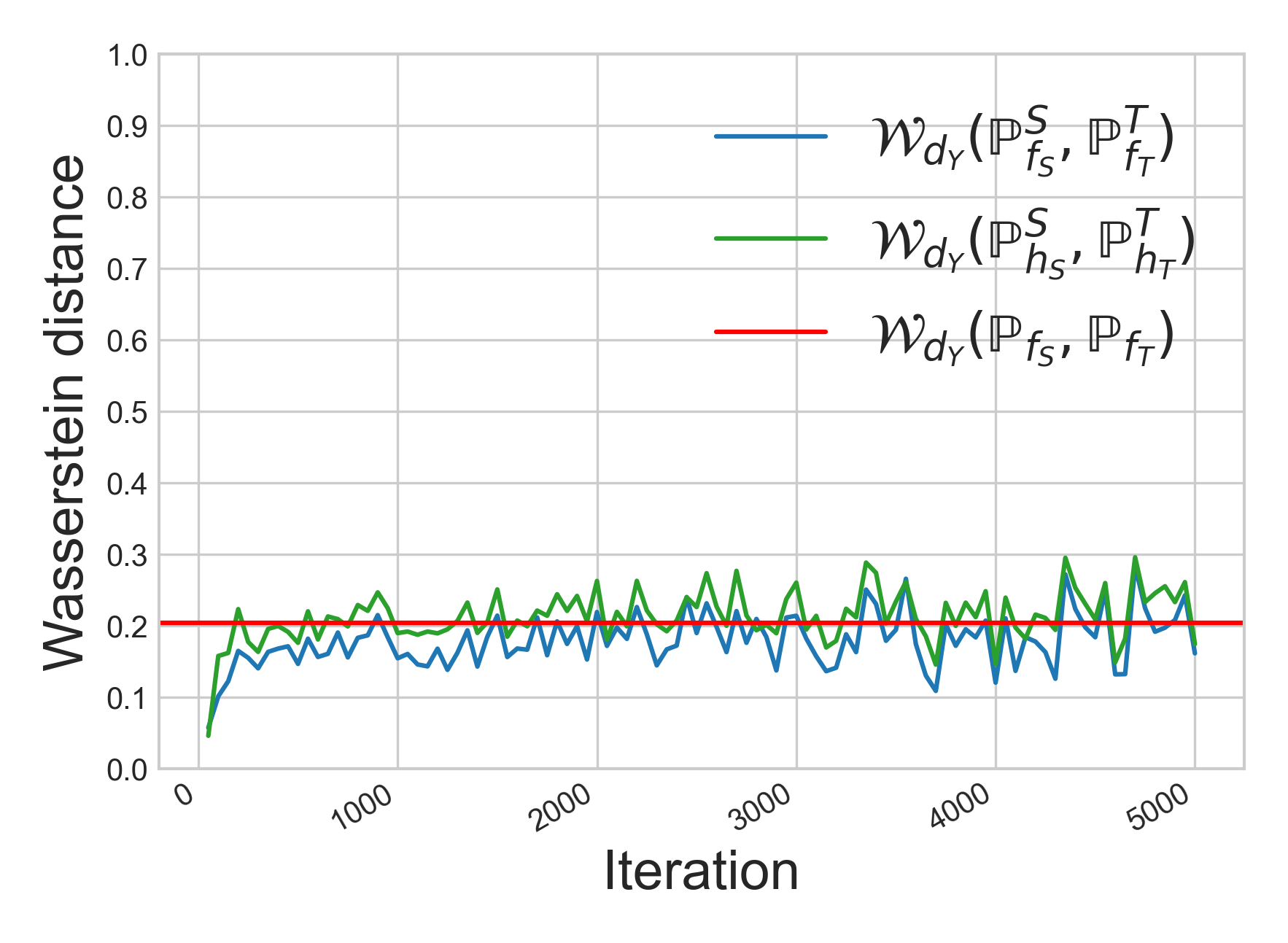}

}\subfloat[{{Open-set setting with \protect \protect \\
 ~~$\mathcal{Y}^{S}=[6]$, $\mathcal{Y}^{T}=[9]$.}}]{\includegraphics[width=0.5\textwidth]{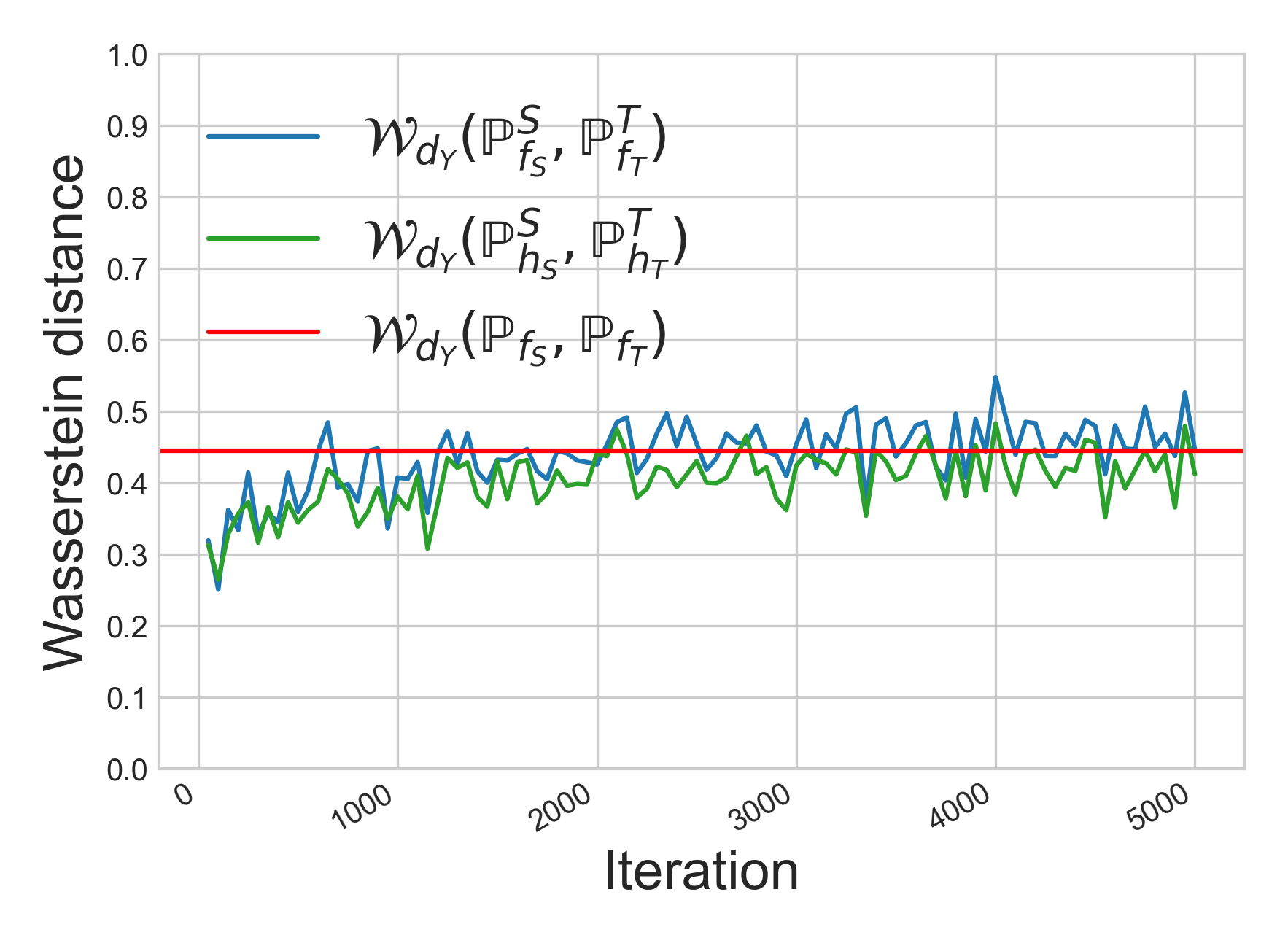}

} \\
\subfloat[{{Partial-set setting with \protect \protect \\
 ~~$\mathcal{Y}^{S}=[9]$, $\mathcal{Y}^{T}=[6]$.}}]{\includegraphics[width=0.5\textwidth]{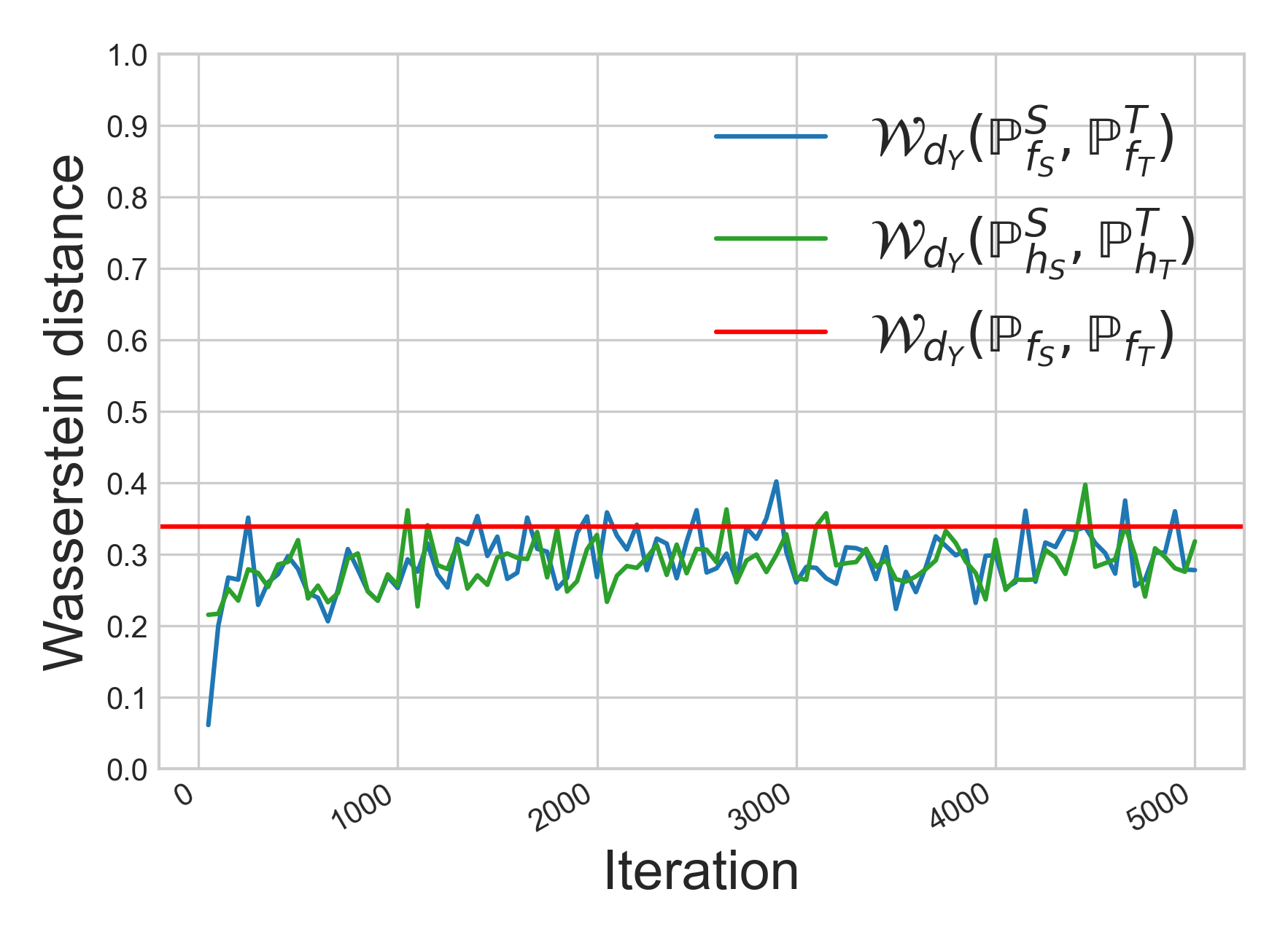}

}\subfloat[{{Universal setting with $\mathcal{Y}^{S}=[3]\cup\left\{ 4,5,6\right\} $, \protect \protect \\
 ~~$\mathcal{Y}^{T}=[3]\cup\left\{ 7,8,9\right\} $. }}]{\includegraphics[width=0.5\textwidth]{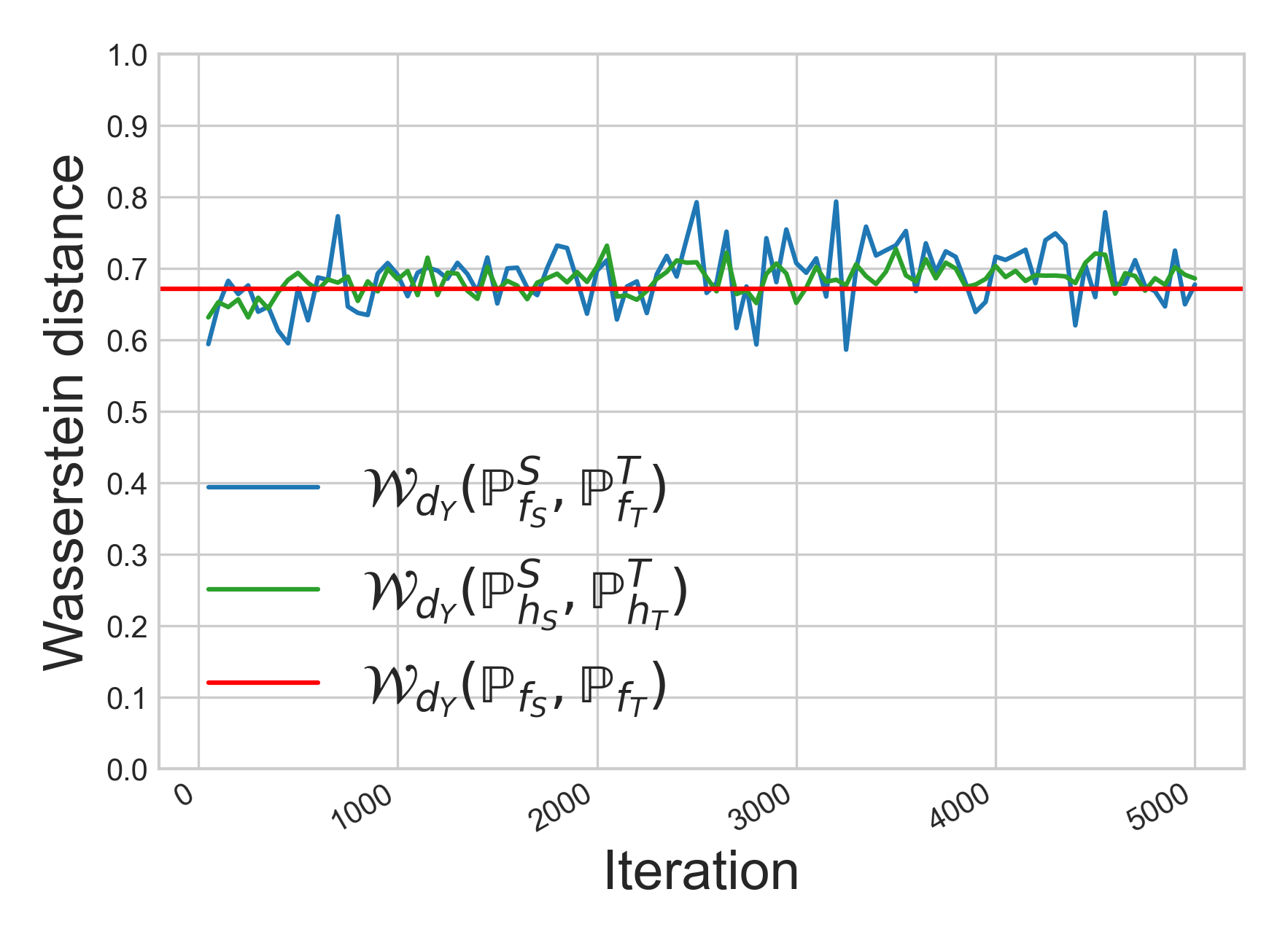}

}
\par
\end{centering}
\centering{} 
 \caption{Label shift estimation for various settings of DA when the source
data set SVHN and the target data set is MNIST. Here, we denote $[C]:=\left\{ 0,1,...,C\right\} $
for a positive integer number $C$.\label{fig:label_shift_estimation}}
\end{figure}

\section{Experiments
}

\subsection{Experiments of theoretical part \label{subsec:label_shift_estimate}}

\textbf{Label shift estimation:} In this experiment, we show how to
evaluate the label shift if we know the labeling mechanisms of the
source and target domains. We consider SVHN as the source domain and
MNIST as the target domain. These two datasets have ten categorical
labels which stand for the digits in $\mathcal{Y}=\left\{ 0,1,...,9\right\} $.
Additionally, for each source or target example $\bx$, the ground-truth
label of $\bx$ is a categorical label in $\mathcal{Y}=\left\{ 0,1,...,9\right\} $.
Since these labels have good separation, from part (iii) of Proposition~\ref{proposition:properties_labelshift},
we can choose $f^{S}\left(\bx\right)$ and $f^{T}\left(\bx\right)$
as one-hot vectors on the simplex $\mathcal{Y}_{\simplex}$. Therefore,
$\mathbb{P}_{f^{S}}$ and $\mathbb{P}_{f^{T}}$ are two discrete distributions
over one-hot vectors representing the categorical labels, wherein
each categorical label $y\in\left\{ 0,...,9\right\} $ corresponds
to the one-hot vector $\bone_{y+1}=\left[0,..,0,1_{y+1},0,...,0\right]$.
The label shift between SVHN and MNIST is estimated by either $\mathcal{W}_{d_{Y}}\left(\mathbb{P}_{f^{S}}^{S},\mathbb{P}_{f^{T}}^{T}\right)$
or $\mathcal{W}_{d_{Y}}\left(\mathbb{P}_{f^{S}},\mathbb{P}_{f^{T}}\right)$,
where $d_{Y}$ is chosen as $L^{1}$ distance. More specifically,
$\mathcal{W}_{d_{Y}}\left(\mathbb{P}_{f^{S}},\mathbb{P}_{f^{T}}\right)$
is evaluated accurately via linear programming\footnote{\url{https://pythonot.github.io/all.html}},
while estimating $\mathcal{W}_{d_{Y}}\left(\mathbb{P}_{f^{S}}^{S},\mathbb{P}_{f^{T}}^{T}\right)$
using the entropic regularized dual form $\mathcal{W}_{d_{Y}}^{\epsilon}\left(\mathbb{P}_{f^{S}}^{S},\mathbb{P}_{f^{T}}^{T}\right)$
with $\epsilon=0.1$~\cite{stochastic_ws}. Moreover, to visualize
the precision when using a probabilistic labeling mechanism to estimate
the label shift, we train two probabilistic labeling functions $h^{S}$
and $h^{T}$ by minimizing the cross-entropy loss with respect to
$f^{S}$ and $f^{T}$respectively and subsequently estimate $\mathcal{W}_{d_{Y}}\left(\mathbb{P}_{h^{S}},\mathbb{P}_{h^{T}}\right)$
using the entropic regularized dual form $\mathcal{W}_{d_{Y}}^{\epsilon}\left(\mathbb{P}_{h^{S}}^{S},\mathbb{P}_{h^{T}}^{T}\right)$
with $\epsilon=0.1$. Note that the prediction probabilities of $h^{S}$
and $h^{T}$ are now the points on the simplex $\mathcal{Y}_{\simplex}$.

We compute the label shift for four DA settings including the closed-set,
partial-set, open-set, and universal settings. As shown in Figure
\ref{fig:label_shift_estimation}, for all DA settings, the blue lines
estimating $\mathcal{W}_{d_{Y}}\left(\mathbb{P}_{f^{S}}^{S},\mathbb{P}_{f^{T}}^{T}\right)$
and the green lines estimating $\mathcal{W}_{d_{Y}}\left(\mathbb{P}_{h^{S}},\mathbb{P}_{h^{T}}\right)$
along with batches tend to approach the red lines evaluating $\mathcal{W}_{d_{Y}}\left(\mathbb{P}_{f^{S}},\mathbb{P}_{f^{T}}\right)$
accurately, which illustrates the result of part (iii) in Proposition~\ref{proposition:properties_labelshift}.
We also observe that the label shifts of partial-set, open-set, and
universal settings are higher than the closed-set setting as discussed
in Theorem \ref{thm:label_shift_setting}.

\vspace{0.5 em}
\noindent
\textbf{Implication on target performance:} In this experiment, we
demonstrate that the theoretical finding of Theorem \ref{thm:bound_py_diff}
indicating that forcing learning domain-invariant representations
hurts the target performance. We train a classifier $h^{ST}=g\circ h$,
where $g$ is a feature extractor and $h$ is a classifier on top
of latent representations by solving $\min_{g,h}\{\mathcal{L}\left(h^{ST},f^{S},\mathbb{P}^{S}\right)+0.1\times\mathcal{W}_{L^{1}}^{\epsilon}\left(g\#\mathbb{P}^{S},g\#\mathbb{P}^{T}\right)\}$
where $\mathcal{W}_{L^{1}}^{\epsilon}\left(g\#\mathbb{P}^{S},g\#\mathbb{P}^{T}\right)$
is used to estimate $\mathcal{W}_{L^{1}}\left(g\#\mathbb{P}^{S},g\#\mathbb{P}^{T}\right)$
for learning domain-invariant representations on the latent space.
We conduct the experiments on the pairs A$\rightarrow$W (Office-31)
and P$\rightarrow$I (ImageCLEF-DA) in which we measure the WS data
shift on the latent space $\mathcal{W}_{L^{1}}^{\epsilon}\left(g\#\mathbb{P}^{S},g\#\mathbb{P}^{T}\right)$,
and the source and target accuracies. As shown in Figure \ref{fig:target_performance_hurt},
along with the training process, while the WS data shift on the latent
space consistently decreases (i.e., the latent representations become
more domain-invariant), the source accuracies get saturated, but the
target accuracies get hurt gradually. 
\begin{figure}[!t]
 \centering{}\includegraphics[width=0.5\textwidth]{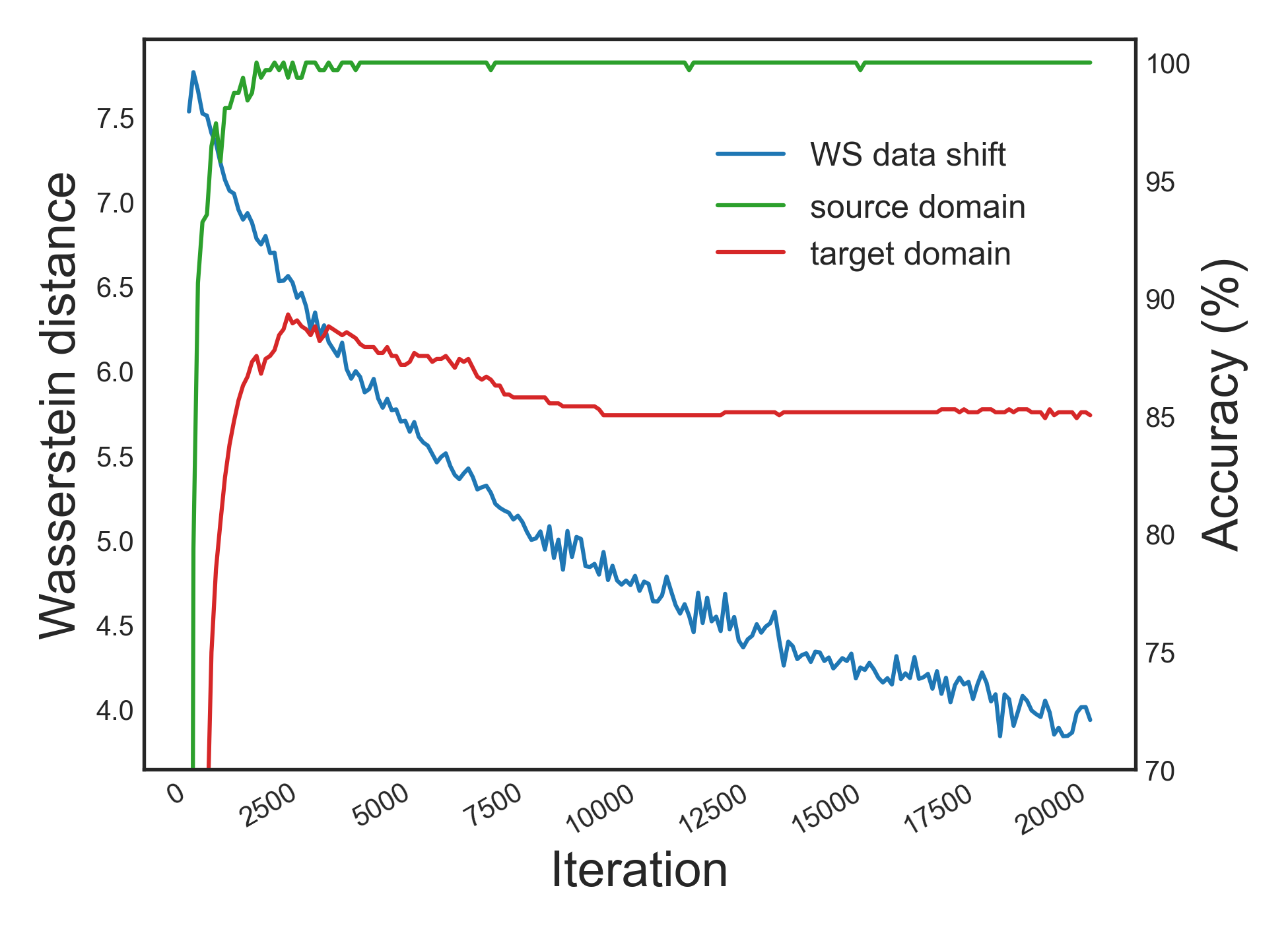}~~~~~~~~~\includegraphics[width=0.5\textwidth]{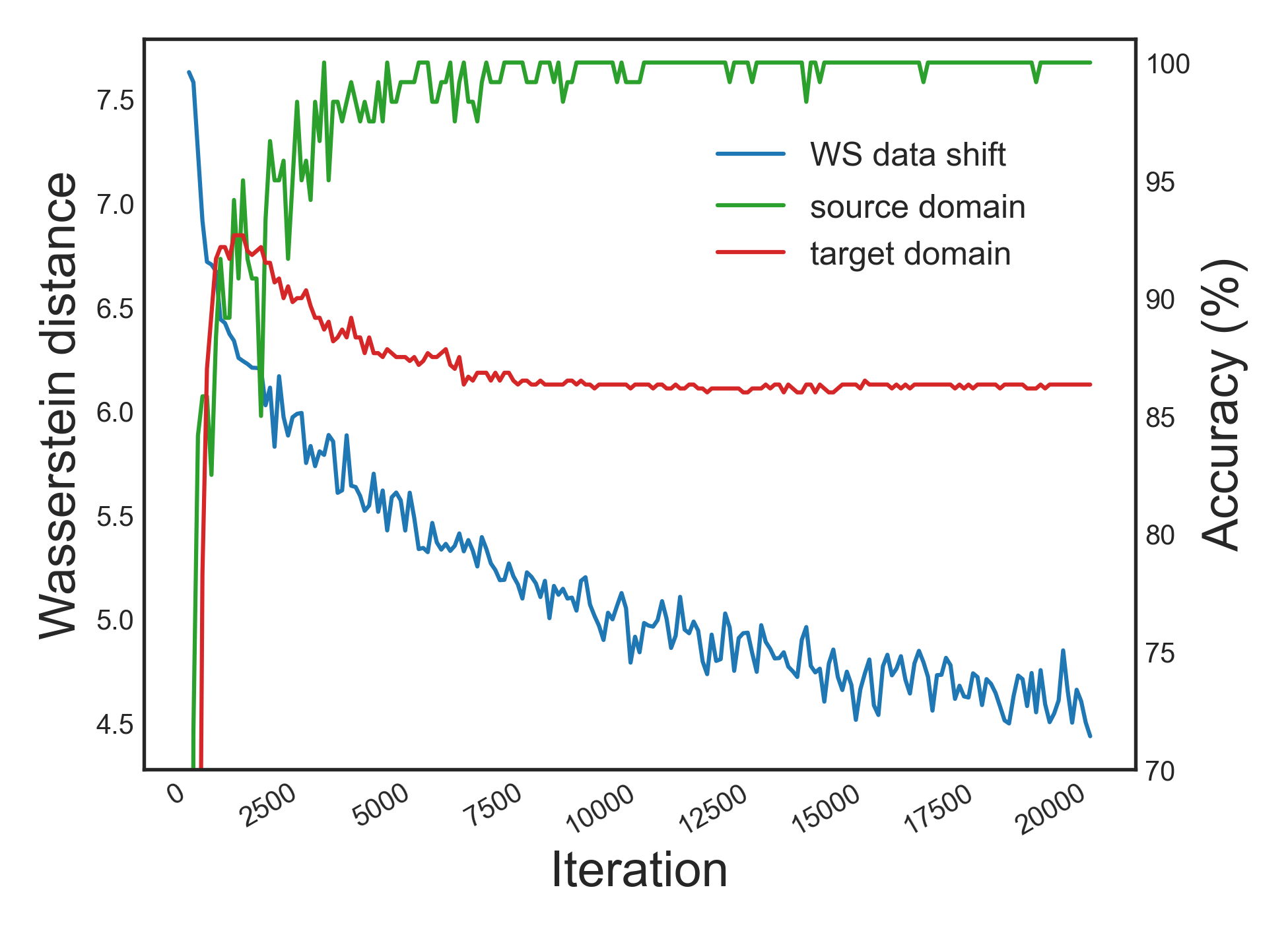}
 \caption{Illustration on target performance to show that forcing learning domain-invariant
representations can hurt the target performance. Left: A$\rightarrow$W
(Office-31). Right: P$\rightarrow$I (ImageCLEF-DA). \label{fig:target_performance_hurt}}
 
\end{figure}

\subsection{Experiments of LDROT on real-world datasets}

We conduct the experiments on the real-world datasets: Digits, Office-31,
Office-Home, and ImageCLEF-DA to compare our LDROT to the state-of-the-art
baselines, especially OT-based ones DeepJDOT \cite{damodaran2018deepjdot},
SWD \cite{chenyu2019swd}, DASPOT \cite{yujia2019onscalable}, ETD
\cite{li2020enhanceOT}, and RWOT \cite{xu2020reliable}. Due to
the space limit, we show the results for Office-31 and Office-Home
in Tables \ref{tab:office31-dataset-1} and \ref{tab:office-home dataset},
while \emph{other results, parameter settings, and network architectures}
can be found in Appendix \ref{sec:additional_exps}. The experimental
results indicate that our proposed method outperforms the baselines.
\begin{table}[t!]
\centering{}\caption{Classification accuracy (\%) on Office-Home dataset for unsupervised
DA (ResNet-50).\label{tab:office-home dataset}}
\resizebox{0.95\textwidth}{!}{\centering\setlength{\tabcolsep}{2pt}
\begin{tabular}{cccccccccccccc}
\hline 
Method  & Ar$\rightarrow$Cl  & Ar$\rightarrow$Pr  & Ar$\rightarrow$Rw  & Cl$\rightarrow$Ar  & Cl$\rightarrow$Pr  & Cl$\rightarrow$Rw  & Pr$\rightarrow$Ar  & Pr$\rightarrow$Cl  & Pr$\rightarrow$Rw  & Rw$\rightarrow$Ar  & Rw$\rightarrow$Cl  & Rw$\rightarrow$Pr  & Avg\tabularnewline
\hline 
ResNet-50 \cite{he2016resnet}  & 34.9  & 50.0  & 58.0  & 37.4  & 41.9  & 46.2  & 38.5  & 31.2  & 60.4  & 53.9  & 41.2  & 59.9  & 46.1\tabularnewline
DANN \cite{ganin2016domain}  & 43.6  & 57.0  & 67.9  & 45.8  & 56.5  & 60.4  & 44.0  & 43.6  & 67.7  & 63.1  & 51.5  & 74.3  & 56.3\tabularnewline
CDAN \cite{long2018cdan}  & 50.7  & 70.6  & 76.0  & 57.6  & 70.0  & 70.0  & 57.4  & 50.9  & 77.3  & \textbf{70.9}  & 56.7  & 81.6  & 65.8\tabularnewline
TPN \cite{pan2019tpn}  & 51.2  & 71.2  & 76.0  & 65.1  & 72.9  & 72.8  & 55.4  & 48.9  & 76.5  & \textbf{70.9}  & 53.4  & 80.4  & 66.2\tabularnewline
SAFN \cite{xu2019sfan}  & 52.0  & 71.7  & 76.3  & 64.2  & 69.9  & 71.9  & 63.7  & 51.4  & 77.1  & \textbf{70.9}  & 57.1  & 81.5  & 67.3\tabularnewline
DeepJDOT \cite{damodaran2018deepjdot}  & 48.2  & 69.2  & 74.5  & 58.5  & 69.1  & 71.1  & 56.3  & 46.0  & 76.5  & 68.0  & 52.7  & 80.9  & 64.3\tabularnewline
ETD \cite{li2020enhanceOT}  & 51.3  & 71.9  & \textbf{85.7}  & 57.6  & 69.2  & 73.7  & 57.8  & 51.2  & 79.3  & 70.2  & 57.5  & 82.1  & 67.3\tabularnewline
RWOT \cite{xu2020reliable}  & 55.2  & 72.5  & 78.0  & 63.5  & 72.5  & 75.1  & 60.2  & 48.5  & 78.9  & 69.8  & 54.8  & 82.5  & 67.6\tabularnewline
\hline 
\textbf{LDROT}  & \textbf{57.4}  & \textbf{79.6}  & 82.5  & \textbf{67.2}  & \textbf{79.8}  & \textbf{80.7}  & \textbf{66.5}  & \textbf{53.3}  & \textbf{82.5}  & \textbf{70.9}  & \textbf{57.4}  & \textbf{84.8}  & \textbf{71.9}\tabularnewline
\hline 
\end{tabular}}
 
\end{table}

\subsection{Ablation Study for LDROT}
This ablation study investigates the significant meanings of the shifting term $\mathcal{L}^{shift}=W_d(\mathbb{P}^{S}_{h^S},\mathbb{P}^{T}_{h^T})$. We observe the values of $\mathcal{L}^{shift}$ together with training accuracy of the source (\textbf{CI}) and target (\textbf{Rw}) domains in Figure~\ref{figure:ablation}. During
training, $\mathcal{L}^{shift}$ smoothly decreases, while the source and
target training accuracies increases. This implies that $h^S$ gradually improves on the source domain, while the label
shift $\mathcal{L}^{shift}$ between $h^T$ and $h^S$ gradually reduces, guiding $h^T$ to improve on the target domain. This observation demonstrates the rational of our label shift quantity $W_d(\mathbb{P}^{S}_{h^S},\mathbb{P}^{T}_{h^T})$. Due to the space limitation, other ablation
studies are presented in Appendix~\ref{subsec:Ablation-studies}.

\begin{figure}[!t]
\centering{}
\includegraphics[width=0.6\textwidth]{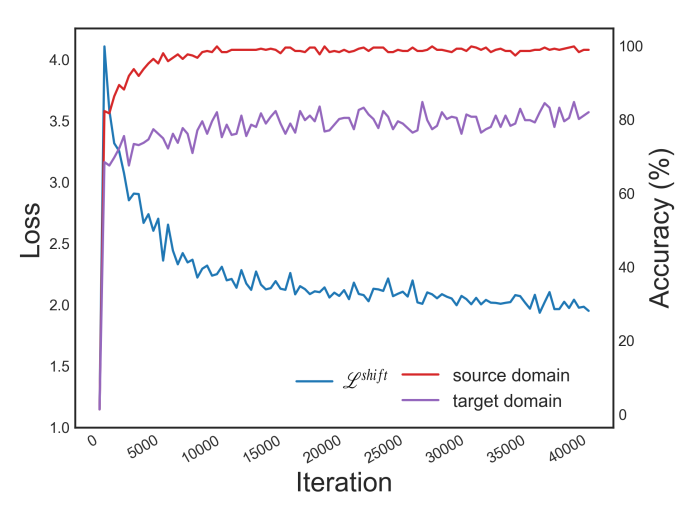}
\caption{The shifting loss $\mathcal{L}^{shift}$ and accuracy (\%) of transfer task \textbf{CI}$\to$\textbf{Rw} of \textit{Office-Home} during training.}
\label{figure:ablation}
\end{figure}
\section{Conclusion
}

In this paper, we study label shift between the source and target
domains in a general DA setting. Our main workaround is to consider
valid transformations transporting the target to source domains allowing
us to align the source and target examples and rigorously define the
label shift between two domains. We then connect the proposed label
shift to optimal transport theory and develop further theory to inspect
the properties of DA under various DA settings (e.g., closed-set,
partial-set, open-set, or universal setting). Furthermore inspired
from theory development, we propose\emph{ }\textbf{\emph{L}}\emph{abel
and }\textbf{\emph{D}}\emph{ata Shift }\textbf{\emph{R}}\emph{eduction
via }\textbf{\emph{O}}\emph{ptimal }\textbf{\emph{T}}\emph{ransport}
(LDROT) which can mitigate data and label shifts simultaneously. We
conduct comprehensive experiments to verify our theoretical findings
and compare LDROT against state-of-the-art baselines to demonstrate
its merits. 

\bibliography{nips21}

\begin{thebibliography}{10}

\bibitem{abadi2016tensorflow}
M.~Abadi, A.~Agarwal, P.~Barham, E.~Brevdo, Z.~Chen, C.~Citro, G.~S. Corrado,
  A.~Davis, J.~Dean, M.~Devin, et~al.
\newblock Tensorflow: Large-scale machine learning on heterogeneous distributed
  systems.
\newblock {\em arXiv preprint arXiv:1603.04467}, 2016.

\bibitem{pmlr-v70-arjovsky17a}
M.~Arjovsky, S.~Chintala, and L.~Bottou.
\newblock {W}asserstein generative adversarial networks.
\newblock In D.~Precup and Y.~W. Teh, editors, {\em Proceedings of the 34th
  International Conference on Machine Learning}, volume~70 of {\em Proceedings
  of Machine Learning Research}, pages 214--223. PMLR, 06--11 Aug 2017.

\bibitem{Ben-David:2010}
S.~Ben-David, J.~Blitzer, K.~Crammer, A.~Kulesza, F.~Pereira, and J.~W.
  Vaughan.
\newblock A theory of learning from different domains.
\newblock {\em Mach. Learn.}, 79(1-2):151--175, May 2010.

\bibitem{ben2012c}
S.~Ben-David and R.~Urner.
\newblock On the hardness of domain adaptation and the utility of unlabeled
  target samples.
\newblock In {\em Proceedings of the 23rd International Conference on
  Algorithmic Learning Theory}, pages 139--153, 2012.

\bibitem{ben2012b}
S.~Ben-David and R.~Urner.
\newblock Domain adaptation---can quantity compensate for quality?
\newblock {\em Annals of Mathematics and Artificial Intelligence},
  70(3):185--202, Mar. 2014.

\bibitem{cao2019_partial}
Z.~Cao, K.~You, M.~Long, J.~Wang, and Q.~Yang.
\newblock Learning to transfer examples for partial domain adaptation.
\newblock {\em CoRR}, abs/1903.12230, 2019.

\bibitem{chapelle2005semi}
O.~Chapelle and A.~Zien.
\newblock Semi-supervised classification by low density separation.
\newblock In {\em AISTATS}, volume 2005, pages 57--64. Citeseer, 2005.

\bibitem{cortes2019adaptation}
C.~Cortes, M.~Mohri, and A.~M. Medina.
\newblock Adaptation based on generalized discrepancy.
\newblock {\em The Journal of Machine Learning Research}, 20(1):1--30, 2019.

\bibitem{courty2017joint}
N.~Courty, R.~Flamary, A.~Habrard, and A.~Rakotomamonjy.
\newblock Joint distribution optimal transportation for domain adaptation.
\newblock In {\em Advances in Neural Information Processing Systems}, pages
  3730--3739, 2017.

\bibitem{damodaran2018deepjdot}
B.~B. Damodaran, B.~Kellenberger, R.~Flamary, D.~Tuia, and N.~Courty.
\newblock Deepjdot: Deep joint distribution optimal transport for unsupervised
  domain adaptation.
\newblock In V.~Ferrari, M.~Hebert, C.~Sminchisescu, and Y.~Weiss, editors,
  {\em Computer Vision - {ECCV} 2018 - 15th European Conference, Munich,
  Germany, September 8-14, 2018, Proceedings, Part {IV}}, volume 11208 of {\em
  Lecture Notes in Computer Science}, pages 467--483, 2018.

\bibitem{deng2019cluster}
Z.~Deng, Y.~Luo, and J.~Zhu.
\newblock Cluster alignment with a teacher for unsupervised domain adaptation,
  2019.

\bibitem{JS_distance}
D.~M. Endres and J.~E. Schindelin.
\newblock A new metric for probability distributions.
\newblock {\em IEEE Trans. Inf. Theor.}, 49(7):1858--1860, 2006.

\bibitem{french2018selfensembling}
G.~French, M.~Mackiewicz, and M.~Fisher.
\newblock Self-ensembling for visual domain adaptation.
\newblock In {\em International Conference on Learning Representations}, 2018.

\bibitem{Ganin2015}
Y.~Ganin and V.~Lempitsky.
\newblock Unsupervised domain adaptation by backpropagation.
\newblock In {\em Proceedings of the 32nd International Conference on
  International Conference on Machine Learning - Volume 37}, ICML'15, pages
  1180--1189, 2015.

\bibitem{ganin2016domain}
Y.~Ganin, E.~Ustinova, H.~Ajakan, P.~Germain, H.~Larochelle, F.~Laviolette,
  M.~Marchand, and V.~Lempitsky.
\newblock Domain-adversarial training of neural networks.
\newblock {\em J. Mach. Learn. Res.}, 17(1):2096--2030, jan 2016.

\bibitem{unified_labelshift_NEURIPS2020}
S.~Garg, Y.~Wu, S.~Balakrishnan, and Z.~Lipton.
\newblock A unified view of label shift estimation.
\newblock In H.~Larochelle, M.~Ranzato, R.~Hadsell, M.~F. Balcan, and H.~Lin,
  editors, {\em Advances in Neural Information Processing Systems}, volume~33,
  pages 3290--3300. Curran Associates, Inc., 2020.

\bibitem{NEURIPS2020_219e0524}
S.~Garg, Y.~Wu, S.~Balakrishnan, and Z.~Lipton.
\newblock A unified view of label shift estimation.
\newblock In H.~Larochelle, M.~Ranzato, R.~Hadsell, M.~F. Balcan, and H.~Lin,
  editors, {\em Advances in Neural Information Processing Systems}, volume~33,
  pages 3290--3300. Curran Associates, Inc., 2020.

\bibitem{stochastic_ws}
A.~Genevay, M.~Cuturi, G.~Peyr\'{e}, and F.~Bach.
\newblock Stochastic optimization for large-scale optimal transport.
\newblock In {\em Advances in Neural Information Processing Systems},
  volume~29. Curran Associates, Inc., 2016.

\bibitem{ger13}
P.~Germain, A.~Habrard, F.~Laviolette, and E.~Morvant.
\newblock A {PAC}-{B}ayesian approach for domain adaptation with specialization
  to linear classifiers.
\newblock In {\em Proceedings of the 30th International Conference on
  International Conference on Machine Learning}, ICML'13, 2013.

\bibitem{ger16}
P.~Germain, A.~Habrard, F.~Laviolette, and E.~Morvant.
\newblock A new {PAC}-{B}ayesian perspective on domain adaptation.
\newblock In {\em Proceedings of the 33rd International Conference on
  International Conference on Machine Learning - Volume 48}, ICML'16, pages
  859--868, 2016.

\bibitem{grandvalet2015entropy}
Y.~Grandvalet and Y.~Bengio.
\newblock Semi-supervised learning by entropy minimization.
\newblock In {\em Advances in Neural Information Processing Systems},
  volume~17. MIT Press, 2004.

\bibitem{he2016resnet}
K.~{He}, X.~{Zhang}, S.~{Ren}, and J.~{Sun}.
\newblock Deep residual learning for image recognition.
\newblock In {\em 2016 IEEE Conference on Computer Vision and Pattern
  Recognition (CVPR)}, pages 770--778, 2016.

\bibitem{pmlr-v89-johansson19a}
F.~D. Johansson, D.~Sontag, and R.~Ranganath.
\newblock Support and invertibility in domain-invariant representations.
\newblock In {\em Proceedings of Machine Learning Research}, volume~89, pages
  527--536, 2019.

\bibitem{kingma2014adam}
D.~Kingma and J.~Ba.
\newblock Adam: A method for stochastic optimization.
\newblock {\em arXiv preprint arXiv:1412.6980}, 2014.

\bibitem{pmlr-v139-le21a}
T.~Le, T.~Nguyen, N.~Ho, H.~Bui, and D.~Phung.
\newblock Lamda: Label matching deep domain adaptation.
\newblock In M.~Meila and T.~Zhang, editors, {\em Proceedings of the 38th
  International Conference on Machine Learning}, volume 139 of {\em Proceedings
  of Machine Learning Research}, pages 6043--6054. PMLR, 18--24 Jul 2021.

\bibitem{chenyu2019swd}
C.~Lee, T.~Batra, M.~H. Baig, and D.~Ulbricht.
\newblock Sliced {W}asserstein discrepancy for unsupervised domain adaptation.
\newblock In {\em {IEEE} Conference on Computer Vision and Pattern Recognition,
  {CVPR} 2019, Long Beach, CA, USA, June 16-20, 2019}, pages 10285--10295.
  Computer Vision Foundation / {IEEE}, 2019.

\bibitem{li2020enhanceOT}
M.~Li, Y.~Zhai, Y.~Luo, P.~Ge, and C.~Ren.
\newblock Enhanced transport distance for unsupervised domain adaptation.
\newblock In {\em IEEE/CVF Conference on Computer Vision and Pattern
  Recognition (CVPR)}, June 2020.

\bibitem{Lin-2020-Revisiting}
T.~Lin, N.~Ho, X.~Chen, M.~Cuturi, and M.~I. Jordan.
\newblock Fixed-support {W}asserstein barycenters: Computational hardness and
  fast algorithm.
\newblock In {\em NeurIPS}, pages 5368--5380, 2020.

\bibitem{Lin-2019-Efficient}
T.~Lin, N.~Ho, and M.~Jordan.
\newblock On efficient optimal transport: An analysis of greedy and accelerated
  mirror descent algorithms.
\newblock In {\em ICML}, pages 3982--3991, 2019.

\bibitem{Lin-2019-Efficiency}
T.~Lin, N.~Ho, and M.~I. Jordan.
\newblock On the efficiency of the {S}inkhorn and {G}reenkhorn algorithms and
  their acceleration for optimal transport.
\newblock {\em ArXiv Preprint: 1906.01437}, 2019.

\bibitem{pmlr-v80-lipton18a}
Z.~Lipton, Y.-X. Wang, and A.~Smola.
\newblock Detecting and correcting for label shift with black box predictors.
\newblock In J.~Dy and A.~Krause, editors, {\em Proceedings of the 35th
  International Conference on Machine Learning}, volume~80 of {\em Proceedings
  of Machine Learning Research}, pages 3122--3130. PMLR, 10--15 Jul 2018.

\bibitem{long2015}
M.~Long, Y.~Cao, J.~Wang, and M.~Jordan.
\newblock Learning transferable features with deep adaptation networks.
\newblock In F.~Bach and D.~Blei, editors, {\em Proceedings of the 32nd
  International Conference on Machine Learning}, volume~37 of {\em Proceedings
  of Machine Learning Research}, pages 97--105, Lille, France, 2015.

\bibitem{long2018cdan}
M.~Long, Z.~Cao, J.~Wang, and M.~I. Jordan.
\newblock Conditional adversarial domain adaptation.
\newblock In S.~Bengio, H.~Wallach, H.~Larochelle, K.~Grauman, N.~Cesa-Bianchi,
  and R.~Garnett, editors, {\em Advances in Neural Information Processing
  Systems 31}, pages 1640--1650. Curran Associates, Inc., 2018.

\bibitem{Mansour2009}
Y.~Mansour, M.~Mohri, and A.~Rostamizadeh.
\newblock Domain adaptation with multiple sources.
\newblock In D.~Koller, D.~Schuurmans, Y.~Bengio, and L.~Bottou, editors, {\em
  Advances in Neural Information Processing Systems 21}, pages 1041--1048.
  2009.

\bibitem{VAT}
T.~{Miyato}, S.~{Maeda}, M.~{Koyama}, and S.~{Ishii}.
\newblock Virtual adversarial training: A regularization method for supervised
  and semi-supervised learning.
\newblock {\em IEEE Transactions on Pattern Analysis and Machine Intelligence},
  41(8):1979--1993, Aug 2019.

\bibitem{pan2019tpn}
Y.~{Pan}, T.~{Yao}, Y.~{Li}, Y.~{Wang}, C.~{Ngo}, and T.~{Mei}.
\newblock Transferrable prototypical networks for unsupervised domain
  adaptation.
\newblock In {\em CVPR}, pages 2234--2242, 2019.

\bibitem{peyre2019computational}
G.~Peyr{\'e} and M.~Cuturi.
\newblock Computational optimal transport: With applications to data science.
\newblock {\em Foundations and Trends{\textregistered} in Machine Learning},
  11(5-6):355--607, 2019.

\bibitem{redko2017theoretical}
I.~Redko, A.~Habrard, and M.~Sebban.
\newblock Theoretical analysis of domain adaptation with optimal transport.
\newblock In {\em Joint European Conference on Machine Learning and Knowledge
  Discovery in Databases}, pages 737--753, 2017.

\bibitem{santambrogio2015optimal}
F.~Santambrogio.
\newblock Optimal transport for applied mathematicians.
\newblock {\em Birk{\"a}user, NY}, pages 99--102, 2015.

\bibitem{shu2018a}
R.~Shu, H.~Bui, H.~Narui, and S.~Ermon.
\newblock A {DIRT}-t approach to unsupervised domain adaptation.
\newblock In {\em International Conference on Learning Representations}, 2018.

\bibitem{sun2016coral}
B.~Sun and K.~Saenko.
\newblock Deep coral: Correlation alignment for deep domain adaptation.
\newblock In G.~Hua and H.~J{\'e}gou, editors, {\em Computer Vision -- ECCV
  2016 Workshops}, pages 443--450, Cham, 2016. Springer International
  Publishing.

\bibitem{tachet2020domain}
R.~Tachet~des Combes, H.~Zhao, Y.-X. Wang, and G.~J. Gordon.
\newblock Domain adaptation with conditional distribution matching and
  generalized label shift.
\newblock {\em Advances in Neural Information Processing Systems}, 33, 2020.

\bibitem{TzengHDS15}
E.~Tzeng, J.~Hoffman, T.~Darrell, and K.~Saenko.
\newblock Simultaneous deep transfer across domains and tasks.
\newblock {\em CoRR}, 2015.

\bibitem{tzeng2017ADDA}
E.~{Tzeng}, J.~{Hoffman}, K.~{Saenko}, and T.~{Darrell}.
\newblock Adversarial discriminative domain adaptation.
\newblock In {\em 2017 IEEE Conference on Computer Vision and Pattern
  Recognition (CVPR)}, pages 2962--2971, 2017.

\bibitem{vanDerMaaten2008}
L.~van~der Maaten and G.~Hinton.
\newblock Visualizing data using {t-SNE}.
\newblock {\em Journal of Machine Learning Research}, 9:2579--2605, 2008.

\bibitem{villani2008optimal}
C.~Villani.
\newblock {\em Optimal Transport: Old and New}.
\newblock Grundlehren der mathematischen Wissenschaften. Springer Berlin
  Heidelberg, 2008.

\bibitem{yujia2019onscalable}
Y.~Xie, M.~Chen, H.~Jiang, T.~Zhao, and H.~Zha.
\newblock On scalable and efficient computation of large scale optimal
  transport.
\newblock In K.~Chaudhuri and R.~Salakhutdinov, editors, {\em Proceedings of
  the 36th International Conference on Machine Learning}, volume~97 of {\em
  Proceedings of Machine Learning Research}, pages 6882--6892, Long Beach,
  California, USA, 09--15 Jun 2019. PMLR.

\bibitem{xu2019sfan}
R.~{Xu}, G.~{Li}, J.~{Yang}, and L.~{Lin}.
\newblock Larger norm more transferable: An adaptive feature norm approach for
  unsupervised domain adaptation.
\newblock In {\em 2019 IEEE/CVF International Conference on Computer Vision
  (ICCV)}, pages 1426--1435, 2019.

\bibitem{xu2020reliable}
R.~Xu, P.~Liu, L.~Wang, C.~Chen, and J.~Wang.
\newblock Reliable weighted optimal transport for unsupervised domain
  adaptation.
\newblock In {\em CVPR 2020}, June 2020.

\bibitem{zhang19_theory}
Y.~Zhang, Y.~Liu, M.~Long, and M.~I. Jordan.
\newblock Bridging theory and algorithm for domain adaptation.
\newblock {\em CoRR}, abs/1904.05801, 2019.

\bibitem{zhang2019symsnet}
Y.~Zhang, H.~Tang, K.~Jia, and M.~Tan.
\newblock Domain-symmetric networks for adversarial domain adaptation.
\newblock {\em 2019 IEEE/CVF Conference on Computer Vision and Pattern
  Recognition (CVPR)}, pages 5026--5035, 2019.

\bibitem{zhao2019learning}
H.~Zhao, R.~T. Des~Combes, K.~Zhang, and G.~Gordon.
\newblock On learning invariant representations for domain adaptation.
\newblock In {\em International Conference on Machine Learning}, pages
  7523--7532, 2019.

\end{thebibliography}
\bibliographystyle{abbrv}




\appendix

\clearpage
\appendix

\begin{center}
{\bf{\LARGE{Supplement to ``On Label Shift in Domain Adaptation \\
 via Wasserstein Distance''}}}
\end{center}

In this appendix, we collect several proofs and remaining materials
that are deferred from the main paper. 
\begin{itemize}
\item In Appendix \ref{sec:Notations}, we present notations and definitions
that are deferred from the main text including optimal transport and
entropic regularized Wasserstein distance. 
\item In Appendix \ref{sec:proof_key_results}, we present proofs of all
the key results. 
\item In Appendix \ref{sec:proof_remaining_results}, we present proofs
of the remaining results, including the derivation of the dual form
of entropic regularized optimal transport. 
\item In Appendix \ref{sec:additional_exps}, we provide training specification
and additional experimental results. 
\end{itemize}

\section{Notations and definitions \label{sec:Notations}}

In this appendix, we provide notations, notions, and definitions that
are used in the main text. 

\subsection{Optimal Transport}

Given two probability measures $\left(\mathcal{X},\mathbb{P}\right)$
and $\left(\mathcal{Y},Q\right)$ and a cost function or ground metric
$d\left(\bx,\by\right)$, under the conditions stated in the below
theorem (cf. Theorems 1.32 and 1.33 \cite{santambrogio2015optimal}),
the \emph{primal form} of Wasserstein (WS) distance \cite{santambrogio2015optimal}
is defined as: 
\begin{align}
\text{\ensuremath{\mathcal{W}}}_{d}\left(\mathbb{P},\mathbb{Q}\right)= & \inf_{T:T\#\mathbb{P}=\mathbb{Q}}\mathbb{E}_{\bx\sim\mathbb{P}}\left[d\left(\bx,T\left(\bx\right)\right)\right],\label{eq:primal_form-1}\\
\text{\ensuremath{\mathcal{W}}}_{d}\left(\mathbb{P},\mathbb{Q}\right)= & \inf_{\gamma\in\Gamma\left(\mathbb{P},\mathbb{Q}\right)}\mathbb{E}_{\left(\bx,\by\right)\sim\gamma}\left[d\left(\bx,\by\right)\right],\label{eq:primal_form}
\end{align}
where $\Gamma\left(\mathbb{P},\mathbb{Q}\right)$ specifies the set
of joint distributions over $\mathcal{X}\times\mathcal{Y}$ which
admits $\mathbb{P}$ and $\mathbb{Q}$ as marginals. The first definition
is known as Monge problem (MP), while the second one is known as Kantorovich
problem (KP). We now restate the sufficient conditions for which (MP)
and (KP) are equivalent (cf. Theorems 1.32 and 1.33 \cite{santambrogio2015optimal}). 
\begin{thm}
\label{thm:equi_WS-1}If $\mathcal{X}$ and $\mathcal{Y}$ are compact,
Polish metric spaces, $\mathbb{P}$ and $\mathbb{Q}$ are atomless,
and $d$ is a lower semi-continuous function, then (KP) is equivalent
to (MP) in the sense that two infima are equal. 
\end{thm}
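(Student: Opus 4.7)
The plan is to establish the two inequalities $\inf(\mathrm{KP})\le\inf(\mathrm{MP})$ and $\inf(\mathrm{MP})\le\inf(\mathrm{KP})$ separately. The first direction is immediate: given any admissible transport map $T$ with $T\#\mathbb{P}=\mathbb{Q}$, the pushforward $\gamma_T:=(\mathrm{id},T)\#\mathbb{P}$ is a probability measure on $\mathcal{X}\times\mathcal{Y}$ whose marginals are $\mathbb{P}$ and $\mathbb{Q}$, so $\gamma_T\in\Gamma(\mathbb{P},\mathbb{Q})$. A change-of-variables shows $\mathbb{E}_{(\bx,\by)\sim\gamma_T}[d(\bx,\by)]=\mathbb{E}_{\bx\sim\mathbb{P}}[d(\bx,T(\bx))]$, so the Kantorovich infimum is at most the Monge infimum; no hypothesis beyond measurability is needed here.

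The hard direction is $\inf(\mathrm{MP})\le\inf(\mathrm{KP})$, and this is where the atomless hypothesis on $\mathbb{P}$, the Polish/compactness hypothesis, and the lower semi-continuity of $d$ all enter. The approach I would take is a \emph{density of Monge plans among Kantorovich plans} argument. Fix any $\gamma\in\Gamma(\mathbb{P},\mathbb{Q})$; I will construct a sequence of maps $T_n\colon\mathcal{X}\to\mathcal{Y}$ with $T_n\#\mathbb{P}=\mathbb{Q}$ such that the deterministic plans $\gamma_n:=(\mathrm{id},T_n)\#\mathbb{P}$ converge weakly to $\gamma$. The idea is to partition $\mathcal{Y}$ into small Borel cells $B_1^{(n)},\dots,B_{k_n}^{(n)}$ of diameter at most $1/n$, let $A_j^{(n)}$ be the $\mathbb{P}$-measurable sets obtained by disintegrating $\gamma$ with respect to its first marginal and routing mass to cell $B_j^{(n)}$, and then, using that $\mathbb{P}$ is atomless, further subdivide each $A_j^{(n)}$ into pieces of the exact $\mathbb{P}$-masses needed to match the marginal constraints on $B_j^{(n)}\cap\{\gamma$-fibers$\}$; Lyapunov's convexity theorem or the standard isomorphism between atomless Polish probability spaces and $([0,1],\mathrm{Leb})$ provides the required measurable selection. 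Define $T_n$ piecewise so that it sends each sub-piece into the corresponding cell, making sure $T_n\#\mathbb{P}=\mathbb{Q}$ by construction.

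Once the deterministic plans $\gamma_n\to\gamma$ weakly, I pass to the limit in the cost functional. Compactness of $\mathcal{X}\times\mathcal{Y}$ implies that $d$ is bounded below, and lower semi-continuity of $d$ gives lower semi-continuity of $\gamma\mapsto\int d\,d\gamma$ with respect to weak convergence (by monotone approximation of $d$ from below by bounded continuous functions, e.g.\ $d_k:=\min(d,k)$ smoothed via inf-convolution). Therefore
\begin{equation*}
\int d\,d\gamma \;\le\; \liminf_{n\to\infty}\int d\,d\gamma_n \;=\;\liminf_{n\to\infty}\mathbb{E}_{\mathbb{P}}[d(\bx,T_n(\bx))]\;\ge\;\inf(\mathrm{MP}).
\end{equation*}
Taking the infimum over $\gamma\in\Gamma(\mathbb{P},\mathbb{Q})$ on the left yields $\inf(\mathrm{KP})\ge\inf(\mathrm{MP})$, completing the equality.

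The main obstacle is clearly the approximation step: producing maps $T_n$ whose deterministic couplings approach an arbitrary $\gamma$ while exactly respecting both marginal constraints. Naive discretization of $\gamma$ will typically produce plans that match $\mathbb{Q}$ only approximately, and some care is needed to enforce the marginal $\mathbb{Q}$ exactly at each step (otherwise the $T_n$ are not admissible for MP). The atomless hypothesis on $\mathbb{P}$ is precisely what allows an exact mass-splitting via Lyapunov's theorem or the $[0,1]$-isomorphism; without it, one can construct counterexamples (e.g.\ $\mathbb{P}=\delta_x$, $\mathbb{Q}=\tfrac12\delta_{y_1}+\tfrac12\delta_{y_2}$) showing MP has no feasible maps while KP does, so its role must be used cleanly. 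The lower semi-continuity hypothesis is used only in the final weak-limit passage and is standard.
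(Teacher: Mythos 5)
The paper does not actually prove this statement; it is cited verbatim from Santambrogio (Theorems 1.32 and 1.33), so there is no internal proof to compare against. Evaluating your proof on its own terms, the easy direction $\inf(\mathrm{KP})\le\inf(\mathrm{MP})$ is fine, and the general shape of the hard direction (discretize, approximate an arbitrary plan $\gamma$ by deterministic couplings $\gamma_n=(\mathrm{id},T_n)\#\mathbb{P}$ using atomlessness to split mass exactly) is the standard one. But there is a genuine logical gap in the final step, and the hypotheses you invoke cannot close it.

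Your concluding chain reads $\int d\,d\gamma \le \liminf_n\int d\,d\gamma_n$ and $\int d\,d\gamma_n\ge\inf(\mathrm{MP})$; these two facts point in the same direction and imply nothing about the relation between $\int d\,d\gamma$ and $\inf(\mathrm{MP})$. To get $\inf(\mathrm{MP})\le\int d\,d\gamma$ you need the \emph{opposite} estimate, namely $\limsup_n\int d\,d\gamma_n\le\int d\,d\gamma$, i.e.\ an \emph{upper} bound on the cost of the approximating maps. Lower semicontinuity of the cost with respect to weak convergence is exactly the wrong tool here: it controls $\int d\,d\gamma$ from above by the approximants, not the other way around. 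In Santambrogio's argument the upper control on $\int d\,d\gamma_n$ comes from \emph{uniform continuity} of the cost on the compact product space: when $T_n$ sends each piece $A_j^{(n)}$ into a cell $B_j^{(n)}$ of diameter $\le 1/n$, one shows that for $\mathbb{P}$-a.e.\ $\bx$ the pair $(\bx,T_n(\bx))$ lies within distance $1/n$ of a pair $(\bx,\by)$ charged by $\gamma$, and uniform continuity of $d$ then gives $|d(\bx,T_n(\bx))-d(\bx,\by)|\le\omega(1/n)$, hence $\int d\,d\gamma_n\le\int d\,d\gamma+\omega(1/n)$. With $d$ merely lower semicontinuous this step fails — the constructed maps could systematically land on points where $d$ jumps up, and nothing bounds $\int d\,d\gamma_n$ in terms of $\int d\,d\gamma$. (Indeed, Santambrogio's Theorems 1.32 and 1.33 are stated for \emph{continuous} cost; the paper's restatement with ``lower semi-continuous'' is already looser than the cited source, and your proof inherits that looseness fatally.) To repair the argument you must (a) assume $d$ continuous (hence uniformly continuous on the compact product), and (b) replace the l.s.c.\ lower bound with the uniform-continuity upper bound $\limsup_n\int d\,d\gamma_n\le\int d\,d\gamma$; then taking the infimum over admissible $\gamma$ gives $\inf(\mathrm{MP})\le\inf(\mathrm{KP})$ as required.
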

In addition, under some mild conditions as stated in Theorem 5.10
in \cite{villani2008optimal}, we can replace the primal form by
its corresponding dual form 
\begin{equation}
\mathcal{W}_{d}\left(\mathbb{P},\mathbb{Q}\right)=\max_{\phi\in\mathcal{L}_{1}\left(\Omega,\mathbb{P}\right)}\left\{ \mathbb{E}_{\mathbb{Q}}\left[\phi^{c}\left(\bx\right)\right]+\mathbb{E}_{\mathbb{P}}\left[\phi\left(\by\right)\right]\right\} ,\label{eq:dual_form}
\end{equation}
where $\mathcal{L}_{1}\left(\Omega,\mathbb{P}\right):=\left\{ \psi:\int_{\Omega}\left|\psi\left(\by\right)\right|\mathrm{d}\mathbb{P}\left(\by\right)<\infty\right\} $
and $\phi^{c}$ is the $c$-transform of function $\phi$ defined
as $\phi^{c}\left(\bx\right):=\min_{\by}\left\{ d\left(\bx,\by\right)-\phi\left(\by\right)\right\} .$

\subsection{Entropic Regularized Duality \label{subsec:Entropic-Regularized-Duality}}

To enable the application of optimal transport in machine learning
and deep learning, Genevay et al.~\cite{stochastic_ws}developed an entropic regularized
dual form, which had been shown to have favorable computational complexities~\cite{Lin-2019-Efficient, Lin-2019-Efficiency, Lin-2020-Revisiting}. First, they proposed to add an
entropic regularization term to the primal form in (\ref{eq:primal_form})
\begin{equation}
\mathcal{W}_{d}^{\epsilon}\left(\mathbb{P},\mathbb{Q}\right):=\min_{\gamma\in\Gamma\left(\mathbb{Q},\mathbb{P}\right)}\left\{ \mathbb{E}_{\left(\bx,\by\right)\sim\gamma}\left[d\left(\bx,\by\right)\right]+\epsilon D_{KL}\left(\gamma\Vert\mathbb{P}\otimes\mathbb{Q}\right)\right\} ,\label{eq:entropic_primal}
\end{equation}
where $\epsilon$ is the regularization rate, $D_{KL}\left(\cdot\Vert\cdot\right)$
is the Kullback-Leibler (KL) divergence, and $\mathbb{P}\otimes\mathbb{Q}$
represents the specific coupling in which $\mathbb{Q}$ and $\mathbb{P}$
are independent. Note that when $\epsilon\goto0$, $\mathcal{W}_{d}^{\epsilon}\left(\mathbb{P},\mathbb{Q}\right)$
approaches $\mathcal{W}_{d}\left(\mathbb{P},\mathbb{Q}\right)$ and
the optimal transport plan $\gamma_{\epsilon}^{*}$ of (\ref{eq:entropic_primal})
also weakly converges to the optimal transport plan $\gamma^{*}$
of (\ref{eq:primal_form}). In practice, we set $\epsilon$ to be
a small positive number, hence $\gamma_{\epsilon}^{*}$ is very close
to $\gamma^{*}$. Second, using the Fenchel-Rockafellar theorem, they
obtained the following dual form w.r.t. the potential $\phi$

\begin{align}
\mathcal{W}_{d}^{\epsilon}\left(\mathbb{P},\mathbb{Q}\right) & =\max_{\phi}\left\{ \int\phi_{\epsilon}^{c}\left(\bx\right)\mathrm{d}\mathbb{Q}\left(\bx\right)+\int\phi\left(\by\right)\mathrm{d}\mathbb{P}\left(\by\right)\right\} =\max_{\phi}\left\{ \mathbb{E}_{\mathbb{Q}}\left[\phi_{\epsilon}^{c}\left(\bx\right)\right]+\mathbb{E}_{\mathbb{P}}\left[\phi\left(\by\right)\right]\right\} ,\label{eq:entropic_dual}
\end{align}
where $\phi_{\epsilon}^{c}\left(\bx\right):=-\epsilon\log\left(\mathbb{E}_{\mathbb{P}}\left[\exp\left\{ \frac{-d\left(\bx,\by\right)+\phi\left(\by\right)}{\epsilon}\right\} \right]\right)$.

\subsection{Preliminaries}

\paragraph{Notions.}

For a positive integer $n$ and a real number $p\in[1,\infty)$, $[n]$
indicates the set $\{1,2,\ldots,n\}$ while $\|x\|_{p}$ denotes the
$l_{p}$-norm of a vector $x\in\mathbb{R}^{n}$. Let $\mathcal{Y}^{S}$
and $\mathcal{Y}^{T}$ be the label sets of the source and target
domains that have $M^{S}:=\left|\mathcal{Y}^{S}\right|$ and $M^{T}:=\left|\mathcal{Y}^{T}\right|$
elements, respectively. Meanwhile, $\mathcal{Y}=\mathcal{Y}^{S}\cup\mathcal{Y}^{T}$
stands for the label set of both domains which has the cardinality
of $M:=\left|\mathcal{Y}\right|$. Subsequently, we denote $\mathcal{Y}_{\Delta}$,
$\mathcal{Y}_{\Delta}^{S}$, and $\mathcal{Y}_{\Delta}^{T}$ as the
simplices corresponding to $\mathcal{Y},\mathcal{Y}^{S}$, and $\mathcal{Y}^{T}$
respectively. Finally, let $f^{S}\left(\cdot\right)\in\mathcal{Y}_{\simplex}$
and $f^{T}\left(\cdot\right)\in\mathcal{Y}_{\simplex}$ be the labeling
functions of the source and target domains, respectively, by filling
zeros for the missing labels.

We first examine a general supervised learning setting. Consider a
hypothesis $h$ in a hypothesis class $\mathcal{H}$ and a labeling
function $f$ (i.e., $f\left(\cdot\right)\in\mathcal{Y}_{\triangle}$
and $h\left(\cdot\right)\in\mathcal{Y}_{\triangle}$ where $\mathcal{Y}_{\triangle}:=\left\{ \bpi\in\mathbb{R}^{M}:\norm{\bpi}_{1}=1\,\text{and \ensuremath{\bpi\geq\bzero}}\right\} $
with the number of classes $M$). Let $d_{\mathcal{Y}}$ be a metric
over $\mathcal{Y}_{\triangle}$. We further define the general loss
of the hypothesis $h$ w.r.t. the data distribution $\mathbb{P}$
and the labeling function $f$ as: $\mathcal{L}\left(h,f,\mathbb{P}\right):=\int d_{\mathcal{Y}}\left(h\left(\bx\right),f\left(\bx\right)\right)\mathrm{d}\mathbb{P}\left(\bx\right).$

Next we consider a domain adaptation setting in which we have a source
space $\mathcal{X}^{S}$ endowed with a distribution $\mathbb{P}^{S}$
and the density function $p^{s}\left(\bx\right)$ and a target space
$\mathcal{X}^{T}$ endowed with a distribution $\mathbb{P}^{T}$and
the density function $p^{t}\left(\bx\right)$. Let $f^{S}\left(\cdot\right)\in\mathcal{Y}_{\simplex}$
and $f^{T}\left(\cdot\right)\in\mathcal{Y}_{\simplex}$ be the labeling
functions of the source and target domains respectively. It appears
that $p^{S}\left(\bx,y\right)=p^{S}\left(\bx\right)f^{S}\left(\bx,y\right)$
and $p^{T}\left(\bx,y\right)=p^{T}\left(\bx\right)f^{T}\left(\bx,y\right)$
are the source and target joint distributions of pairs $\left(\bx,y\right)$
respectively. Note that for a categorical label $y\in\left\{ 1,...,M\right\} $,
$f^{S}\left(\bx,y\right)$ and $f^{T}\left(\bx,y\right)$ represent
the $y-$th element of the prediction probabilities $f^{S}\left(\bx\right)$
and $f^{T}\left(\bx\right)$.

\section{Proofs of all the key results \label{sec:proof_key_results}}

In this appendix, we provide useful lemmas and proofs for main results
in the paper. 

\subsection{Useful Lemmas}
\begin{lem}
\label{lem:my_gluing} If $\gamma\in\Gamma\left(\mathbb{P}_{f^{S}},\mathbb{P}_{f^{T}}\right)$,
there exists $\gamma'\in\Gamma\left(\mathbb{P}^{S},\mathbb{P}^{T}\right)\text{ such that }\left(f^{S},f^{T}\right)_{\#}\gamma'=\gamma$. 
\end{lem}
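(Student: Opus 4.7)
The plan is to construct $\gamma'$ explicitly using the disintegration theorem. Since $\mathbb{P}^S$ and $\mathbb{P}^T$ live on Polish spaces (as assumed in Section~\ref{sec:label_shift_optimal_transport}), and $f^S, f^T$ are measurable maps into $\mathcal{Y}_\Delta$, we may disintegrate $\mathbb{P}^S$ with respect to $f^S$ to obtain a family of probability kernels $\{\mathbb{P}^S_{y}\}_{y\in\mathcal{Y}_\Delta}$ satisfying $\mathbb{P}^S_{y}((f^S)^{-1}(\{y\})) = 1$ for $\mathbb{P}_{f^S}$-a.e.\ $y$ and
\[
\mathbb{P}^S(A) \;=\; \int_{\mathcal{Y}_\Delta} \mathbb{P}^S_{y}(A)\,d\mathbb{P}_{f^S}(y)
\]
for every Borel $A\subseteq\mathcal{X}^S$. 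An analogous family $\{\mathbb{P}^T_{y}\}$ is obtained for $(\mathbb{P}^T,f^T)$.

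Next I would define $\gamma'$ on $\mathcal{X}^S\times\mathcal{X}^T$ by
\[
\gamma'(A\times B) \;:=\; \int_{\mathcal{Y}_\Delta\times\mathcal{Y}_\Delta} \mathbb{P}^S_{y^S}(A)\,\mathbb{P}^T_{y^T}(B)\,d\gamma(y^S,y^T),
\]
extended to all Borel sets in the product by Carathéodory's extension theorem. Checking the two marginals is then a direct computation: integrating $B=\mathcal{X}^T$ reduces the integrand to $\mathbb{P}^S_{y^S}(A)$ and, since the $\mathcal{Y}_\Delta$-marginal of $\gamma$ in the first coordinate is $\mathbb{P}_{f^S}$, the disintegration identity yields $\mathbb{P}^S(A)$; symmetrically for the target.

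For the push-forward property, I would compute $(f^S,f^T)\#\gamma'$ on product Borel sets $U\times V\subseteq\mathcal{Y}_\Delta\times\mathcal{Y}_\Delta$:
\[
(f^S,f^T)\#\gamma'(U\times V) \;=\; \int \mathbb{P}^S_{y^S}\!\bigl((f^S)^{-1}(U)\bigr)\,\mathbb{P}^T_{y^T}\!\bigl((f^T)^{-1}(V)\bigr)\,d\gamma(y^S,y^T).
\]
The key observation is that, since $\mathbb{P}^S_{y^S}$ is concentrated on $(f^S)^{-1}(\{y^S\})$, one has $\mathbb{P}^S_{y^S}((f^S)^{-1}(U)) = \mathbf{1}_U(y^S)$ for $\mathbb{P}_{f^S}$-a.e.\ $y^S$, and likewise for the target. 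Plugging these in gives $\int \mathbf{1}_U(y^S)\mathbf{1}_V(y^T)\,d\gamma = \gamma(U\times V)$, so the two measures agree on the generating $\pi$-system and hence on the product Borel $\sigma$-algebra.

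The only genuine obstacle is ensuring that the disintegrations exist and that $y\mapsto\mathbb{P}^S_y(A)$ and $y\mapsto\mathbb{P}^T_y(B)$ are measurable so that the defining integral for $\gamma'$ is well posed; both follow from the standard disintegration theorem on Polish spaces (e.g., Theorem~5.3.1 in Ambrosio--Gigli--Savaré), given the atomless Polish hypotheses on $\mathbb{P}^S$ and $\mathbb{P}^T$ already imposed in Section~\ref{sec:label_shift_optimal_transport}. Everything else is a routine verification.
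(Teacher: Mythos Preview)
Your argument is correct. The disintegration-based construction you give is sound: the conditional kernels exist on Polish spaces, the marginal checks go through as you indicate, and the concentration property $\mathbb{P}^S_{y^S}((f^S)^{-1}(\{y^S\}))=1$ does give $\mathbb{P}^S_{y^S}((f^S)^{-1}(U))=\mathbf{1}_U(y^S)$ almost everywhere, which is exactly what is needed to recover $\gamma$ as the push-forward.

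The paper takes a slightly different route: it invokes the gluing lemma as a black box. Concretely, it forms the deterministic couplings $\gamma^S=\mathrm{law}(\bx^S,f^S(\bx^S))$ and $\gamma^T=\mathrm{law}(\bx^T,f^T(\bx^T))$, glues the chain $\gamma^S$--$\gamma$--$\gamma^T$ into a single measure $\mu$ on $\mathcal{X}^S\times\mathcal{Y}_\Delta\times\mathcal{Y}_\Delta\times\mathcal{X}^T$, and then projects onto the first and last coordinates to obtain $\gamma'$. The identification $(f^S,f^T)_\#\gamma'=\gamma$ follows because the middle coordinates of $\mu$ are deterministic functions of the outer ones. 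Your approach is essentially an unpacked version of the same idea---the gluing lemma is itself typically proved via disintegration---so what you gain is explicitness and self-containment, while the paper's version is shorter by outsourcing the measure-theoretic work to a standard citation.
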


\begin{proof}
Let denote $\gamma^{S}$ as the joint distribution of the samples
$\left(\bx^{S},f^{S}\left(\bx^{S}\right)\right)$ where $\bx^{S}\sim\mathbb{P}^{S}$
and $\gamma^{T}$ as the joint distribution of the samples $\left(\bx^{T},f^{T}\left(\bx^{T}\right)\right)$
where $\bx^{T}\sim\mathbb{P}^{T}$. It is obvious that $\gamma^{S}$
is a joint distribution of $\mathbb{P}^{S}$ and $\mathbb{P}^{f^{S}}$
and $\gamma^{T}$ is a joint distribution of $\mathbb{P}^{T}$ and
$\mathbb{P}^{f^{T}}$. According to the gluing lemma (see Lemma 5.5
in \cite{santambrogio2015optimal}), there exists a joint distribution
$\mu$ such that for any draw $\left(\bx^{S},\btau^{S},\btau^{T},\bx^{T}\right)\sim\mu$
then $\left(\bx^{S},\btau^{S}\right)\sim\gamma^{S}$, $\left(\btau^{S},\btau^{T}\right)\sim\gamma$,
and $\left(\bx^{T},\btau^{T}\right)\sim\gamma^{T}$.

Let $\gamma'$ be the distribution of samples $\left(\bx^{S},\bx^{T}\right)$
(i.e., the projection of $\mu$ onto the first and fourth dimensions).
This follows that $\gamma'$ is a joint distribution of $\mathbb{P}^{S}$
and $\mathbb{P}^{T}$ (i.e., $\gamma'\in\Gamma\left(\mathbb{P}^{S},\mathbb{P}^{T}\right)$).
In addition, since $\left(\bx^{S},\btau^{S}\right)\sim\gamma^{S}$,
$\btau^{S}=f^{S}\left(\bx^{S}\right)$, since $\left(\bx^{T},\btau^{T}\right)\sim\gamma^{T}$,
$\btau^{T}=f^{T}\left(\bx^{T}\right)$, and $\left(\btau^{S},\btau^{T}\right)\sim\gamma$.
Therefore, we reach $\left(f^{S},f^{T}\right)_{\#}\gamma'=\gamma$.

We note that in the above proof, we employ a general form of the gluing
lemma for 4 distributions and spaces. The proof is mainly based on
the gluing lemma for 3 distributions and spaces and trivial. 
\end{proof}
\begin{lem}
\label{lem:proper_metric} Let $d_{Z}$ be defined with respect to
the family $\mathcal{H}$ as follows: 
\begin{align*}
d_{Z}(z_{1},z_{2})=\mathop{\sup}\limits _{h\in\mathcal{H}}d_{Y}(h(z_{1}),h(z_{2})),
\end{align*}
where $z_{1}$ and $z_{2}$ lie on the latent space $\mathcal{Z}$.
For any $\bz_{1}$ and $\bz_{2}$, if $h\left(\bz_{1}\right)=h\left(\bz_{2}\right),\forall h\in\mathcal{H}$
leads to $\bz_{1}=\bz_{2}$, then $d_{Z}$ is a proper metric. 
\end{lem}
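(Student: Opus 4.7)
The plan is to verify the four standard axioms of a metric — non-negativity, symmetry, the identity of indiscernibles, and the triangle inequality — for the functional $d_Z$ on $\mathcal{Z}$, exploiting the fact that $d_Y$ is already a metric on $\mathcal{Y}_\Delta$ and that the defining operation is a pointwise supremum over $\mathcal{H}$. Three of the four properties will transfer from $d_Y$ essentially for free; only one direction of the identity of indiscernibles genuinely uses the hypothesis on $\mathcal{H}$.

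First I would handle non-negativity and symmetry: for every $h \in \mathcal{H}$, $d_Y(h(\bz_1),h(\bz_2)) \geq 0$ and $d_Y(h(\bz_1),h(\bz_2)) = d_Y(h(\bz_2),h(\bz_1))$, so taking supremum over $h$ gives $d_Z(\bz_1,\bz_2) \geq 0$ and $d_Z(\bz_1,\bz_2) = d_Z(\bz_2,\bz_1)$. Next I would handle the triangle inequality by fixing $h \in \mathcal{H}$ and writing
\begin{align*}
d_Y(h(\bz_1),h(\bz_3)) &\leq d_Y(h(\bz_1),h(\bz_2)) + d_Y(h(\bz_2),h(\bz_3)) \\
&\leq d_Z(\bz_1,\bz_2) + d_Z(\bz_2,\bz_3),
\end{align*}
since the right-hand side bounds each summand by definition; taking the supremum over $h$ on the left yields $d_Z(\bz_1,\bz_3) \leq d_Z(\bz_1,\bz_2) + d_Z(\bz_2,\bz_3)$.

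Finally, for the identity of indiscernibles I would argue in both directions. If $\bz_1 = \bz_2$, then $h(\bz_1) = h(\bz_2)$ for every $h$, so each term in the supremum vanishes and $d_Z(\bz_1,\bz_2) = 0$. Conversely, if $d_Z(\bz_1,\bz_2) = 0$, then since the supremum of non-negative numbers is zero, $d_Y(h(\bz_1),h(\bz_2)) = 0$ for every $h \in \mathcal{H}$; because $d_Y$ is a metric on $\mathcal{Y}_\Delta$, this forces $h(\bz_1) = h(\bz_2)$ for all $h \in \mathcal{H}$, and the stated assumption on $\mathcal{H}$ then yields $\bz_1 = \bz_2$. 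This last implication is the only place where the hypothesis is used, and it is the single non-trivial step of the proof — everything else is a routine consequence of $d_Y$ being a metric together with the monotonicity of supremum.
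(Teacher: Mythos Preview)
Your proof is correct and follows essentially the same route as the paper's own proof: both verify non-negativity, symmetry, the triangle inequality (via the triangle inequality for $d_Y$ and the sub-additivity of the supremum), and use the separating hypothesis on $\mathcal{H}$ exactly once to obtain $d_Z(\bz_1,\bz_2)=0\Rightarrow\bz_1=\bz_2$. The only cosmetic difference is that you fix $h$ and bound each summand before taking the supremum, whereas the paper takes the supremum of the sum and then splits it; these are equivalent.
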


\begin{proof}
First, $d_{Z}\left(\bz_{1},\bz_{2}\right)\geq0$ and $d_{Z}\left(\bz_{1},\bz_{2}\right)=0$
means $h\left(\bz_{1}\right)=h\left(\bz_{2}\right),\forall h\in\mathcal{H}$,
which leads to $\bz_{1}=\bz_{2}$. Second, it is obvious that $d_{Z}\left(\bz_{1},\bz_{2}\right)=d_{Z}\left(\bz_{2},\bz_{1}\right),\forall\bz_{1},\bz_{2}$.

Given any $\bz_{1},\bz_{2},\bz_{3}$, we have 
\begin{align*}
d_{Z}\left(\bz_{1},\bz_{3}\right) & =\sup_{h\in\mathcal{H}}d_{Y}\left(h\left(\bz_{1}\right),h\left(\bz_{3}\right)\right)\leq\sup_{h\in\mathcal{H}}\left(d_{Y}\left(h\left(\bz_{1}\right),h\left(\bz_{2}\right)\right)+d_{Y}\left(h\left(\bz_{2}\right),h\left(\bz_{3}\right)\right)\right)\\
 & \leq\sup_{h\in\mathcal{H}}d_{Y}\left(h\left(\bz_{1}\right),h\left(\bz_{2}\right)\right)+\sup_{h\in\mathcal{H}}d_{Y}\left(h\left(\bz_{2}\right),h\left(\bz_{3}\right)\right)\\
 & =d_{Z}\left(\bz_{1},\bz_{2}\right)+d_{Z}\left(\bz_{2},\bz_{3}\right).
\end{align*}
Therefore, $d_{Z}$ is a proper metric. 
\end{proof}

\subsection{Proof and Corollary of Proposition 2}
\begin{proof}
\textbf{(i)} First, we will prove that $\mathcal{W}_{d_{Y}}\left(\mathbb{P}_{f^{S}}^{S},\mathbb{P}_{f^{T}}^{T}\right)\geq\mathcal{L}S\left(S,T\right)$.
Let $H:\text{supp}\left(\mathbb{P}_{f^{T}}^{T}\right)\goto\text{supp}\left(\mathbb{P}_{f^{S}}^{S}\right)$
be such that $H_{\#}\mathbb{P}_{f^{T}}^{T}=\mathbb{P}_{f^{S}}^{S}$
where $\support$ indicates the support of a distribution. We can
express $H$ as 
\[
H\left(\bx,f^{T}\left(\bx\right)\right):=\left(H_{1}\left(\bx,f^{T}\left(\bx\right)\right),H_{2}\left(\bx,f^{T}\left(\bx\right)\right)\right),
\]
with $H_{1}\left(\bx,f^{T}\left(\bx\right)\right)\in\mathcal{X}^{S}$
and $H_{2}\left(\bx,f^{T}\left(\bx\right)\right)\in\mathcal{Y}_{\Delta}$.
Define $K\left(\bx\right):=H_{1}\left(\bx,f^{T}\left(\bx\right)\right)$.
We claim that $K_{\#}\mathbb{P}^{T}=\mathbb{P}^{S}$. Observe first
that for any $U_{S}\subset\mathcal{X}^{S}\times\mathcal{Y}_{\triangle}$,
we have $\mathbb{P}_{f^{S}}^{S}\left(U_{S}\right)=\mathbb{P}^{S}\left(V_{S}\right)$
where $V_{S}\coloneqq\left\{ \bx\in\mathcal{X}^{S}\mid\left(\bx,f^{S}\left(\bx\right)\right)\in U_{S}\right\} $.
Next, let $V_{S}\subset\mathcal{X}^{S}$ be any measurable set and
denote $U_{S}\coloneqq V_{S}\times\mathcal{Y}_{\triangle}$. Then
by using the observation above and the fact $H_{\#}\mathbb{P}_{f^{T}}^{T}=\mathbb{P}_{f^{S}}^{S}$,
we obtain 
\[
\mathbb{P}^{S}\left(V_{S}\right)=\mathbb{P}_{f^{S}}^{S}\left(U_{S}\right)=\mathbb{P}_{f^{T}}^{T}\left(H^{-1}\left(U_{S}\right)\right)=\mathbb{P}_{f^{T}}^{T}\left(K^{-1}\left(V_{S}\right)\times\mathcal{Y}_{\triangle}\right)=\mathbb{P}^{T}\left(K^{-1}\left(V_{S}\right)\right),
\]
or equivalently, $K_{\#}\mathbb{P}^{T}=\mathbb{P}^{S}$. It follows
from the fact $H_{\#}\mathbb{P}_{f^{T}}^{T}=\mathbb{P}_{f^{S}}^{S}$
that $H_{2}\left(\bx,f^{T}\left(\bx\right)\right)=f^{S}\left(K\left(\bx\right)\right)$,
which gives 
\begin{align*}
\mathcal{W}_{d_{Y}}\left(\mathbb{P}_{f^{S}}^{S},\mathbb{P}_{f^{T}}^{T}\right) & =\inf_{H:H_{\#}\mathbb{P}_{f^{T}}^{T}=\mathbb{P}_{f^{S}}^{S}}\mathbb{E}_{\left(\bx,f^{T}\left(\bx\right)\right)\sim\mathbb{P}_{f^{T}}^{T}}\left[d_{Y}\left(f^{T}\left(\bx\right),H_{2}\left(\bx,f^{T}\left(\bx\right)\right)\right)\right]\\
 & \geq\inf_{K:K_{\#}\mathbb{P}^{T}=\mathbb{P}^{S}}\mathbb{E}_{\bx\sim\mathbb{P}^{T}}\left[d_{Y}\left(f^{T}\left(\bx\right),f^{S}\left(K\left(\bx\right)\right)\right)\right]\\
 & =\mathcal{L}S(S,T).
\end{align*}
In order to prove the reverse inequality, let us consider any maps
$K$ satisfying $K_{\#}\mathbb{P}^{T}=\mathbb{P}^{S}$. Define a map
$H:\text{supp}\left(\mathbb{P}_{f^{T}}^{T}\right)\goto\text{supp}\left(\mathbb{P}_{f^{S}}^{S}\right)$
as $H\left(\bx,f^{T}\left(\bx\right)\right)\coloneqq\left(K(\bx),f^{S}\left(K\left(\bx\right)\right)\right)$,
we will show that $H_{\#}\mathbb{P}_{f^{T}}^{T}=\mathbb{P}_{f^{S}}^{S}$.
Indeed, let $U_{S}\subset\mathcal{X}^{S}\times\mathcal{Y}_{\triangle}$
be any measurable sets and take $V_{S}\coloneqq\left\{ \bx\in\mathcal{X}^{S}\mid\left(\bx,f^{S}\left(\bx\right)\right)\in U_{S}\right\} $.
Then, as $$H^{-1}\left(U_{S}\right)=\left\{ \left(\bx,f^{S}\left(\bx\right)\right)\mid K\left(\bx\right)\in V_{S}\right\} =\left\{ \left(\bx,f^{S}\left(\bx\right)\right)\mid\bx\in K^{-1}\left(V_{S}\right)\right\},$$
we have 
\[
\mathbb{P}_{f^{T}}^{T}\left(H^{-1}\left(U_{S}\right)\right)=\mathbb{P}^{T}\left(K^{-1}\left(V_{S}\right)\right)=\mathbb{P}^{S}\left(V_{S}\right)=\mathbb{P}_{f^{S}}^{S}\left(U_{S}\right),
\]
which means $H_{\#}\mathbb{P}_{f^{T}}^{T}=\mathbb{P}_{f^{S}}^{S}$.
As a result, 
\[
\mathcal{W}_{d_{Y}}\left(\mathbb{P}_{f^{S}}^{S},\mathbb{P}_{f^{T}}^{T}\right)\leq\inf_{K:K_{\#}\mathbb{P}^{T}=\mathbb{P}^{S}}\mathbb{E}_{\bx\sim\mathbb{P}_{T}}\left[d_{Y}\left(f^{T}\left(\bx\right),f^{S}\left(K\left(\bx\right)\right)\right)\right]=\mathcal{L}S(S,T).
\]
By combining the above two inequalities, we obtain the desired equality
\[
\mathcal{W}_{d_{Y}}\left(\mathbb{P}_{f^{S}}^{S},\mathbb{P}_{f^{T}}^{T}\right)=\mathcal{L}S(S,T).
\]
\textbf{(ii)} First, let $\gamma\in\Gamma\left(\mathbb{P}_{f^{S}},\mathbb{P}_{f^{T}}\right)$.
According to Lemma \ref{lem:my_gluing}, there exists $\gamma'\in\Gamma\left(\mathbb{P}^{S},\mathbb{P}^{T}\right)$ such that $\left(f^{S},f^{T}\right)_{\#}\gamma'=\gamma$.
Then, we find that 
\begin{align*}
\mathbb{E}_{\left(\by^{S},\by^{T}\right)\sim\gamma}\left[d_{Y}\left(\by^{S},\by^{T}\right)\right] & =\mathbb{E}_{\left(\bx^{S},\bx^{T}\right)\sim\gamma'}\left[d_{Y}\left(f^{S}\left(\bx^{S}\right),f^{T}\left(\bx^{T}\right)\right)\right]\\
 & \geq\inf_{\gamma'\in\Gamma\left(\mathbb{P}^{S},\mathbb{P}^{T}\right)}\mathbb{E}_{\left(\bx^{S},\bx^{T}\right)\sim\gamma'}\left[d_{Y}\left(f^{S}\left(\bx^{S}\right),f^{T}\left(\bx^{T}\right)\right)\right]\\
 & =\mathcal{W}_{d_{Y}}\left(\mathbb{P}_{f^{S}}^{S},\mathbb{P}_{f^{T}}^{T}\right)=\mathcal{L}S\left(S,T\right).
\end{align*}
Therefore, we arrive at 
\begin{align*}
\mathcal{W}_{d_{Y}}\left(\mathbb{P}_{f^{S}},\mathbb{P}_{f^{T}}\right)=\inf_{\gamma\in\Gamma\left(\mathbb{P}_{f^{S}},\mathbb{P}_{f^{T}}\right)}\mathbb{E}_{\left(\by^{S},\by^{T}\right)\sim\gamma}\left[d_{Y}\left(\by^{S},\by^{T}\right)\right]\geq\mathcal{L}S\left(S,T\right).
\end{align*}
Second, let $\gamma'\in\Gamma\left(\mathbb{P}^{S},\mathbb{P}^{T}\right)$,
we denote $\gamma=\left(f^{S},f^{T}\right)_{\#}\gamma'$. We then
have 
\begin{align*}
\mathbb{E}_{\left(\bx^{S},\bx^{T}\right)\sim\gamma'}\left[d_{Y}\left(f^{S}\left(\bx^{S}\right),f^{T}\left(\bx^{T}\right)\right)\right] & =\mathbb{E}_{\left(\by^{S},\by^{T}\right)\sim\gamma}\left[d_{Y}\left(\by^{S},\by^{T}\right)\right]\\
\geq & \inf_{\gamma\in\Gamma\left(\mathbb{P}_{f^{S}},\mathbb{P}_{f^{T}}\right)}\mathbb{E}_{\left(\by^{S},\by^{T}\right)\sim\gamma}\left[d_{Y}\left(\by^{S},\by^{T}\right)\right]=\mathcal{W}_{d_{Y}}\left(\mathbb{P}_{f^{S}},\mathbb{P}_{f^{T}}\right).
\end{align*}
This follows that 
\begin{align*}
\mathcal{L}S\left(S,T\right) & =\mathcal{W}_{d_{Y}}\left(\mathbb{P}_{f^{S}}^{S},\mathbb{P}_{f^{T}}^{T}\right)=\inf_{\gamma'\in\Gamma\left(\mathbb{P}^{S},\mathbb{P}^{T}\right)}\mathbb{E}_{\left(\bx^{S},\bx^{T}\right)\sim\gamma'}\left[d_{Y}\left(f^{S}\left(\bx^{S}\right),f^{T}\left(\bx^{T}\right)\right)\right]\\
 & \geq\mathcal{W}_{d_{Y}}\left(\mathbb{P}_{f^{S}},\mathbb{P}_{f^{T}}\right).
\end{align*}
Hence, the proof is completely done. 
\end{proof}
\begin{cor}
\label{coro:min_WS_new}The following inequality holds $\mathcal{W}_{d_{Y}}\left(\mathbb{P}_{h^{S}}^{S},\mathbb{P}_{f^{S}}^{S}\right)\leq\mathcal{L}\left(h^{S},f^{S},\mathbb{P}^{S}\right).$ 
\end{cor}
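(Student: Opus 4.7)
The plan is to exhibit a single explicit coupling between $\mathbb{P}_{h^{S}}^{S}$ and $\mathbb{P}_{f^{S}}^{S}$ whose transport cost is exactly $\mathcal{L}(h^{S},f^{S},\mathbb{P}^{S})$, and then invoke the fact that the Wasserstein distance is the infimum of the cost over all couplings. The essential observation is that both joint distributions live on $\mathcal{X}^{S}\times\mathcal{Y}_{\Delta}$ and share the same first marginal, namely $\mathbb{P}^{S}$. This shared marginal is precisely what lets us construct a ``diagonal'' coupling that leaves the $\mathcal{X}^{S}$-coordinate untouched and pays cost only on the $\mathcal{Y}_{\Delta}$-coordinate.

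Concretely, I would first record that $\mathbb{P}_{h^{S}}^{S}$ is the pushforward of $\mathbb{P}^{S}$ along $\bx\mapsto(\bx,h^{S}(\bx))$ and similarly for $\mathbb{P}_{f^{S}}^{S}$. I then define a probability measure $\gamma$ on $(\mathcal{X}^{S}\times\mathcal{Y}_{\Delta})\times(\mathcal{X}^{S}\times\mathcal{Y}_{\Delta})$ as the pushforward of $\mathbb{P}^{S}$ along the map $\bx\mapsto\bigl((\bx,h^{S}(\bx)),(\bx,f^{S}(\bx))\bigr)$. A routine verification (projecting to each factor) shows that $\gamma\in\Gamma(\mathbb{P}_{h^{S}}^{S},\mathbb{P}_{f^{S}}^{S})$.

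Next I would compute the cost. Using the convention established in the proof of Proposition~\ref{thm:label_shift}(i) — where $\mathcal{W}_{d_{Y}}$ on the product space $\mathcal{X}\times\mathcal{Y}_{\Delta}$ uses the ground cost $d_{Y}$ pulled back through the projection to $\mathcal{Y}_{\Delta}$ — the cost of $\gamma$ is
\[
\int d_{Y}\bigl(h^{S}(\bx),f^{S}(\bx)\bigr)\,\mathrm{d}\mathbb{P}^{S}(\bx)\;=\;\mathcal{L}\bigl(h^{S},f^{S},\mathbb{P}^{S}\bigr).
\]
Since $\mathcal{W}_{d_{Y}}(\mathbb{P}_{h^{S}}^{S},\mathbb{P}_{f^{S}}^{S})$ is the infimum of the transport cost over all couplings in $\Gamma(\mathbb{P}_{h^{S}}^{S},\mathbb{P}_{f^{S}}^{S})$, the desired inequality follows immediately by specializing to $\gamma$.

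There is essentially no obstacle here: the whole argument is a one-line construction of a specific coupling together with a one-line cost computation. The only mild subtlety is to be unambiguous about the ground metric used for $\mathcal{W}_{d_{Y}}$ on the product space $\mathcal{X}^{S}\times\mathcal{Y}_{\Delta}$, but this is dictated by the conventions already fixed in Proposition~\ref{thm:label_shift} and its proof, so no additional work is required.
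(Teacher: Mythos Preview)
Your proposal is correct and follows essentially the same approach as the paper: both specialize to the identity transport, with the paper phrasing it in the Monge formulation (invoking Proposition~\ref{thm:label_shift} to write $\mathcal{W}_{d_{Y}}(\mathbb{P}_{h^{S}}^{S},\mathbb{P}_{f^{S}}^{S})$ as an infimum over maps $L:L_{\#}\mathbb{P}^{S}=\mathbb{P}^{S}$ and then choosing $L=\mathrm{id}$), while you work directly in the Kantorovich formulation by constructing the diagonal coupling. The two are equivalent here, and your version is slightly more self-contained since it does not need to pass through the Monge--Kantorovich equivalence established in Proposition~\ref{thm:label_shift}.
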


\begin{proof}
According to Proposition 1, we have: 
\[
\mathcal{W}_{d_{Y}}\left(\mathbb{P}_{h^{S}}^{S},\mathbb{P}_{f^{S}}^{S}\right)=\inf_{L:L_{\#}\mathbb{P}^{S}=\mathbb{P}^{S}}\mathbb{E}_{\bx\sim\mathbb{P}^{S}}\left[d_{Y}\left(h^{S}\left(\bx\right),f^{S}\left(L\left(\bx\right)\right)\right)\right].
\]
Then by choosing L as the identity map (i.e., $L(\bx)=\bx$ for all
$\bx$), we obtain: 
\[
\mathcal{W}_{d_{Y}}\left(\mathbb{P}_{h^{S}}^{S},\mathbb{P}_{f^{S}}^{S}\right)\leq\mathbb{E}_{\bx\sim\mathbb{P}^{S}}\left[d_{Y}\left(h^{S}\left(\bx\right),f^{S}\left(L\left(\bx\right)\right)\right)\right]=\mathcal{L}\left(h^{S},f^{S},\mathbb{P}^{S}\right).
\]
\end{proof}

\subsection{Proof of Theorem 4}

First, we will show that 
\begin{align*}
\mathcal{L}S\left(S,T\right)\leq\mathcal{L}\left(h^{S},f^{S},\mathbb{P}^{S}\right)+\mathcal{L}\left(h^{T},f^{T},\mathbb{P}^{T}\right)+\mathcal{W}_{d_{Y}}\left(\mathbb{P}_{h^{S}},\mathbb{P}_{h^{T}}\right).
\end{align*}
By using the triangle inequality for the Wasserstein distance with
respect to the metric $d_{Y}$, we have 
\begin{align*}
\mathcal{L}S\left(S,T\right) & =\mathcal{W}_{d_{Y}}\left(\mathbb{P}_{f^{S}},\mathbb{P}_{f^{T}}\right)\\
 & \overset{(1)}{\leq}\mathcal{W}_{d_{Y}}\left(\mathbb{P}_{f^{S}},\mathbb{P}_{h^{S}}\right)+\mathcal{W}_{d_{Y}}\left(\mathbb{P}_{h^{S}},\mathbb{P}_{h^{T}}\right)+\mathcal{W}_{d_{Y}}\left(\mathbb{P}_{h^{T}},\mathbb{P}_{f^{T}}\right)\\
 & =\mathcal{W}_{d_{Y}}\left(\mathbb{P}_{f^{S}}^{S},\mathbb{P}_{h^{S}}^{S}\right)+\mathcal{W}_{d_{Y}}\left(\mathbb{P}_{h^{S}},\mathbb{P}_{h^{T}}\right)+\mathcal{W}_{d_{Y}}\left(\mathbb{P}_{h^{T}}^{T},\mathbb{P}_{f^{T}}^{T}\right)\\
 & \overset{(2)}{\leq}\mathcal{L}\left(h^{S},f^{S},\mathbb{P}^{S}\right)+\mathcal{L}\left(h^{T},f^{T},\mathbb{P}^{T}\right)+\mathcal{W}_{d_{Y}}\left(\mathbb{P}_{h^{S}},\mathbb{P}_{h^{T}}\right).
\end{align*}
Here we note that for $\overset{(1)}{\leq}$, we use the triangle
inequality and for $\overset{(2)}{\leq}$, we invoke Corollary \ref{coro:min_WS_new}.

It is sufficient to prove that 
\[
\mathcal{W}_{d_{Y}}\left(\mathbb{P}_{h^{S}},\mathbb{P}_{h^{T}}\right)\leq\mathcal{W}_{d_{Z}}\left(g^{S}\#\mathbb{P}^{S},g^{T}\#\mathbb{P}^{T}\right).
\]
Indeed, let $\gamma\in\Gamma\left(g^{S}\#\mathbb{P}^{S},g^{T}\#\mathbb{P}^{T}\right)$
and denote $\gamma'=h\#\gamma$. Then, we have $\gamma'\in\Gamma\left(\mathbb{P}_{h^{S}},\mathbb{P}_{h^{T}}\right)$,
and 
\begin{align*}
\mathbb{E}_{\left(\by_{1},\by_{2}\right)\sim\gamma'}\left[d_{Y}\left(\by_{1},\by_{2}\right)\right]=\mathbb{E}_{\left(z_{1},z_{2}\right)\sim\gamma}\left[d_{Y}\left(h\left(z_{1}\right),h\left(z_{2}\right)\right)\right]\leq\mathbb{E}_{\left(z_{1},z_{2}\right)\sim\gamma}\left[d_{Z}\left(z_{1},z_{2}\right)\right].
\end{align*}
Therefore, we obtain 
\begin{align*}
\mathcal{W}_{d_{Y}}\left(\mathbb{P}_{h^{S}},\mathbb{P}_{h^{T}}\right) & =\inf_{\gamma'\in\Gamma\left(\mathbb{P}_{h^{S}},\mathbb{P}_{h^{T}}\right)}\mathbb{E}_{\left(\by_{1},\by_{2}\right)\sim\gamma'}\left[d_{Y}\left(\by_{1},\by_{2}\right)\right]\\
 & \leq\mathbb{E}_{\left(\by_{1},\by_{2}\right)\sim\gamma'}\left[d_{Y}\left(\by_{1},\by_{2}\right)\right]\leq\mathbb{E}_{\left(z_{1},z_{2}\right)\sim\gamma}\left[d_{Z}\left(z_{1},z_{2}\right)\right].
\end{align*}
Finally, we reach 
\[
\mathcal{W}_{d_{Y}}\left(\mathbb{P}_{h^{S}},\mathbb{P}_{h^{T}}\right)\leq\inf_{\gamma\in\Gamma\left(g^{S}\#\mathbb{P}^{S},g^{T}\#\mathbb{P}^{T}\right)}\mathbb{E}_{\left(z_{1},z_{2}\right)\sim\gamma}\left[d_{Z}\left(z_{1},z_{2}\right)\right]=\mathcal{W}_{d_{Z}}\left(g^{S}\#\mathbb{P}^{S},g^{T}\#\mathbb{P}^{T}\right).
\]
Hence, we have proved our claim. 
\subsection{Proof of Theorem~\ref{theorem:motivation}}
Using triangle inequality, we have
\begin{align*}
    LS(h^T,f^T,\mathbb{P}^T)&=W_{d_Y}(\mathbb{P}^{T}_{h^T},\mathbb{P}^{T}_{f^T})=W_{d_Y}(\mathbb{P}_{h^T},\mathbb{P}_{f^T})\\
    &\leq W_{d_Y}(\mathbb{P}_{h^T},\mathbb{P}_{h^S})+W_{d_Y}(\mathbb{P}_{h^S},\mathbb{P}_{f^S})+W_{d_Y}(\mathbb{P}_{f^S},\mathbb{P}_{f^T})\\
    &\leq W_{d_Y}(\mathbb{P}^T_{h^T},\mathbb{P}^S_{h^S})+\mathcal{L}(h^S,f^S,\mathbb{P}^S)+LS(S,T)\\
    &=\frac{1}{2}W_{d_Y}(\mathbb{P}^T_{h^T},\mathbb{P}^S_{h^S})+\mathcal{L}(h^S,f^S,\mathbb{P}^S)+LS(S,T)+\frac{1}{2}W_{d_Y}(\mathbb{P}^T_{h^T},\mathbb{P}^S_{h^S}).
\end{align*}
We further derive 
\begin{align*}
    W_{d_Y}(\mathbb{P}^T_{h^T},\mathbb{P}^S_{h^S})&=\inf_{\gamma\in\Gamma(\mathbb{P}^S_{h^S},\mathbb{P}^T_{h^T})}\mathbb{E}_{\gamma}\Big[d_Y\big(h^S(\bx^S),h^T(\bx^T)\big)\Big]\\
    &=\inf_{\gamma\in\Gamma(\mathbb{P}^S,\mathbb{P}^T)}\mathbb{E}_{\gamma}\Big[d_Y\big(h^S(\bx^S),h^T(\bx^T)\big)\Big]\\
    &\leq\mathbb{E}_{\gamma^*}\Big[d_Y\big(h^S(\bx^S),h^T(\bx^T)\big)\Big]\\
    &\leq\mathbb{E}_{\gamma^*}\Big[d_Y\big(h^T(\bx^S),h^T(\bx^T)\big)\Big]+\mathbb{E}_{\gamma^*}\Big[d_Y\big(h^S(\bx^S),h^T(\bx^S)\big)\Big]\\
    &\leq \mathbb{E}_{\gamma^*}\Big[d_Y\big(h^T(\bx^S),h^T(\bx^T)\big)\Big]+N.
\end{align*}
We denote $A=\{(\bx^S,\bx^T):d_Y\big(h^T(\bx^S),h^T(\bx^T)\big)>\delta d_X(\bx^S,\bx^T)\}$. It appears that $\gamma^*(A)<\phi(\delta)$ and $\gamma^*(A^c)>1-\phi(\delta)$. Therefore, we have
\begin{align*}
     \mathbb{E}_{\gamma^*}\Big[d_Y\big(h^T(\bx^S),h^T(\bx^T)\big)\Big]&=\int_{A}d_Y\big(h^T(\bx^S),h^T(\bx^T)\big)d\gamma^*(\bx^S,\bx^T)+\int_{A^c}d_Y\big(h^T(\bx^S),h^T(\bx^T)\big)d\gamma^*(\bx^S,\bx^T)\\
     &\leq N\gamma^*(A)+\delta\int_{A^c}d_X(\bx^S,\bx^T)d\gamma^*(\bx^S,\bx^T)\\
     &\leq N\phi(\delta)+\delta\int_{\mathcal{X}}d_X(\bx^S,\bx^T)d\gamma^*(\bx^S,\bx^T)\\
     &\leq  N\phi(\delta)+\delta W_{d_X}(\mathbb{P}^S,\mathbb{P}^T).
\end{align*}
Finally, we reach
\begin{align*}
    LS(h^T,f^T,\mathbb{P}^T)\leq LS(S,T)+\mathcal{L}(h^S,f^S,\mathbb{P}^S)+\frac{1}{2}\delta W_{d_X}(\mathbb{P}^{S},\mathbb{P}^{T})+\frac{1}{2}W_{d_Y}(\mathbb{P}^{S}_{h^S},\mathbb{P}^{T}_{h^T})+\frac{[1+\phi(\delta)]N}{2}.
\end{align*}
As a consequence, we obtain the conclusion of the theorem.
\section{Proof of the remaining results \label{sec:proof_remaining_results}}

\subsection{Proof of Proposition 3}

Denote by $p_{Y}^{S}=(p_{Y}^{S}(y))_{y=1}^{M}$ and $p_{Y}^{T}=(p_{Y}^{T}(y))_{y=1}^{M}$
the marginal distributions of the source and target domain labels,
i.e., $p_{Y}^{S}(y)=\int_{\mathcal{X}^{S}}p^{S}(x,y)dx$ and $p_{Y}^{T}(y)=\int_{\mathcal{X}^{T}}p^{T}(x,y)dx$.
Let $\mathbb{P}_{Y}^{a}$ be the discrete measure on the vertices
of $\Delta^{M-1}$ putting mass $P_{Y}^{a}(y)$ on the one-hot representation
of $y$, $\forall\,y\in[M],a\in\{S,T\}$. For $d_{Y}(y,y')=\|y-y'\|_{p}^{p}$
when $p\geq1$, the following holds:

(i) $\mathcal{L}\left(h^{T},f^{T},\mathbb{P}^{T}\right)\leq\mathcal{L}S\left(S,T\right)+\mathcal{L}\left(h^{S},f^{S},\mathbb{P}^{S}\right)+\mathcal{W}_{d_{Y}}\left(\mathbb{P}_{h^{S}}^{S},\mathbb{P}_{h^{T}}^{T}\right)+const$,
where the constant can be viewed as a reconstruction term: $\sup_{L,K:L\#\mathbb{P}^{T}=\mathbb{P}^{S},K\#\mathbb{P}^{S}=\mathbb{P}^{T}}\mathbb{E}_{\mathbb{P}^{T}}\left[d_{Y}\left(f^{T}\left(K\left(L\left(\bx\right)\right)\right),f^{T}\left(\bx\right)\right)\right]$; 

(ii) $\mathcal{LS}(S,T)\geq\|p_{Y}^{S}-p_{Y}^{T}\|_{p}^{p}$;

(iii) In the setting that $\mathbb{P}^{S}$ and $\mathbb{P}^{T}$
are mixtures of well-separated Gaussian distributions, i.e., 
\begin{align*}
p^{a}(\bx)=\sum_{y=1}^{M}p^{a}(y)\mathcal{N}(\bx|\mu_{y}^{a},\Sigma_{y}^{a}),\quad\forall a\in\{S,T\}
\end{align*}

with $\|\mu_{y}^{a}-\mu_{y'}^{a}\|_{2}\geq D\times\max\{\|\Sigma_{y}^{a}\|_{op}^{1/2},\|\Sigma_{y'}^{a}\|_{op}^{1/2}\}\,\forall\,a\in\{S,T\},y\neq y'$,
in which $\|\cdot\|_{op}$ denotes the operator norm and $D$ is sufficiently
large, we have 
\begin{equation}
0\leq|\mathcal{LS}(S,T)-\mathcal{W}_{p}^{p}(\mathbb{P}_{Y}^{S},\mathbb{P}_{Y}^{T})|\leq\epsilon(D),
\end{equation}
where $\epsilon(D)$ is a small constant depending on $D,p_{Y}^{S},p_{Y}^{T},(\Sigma_{y}^{S},\Sigma_{y}^{T})_{y=1}^{M}$,
and it goes to 0 as $D\to\infty$. 

(iv) In the anti-causal setting, where $p^{S}(x|y)=p^{T}(x|y)$ for
all $x,y$, 
\begin{equation}
\mathcal{LS}(S,T)\leq M^{p}\|p_{Y}^{S}-p_{Y}^{T}\|_{1}+\min\{\mathbb{E}_{\mathbb{P}^{S}}\|f^{S}-f^{T}\|_{p}^{p},\mathbb{E}_{\mathbb{P}^{T}}\|f^{S}-f^{T}\|_{p}^{p}\};
\end{equation}

\begin{proof}
\textbf{(i)} Let $L$ and $K$ be two arbitrary maps such that $L\#\mathbb{P}^{T}=\mathbb{P}^{S}$
and $K\#\mathbb{P}^{S}=\mathbb{P}^{T}$. We have the following triangle
inequality: 
\begin{align*}
d_{Y}\left(h^{T}\left(\bx\right),f^{T}\left(\bx\right)\right) & \leq d_{Y}\left(h^{T}\left(\bx\right),h^{S}\left(L\left(\bx\right)\right)\right)+d_{Y}\left(h^{S}\left(L\left(\bx\right)\right),f^{S}\left(L\left(\bx\right)\right)\right)\\
 & +d_{Y}\left(f^{S}\left(L\left(\bx\right)\right),f^{T}\left(K\left(L\left(\bx\right)\right)\right)\right)+d_{Y}\left(f^{T}\left(K\left(L\left(\bx\right)\right)\right),f^{T}\left(\bx\right)\right).
\end{align*}
Therefore, we obtain 
\begin{align*}
\mathbb{E}_{\mathbb{P}^{T}}\left[d_{Y}\left(h^{T}\left(\bx\right),f^{T}\left(\bx\right)\right)\right] & \leq\mathbb{E}_{\mathbb{P}^{T}}\left[d_{Y}\left(h^{T}\left(\bx\right),h^{S}\left(L\left(\bx\right)\right)\right)\right]+\mathbb{E}_{\mathbb{P}^{T}}\left[d_{Y}\left(h^{S}\left(L\left(\bx\right)\right),f^{S}\left(L\left(\bx\right)\right)\right)\right]\\
 & +\mathbb{E}_{\mathbb{P}^{T}}\left[d_{Y}\left(f^{S}\left(L\left(\bx\right)\right),f^{T}\left(K\left(L\left(\bx\right)\right)\right)\right)\right]+\mathbb{E}_{\mathbb{P}^{T}}\left[d_{Y}\left(f^{T}\left(K\left(L\left(\bx\right)\right)\right),f^{T}\left(\bx\right)\right)\right]\\
 & \overset{(1)}{=}\mathbb{E}_{\mathbb{P}^{T}}\left[d_{Y}\left(h^{T}\left(\bx\right),h^{S}\left(L\left(\bx\right)\right)\right)\right]+\mathbb{E}_{\mathbb{P}^{S}}\left[d_{Y}\left(h^{S}\left(\bx\right),f^{S}\left(\bx\right)\right)\right]\\
 & +\mathbb{E}_{\mathbb{P}^{S}}\left[d_{Y}\left(f^{S}\left(\bx\right),f^{T}\left(K\left(\bx\right)\right)\right)\right]+\mathbb{E}_{\mathbb{P}^{T}}\left[d_{Y}\left(f^{T}\left(K\left(L\left(\bx\right)\right)\right),f^{T}\left(\bx\right)\right)\right]\\
 & =\mathbb{E}_{\mathbb{P}^{T}}\left[d_{Y}\left(h^{T}\left(\bx\right),h^{S}\left(L\left(\bx\right)\right)\right)\right]+\mathcal{L}\left(h^{S},f^{S},\mathbb{P}^{S}\right)\\
 & +\mathbb{E}_{\mathbb{P}^{S}}\left[d_{Y}\left(f^{S}\left(\bx\right),f^{T}\left(K\left(\bx\right)\right)\right)\right]+\mathbb{E}_{\mathbb{P}^{T}}\left[d_{Y}\left(f^{T}\left(K\left(L\left(\bx\right)\right)\right),f^{T}\left(\bx\right)\right)\right].
\end{align*}
Note that, the derivation in $\overset{(1)}{=}$ is due to $L\#\mathbb{P}^{T}=\mathbb{P}^{S}$,
hence gaining 
\begin{align*}
\mathbb{E}_{\mathbb{P}^{T}}\left[d_{Y}\left(h^{S}\left(L\left(\bx\right)\right),f^{S}\left(L\left(\bx\right)\right)\right)\right] & =\mathbb{E}_{\mathbb{P}^{S}}\left[d_{Y}\left(h^{S}\left(\bx\right),f^{S}\left(\bx\right)\right)\right],\\
\mathbb{E}_{\mathbb{P}^{T}}\left[d_{Y}\left(f^{S}\left(L\left(\bx\right)\right),f^{T}\left(K\left(L\left(\bx\right)\right)\right)\right)\right] & =\mathbb{E}_{\mathbb{P}^{S}}\left[d_{Y}\left(f^{S}\left(\bx\right),f^{T}\left(K\left(\bx\right)\right)\right)\right].
\end{align*}
As a consequence, we find that 
\begin{align*}
\mathcal{L}\left(h^{T},f^{T},\mathbb{P}^{T}\right) & \leq\inf_{L,K:L\#\mathbb{P}^{T}=\mathbb{P}^{S},K\#\mathbb{P}^{S}=\mathbb{P}^{T}}\biggl\{\mathbb{E}_{\mathbb{P}^{T}}\left[d_{Y}\left(h^{T}\left(\bx\right),h^{S}\left(L\left(\bx\right)\right)\right)\right]+\mathcal{L}\left(h^{S},f^{S},\mathbb{P}^{S}\right)\\
 & \,\,\,\,\,\,\,\,\,\,\,\,\,\,\,\,\,+\mathbb{E}_{\mathbb{P}^{S}}\left[d_{Y}\left(f^{S}\left(\bx\right),f^{T}\left(K\left(\bx\right)\right)\right)\right]+\mathbb{E}_{\mathbb{P}^{T}}\left[d_{Y}\left(f^{T}\left(K\left(L\left(\bx\right)\right)\right),f^{T}\left(\bx\right)\right)\right]\biggr\}\\
 & \leq\inf_{L:L\#\mathbb{P}^{T}=\mathbb{P}^{S}}\mathbb{E}_{\mathbb{P}^{T}}\left[d_{Y}\left(h^{T}\left(\bx\right),h^{S}\left(L\left(\bx\right)\right)\right)\right]+\inf_{K:K\#\mathbb{P}^{S}=\mathbb{P}^{T}}\mathbb{E}_{\mathbb{P}^{S}}\left[d_{Y}\left(f^{S}\left(\bx\right),f^{T}\left(K\left(\bx\right)\right)\right)\right]\\
 & +\mathcal{L}\left(h^{S},f^{S},\mathbb{P}^{S}\right)+\sup_{L,K:L\#\mathbb{P}^{T}=\mathbb{P}^{S},K\#\mathbb{P}^{S}=\mathbb{P}^{T}}\mathbb{E}_{\mathbb{P}^{T}}\left[d_{Y}\left(f^{T}\left(K\left(L\left(\bx\right)\right)\right),f^{T}\left(\bx\right)\right)\right]\\
 & =\mathcal{W}_{d_{Y}}\left(\mathbb{P}_{h^{S}}^{S},\mathbb{P}_{h^{T}}^{T}\right)+\mathcal{LS}\left(S,T\right)+\mathcal{L}\left(h^{S},f^{S},\mathbb{P}^{S}\right)\\
 & +\sup_{L,K:L\#\mathbb{P}^{T}=\mathbb{P}^{S},K\#\mathbb{P}^{S}=\mathbb{P}^{T}}\mathbb{E}_{\mathbb{P}^{T}}\left[d_{Y}\left(f^{T}\left(K\left(L\left(\bx\right)\right)\right),f^{T}\left(\bx\right)\right)\right].
\end{align*}

\textbf{(ii)} We will show that for all transformation $L$ satisfying
$L\#\mathbb{P}^{T}=\mathbb{P}^{S}$, 
\begin{equation}
\mathbb{E}_{\mathbb{P}^{T}}\norm{f^{T}(\bx)-f^{S}(L(\bx))}_{p}^{p}\geq\norm{p_{Y}^{S}-p_{Y}^{T}}_{p}^{p},
\end{equation}
and then take the infimum of the LHS, which directly leads to the
conclusion. Indeed, by applying Jensen inequality, we find that 
\begin{align*}
\mathbb{E}_{\mathbb{P}^{T}}\norm{f^{T}(\bx)-f^{S}(L(\bx))}_{p}^{p} & =\sum_{y=1}^{M}\mathbb{E}_{\mathbb{P}^{T}}|p^{T}(y|\bx)-p^{S}(y|L(\bx))|^{p}\\
 & \geq\sum_{y=1}^{M}\left|\mathbb{E}_{\mathbb{P}^{T}}(p^{T}(y|\bx)-p^{S}(y|L(\bx)))\right|^{p}\\
 & =\sum_{y=1}^{M}\left|\mathbb{E}_{\mathbb{P}^{T}}p^{T}(y|\bx)-\mathbb{E}_{\mathbb{P}^{S}}p^{S}(y|\bx)\right|^{p}\\
 & =\sum_{y=1}^{M}\left|p_{Y}^{T}(y)-p_{Y}^{S}(y)\right|^{p}\\
 & =\norm{p_{Y}^{S}-p_{Y}^{T}}_{p}^{p}.
\end{align*}
We have thus proved our claim.

\textbf{(iii)} Consider $y$'s as one-hot vectors, i.e., vertices
of the simplex. By the fact that Wasserstein distances on simplex
are no greater than $M$, we have 
\begin{align*}
|\mathcal{W}_{p}^{p}(\mathbb{P}_{f^{S}},\mathbb{P}_{f^{T}})-\mathcal{W}_{p}^{p}(\mathbb{P}_{Y}^{S},\mathbb{P}_{Y}^{T})| & =\left|\sum_{i=0}^{p-1}\mathcal{W}_{p}^{i}(\mathbb{P}_{f^{S}},\mathbb{P}_{f^{T}})\mathcal{W}_{p}^{p-1-i}(\mathbb{P}_{Y}^{S},\mathbb{P}_{Y}^{T})\right||\mathcal{W}_{p}(\mathbb{P}_{f^{S}},\mathbb{P}_{f^{T}})-\mathcal{W}_{p}(\mathbb{P}_{Y}^{S},\mathbb{P}_{Y}^{T})|\\
 & \leq pM^{p-1}|\mathcal{W}_{p}(\mathbb{P}_{f^{S}},\mathbb{P}_{f^{T}})-\mathcal{W}_{p}(\mathbb{P}_{Y}^{S},\mathbb{P}_{Y}^{T})|.
\end{align*}
Besides, by triangle inequalities, 
\begin{align}
|\mathcal{W}_{p}(\mathbb{P}_{f^{S}},\mathbb{P}_{f^{T}})-\mathcal{W}_{p}(\mathbb{P}_{Y}^{S},\mathbb{P}_{Y}^{T})|\leq\mathcal{W}_{p}(\mathbb{P}_{f^{S}},\mathbb{P}_{Y}^{S})+\mathcal{W}_{p}(\mathbb{P}_{f^{T}},\mathbb{P}_{Y}^{T}).\label{eq:key_inequality}
\end{align}
Thus, we only need to prove the claimed bounds for $W_{p}(\mathbb{P}_{f^{S}},\mathbb{P}_{Y}^{S})$
and $W_{p}(\mathbb{P}_{f^{T}},\mathbb{P}_{Y}^{T})$. Because the proofs
are similar for the source and target, in the followings, we drop
the superscript $S,T$ for the ease of notations. We first show that
the mass of $\mathbb{P}_{f}$ concentrates near the vertices, i.e.,
there exists $(\alpha_{y},\epsilon_{y})_{y\in[M]}$ being small numbers
depends on $D$ such that, for $Z\sim f(X)$, 
\begin{align}
0\leq p_{Y}(y)-\mathbb{P}_{f}(\norm{Z-y}_{p}^{p}<\alpha_{y})\leq\epsilon_{y}\quad\forall y\in[M].\label{eq:claim_1}
\end{align}
Indeed, for all $y$, let $B_{y}=\{x:\|\Sigma_{y}^{-1/2}(x-\mu_{y})\|\leq\sqrt{D}\}$.
Denote the dimension of $X$ to be $d$ and $\chi_{d}^{2}$ the Chi-square
distribution with $d$ degree of freedom. We have the following tail
bound: 
\[
\mathbb{P}(X\in B_{y}|y)=P(\chi_{d}^{2}\leq D)\geq1-e^{-(D-2d)/4}.
\]
Hence, we obtain that 
\[
\mathbb{P}(X\in B_{y})=\sum_{y=1}^{M}\mathbb{P}(X\in B_{y}|y)p_{Y}(y)\geq\mathbb{P}(\chi_{d}^{2}\leq D)p_{Y}(y)\geq p_{Y}(y)-\epsilon_{y},
\]
where $\epsilon_{y}=e^{-(D-2d)/4}$. Besides, if $x\in B_{y}$ then
for any $y'\neq y$, by triangle inequalities and the definition of
the operator norm, 
\begin{align*}
\|\Sigma_{y'}^{-1/2}(x-\mu_{y'})\|_{2} & \geq\|\Sigma_{y'}\|_{op}^{-1/2}\|(x-\mu_{y'})\|_{2}\\
 & \geq\|\Sigma_{y'}\|_{op}^{-1/2}(\|(\mu_{y}-\mu_{y'})\|_{2}-\|(x-\mu_{y})\|_{2})\\
 & \geq\|\Sigma_{y'}\|_{op}^{-1/2}(\|(\mu_{y}-\mu_{y'})\|_{2}-\|(\mu_{y}-\mu_{y'})\|_{2}/\sqrt{D})\\
 & \geq D-\sqrt{D},
\end{align*}
where the above inequalites are due to our assumption and the fact
that 
\[
\|x-\mu_{y}\|_{2}\leq\|\Sigma_{y}\|_{op}^{1/2}\|\Sigma_{y}^{-1/2}(x-\mu_{y})\|_{2}\leq\|\Sigma_{y}\|_{op}^{1/2}\sqrt{D}\leq\|\mu_{y}-\mu_{y'}\|_{2}/\sqrt{D}.
\]
Hence, for all $x\in B_{y}$ and $y'\neq y$, we have 
\[
\dfrac{p(x|y')}{p(x|y)}=\dfrac{|\Sigma_{y}|^{1/2}}{|\Sigma_{y'}|^{1/2}}e^{(\|\Sigma_{y}^{-1/2}(x-\mu_{y})\|_{2}^{2}-\|\Sigma_{y'}^{-1/2}(x-\mu_{y'})\|_{2}^{2})/2}\lesssim e^{D-(D-\sqrt{D})^{2}}.
\]
Combining the above inequality with Bayes' rule leads to 
\[
p(y|x)=\dfrac{1}{1+\sum_{y'\neq y}\frac{p(y')p(x|y')}{p(y)p(x|y)}}\geq1-\sum_{y'\neq y}\frac{p(y')p(x|y')}{p(y)p(x|y)}\geq1-\gamma_{y},
\]
where $\gamma_{y}\lesssim e^{-D(D-2\sqrt{D})}$. This means the difference
between labeling function at $x\in B_{y}$ and $y$ is bounded as
follows 
\[
\|f(x)-y\|_{p}^{p}=(1-p(y|x))^{p}+\sum_{y'\neq y}p(y'|x)^{p}\leq2(1-p(y|x))^{p}\leq2\gamma_{y}^{p}.
\]
Choosing $\alpha_{y}=2\gamma_{y}^{p}$, by the fact that $x\in B_{y}$
implies $\|f(x)-y\|_{p}^{p}\leq\alpha_{y}$, we have 
\[
\mathbb{P}_{f}(\|Z-y\|_{p}^{p}\leq\alpha_{y})\geq\mathbb{P}(X\in B_{y})\geq p_{Y}(y)-\epsilon_{y}.
\]
Putting the above results together, we find that 
\[
p_{Y}(y)-\mathbb{P}_{f}(\|Z-y\|_{p}^{p}\leq\alpha_{y})\leq\epsilon_{y}.
\]
Due to the continuity of $\mathbb{P}_{f}$, we can also shrink $\alpha_{y}$
such that the inequality still holds and the left-hand side is positive
and we get our claim~\eqref{eq:claim_1}.

Now let $E_{y}=\{z:\norm{z-y}_{p}^{p}\leq\alpha_{y}\}$ and $D_{y}$
be a set containing $E_{y}$ for all $y\in[M]$ satisfying $\mathbb{P}_{f}(D_{y})=p_{Y}(y)$
and $\{D_{y}\}_{y=1}^{M}$ is a partition of $\Delta^{M-1}$. Let
$p_{f}$ be the density of $\mathbb{P}_{f}$. It can be seen that
\[
\pi(z,y)=p_{f}(z)1[z\in D_{y}]
\]
is the density function of a coupling between $\mathbb{P}_{f}$ and
$\mathbb{P}_{Y}$. Hence, we have the following inequalities: 
\begin{align*}
\mathcal{W}_{p}^{p}(\mathbb{P}_{f},\mathbb{P}_{Y}) & \leq\sum_{y=1}^{M}\int_{z\in\Delta^{M-1}}\norm{y-z}_{p}^{p}\pi(z,y)dz\\
 & \leq\sum_{y=1}^{M}\int_{z\in D_{y}}\norm{y-z}_{p}^{p}\pi(z,y)dz\\
 & \leq\sum_{y=1}^{M}\left(\int_{z\in E_{y}}\norm{y-z}_{p}^{p}\pi(z,y)+\int_{z\in D_{y}\setminus E_{y}}\norm{y-z}_{p}^{p}\pi(z,y)\right)dz\\
 & \leq\sum_{y=1}^{M}\left(\int_{z\in E_{y}}\alpha_{y}\pi(z,y)dz+\int_{z\in D_{y}\setminus E_{y}}M^{p}\pi(z,y)dz\right)dz\\
 & \leq\sum_{y=1}^{M}\alpha_{y}+M^{p}\epsilon_{y}\lesssim e^{-pD(D-2\sqrt{D})}+e^{-(D-2d)/4},
\end{align*}
which goes to 0 exponentially fast when $D$ grows to infinity. Plugging
the above inequality into equation~\eqref{eq:key_inequality}, we
obtain the conclusion of part (iii) of the proposition. 

\textbf{(iv)} Let $\pi_{y}=P(\cdot|y)$ be conditional measure of
$X$ given $Y=y$. By using the law of total probability, we have
\begin{equation}
\mathbb{P}^{S}=\sum_{y=1}^{M}p_{Y}^{S}(y)\pi_{y},\quad\mathbb{P}^{T}=\sum_{y=1}^{M}p_{Y}^{T}(y)\pi_{y}.
\end{equation}
Given the above equations, some simple algebraic transformations would
lead to 
\begin{equation}
\mathbb{P}_{f^{S}}=f^{S}\#\mathbb{P}^{S}=\sum_{y=1}^{M}p_{Y}^{S}(y)f^{S}\#\pi_{y}.
\end{equation}
Now let $q_{yy}=\min\{p_{Y}^{S}(y),p_{Y}^{T}(y)\}$ and choose $(q_{yy'})_{y\neq y'}$
such that $(q_{yy'})_{y=\overline{1,M},y'=\overline{1,M}}$ is a valid
coupling of $p_{Y}^{S}$ and $p_{Y}^{T}$. By the convexity of Wasserstein
distance, we have 
\begin{equation}
W_{p}^{p}(\mathbb{P}_{f^{S}},\mathbb{P}_{f^{T}})\leq\sum_{y,y'}q_{yy'}W_{p}^{p}(f^{S}\#\pi_{y},f^{T}\#\pi_{y'}).
\end{equation}
As the distance between two arbitrary points on the simplex $\Delta^{M-1}$
is not greater than $M$, neither is the Wasserstein distance between
any two measures on $\Delta^{M-1}$. Therefore, 
\begin{equation}
\sum_{y\neq y'}q_{yy'}W_{p}^{p}(f^{S}\#\pi_{y},f^{T}\#\pi_{y'})\leq M^{p}\sum_{y\neq y'}q_{yy'}\leq M^{p}\|p_{Y}^{S}-p_{Y}^{T}\|_{1}.\label{ineq:yy'}
\end{equation}
Besides, we find that
\begin{align*}
\sum_{y=1}^{M}q_{yy}W_{p}^{p}(f^{S}\#\pi_{y},f^{T}\#\pi_{y}) & \leq\sum_{y=1}^{M}p_{Y}^{S}(y)\inf_{\pi\in\Gamma(\pi_{y},\pi_{y})}\int_{\mathcal{X}\times\mathcal{X}}\norm{f^{S}(x)-f^{T}(x')}_{p}^{p}d\pi(x,x')\\
 & \leq\sum_{y=1}^{M}p_{Y}^{S}(y)\int_{\mathcal{X}}\norm{f^{S}(x)-f^{T}(x)}_{p}^{p}d\pi_{y}(x)\\
 & =\sum_{y=1}^{M}p_{Y}^{S}(y)\int_{\mathcal{X}}\norm{f^{S}(x)-f^{T}(x)}_{p}^{p}p(x|y)dx\\
 & =\int_{\mathcal{X}}\norm{f^{S}(x)-f^{T}(x)}_{p}^{p}p^{S}(x)dx\\
 & =\mathbb{E}_{\mathbb{P}^{S}}\norm{f^{S}-f^{T}}_{p}^{p}.
\end{align*}
Similarly, since $q_{yy}\leq p^{T}(y)$ for all $y\in[M]$, we could
also obtain 
\begin{align*}
\sum_{y=1}^{M}q_{yy}W_{p}^{p}(f^{S}\#\pi_{y},f^{T}\#\pi_{y})\leq\mathbb{E}_{\mathbb{P}^{T}}\norm{f^{S}-f^{T}}_{p}^{p}.
\end{align*}
Consequently, 
\begin{equation}
\sum_{y=1}^{M}q_{yy}W_{p}^{p}(f^{S}\#\pi_{y},f^{T}\#\pi_{y})\leq\min\{\mathbb{E}_{\mathbb{P}^{S}}\norm{f^{S}-f^{T}}_{p}^{p},\mathbb{E}_{\mathbb{P}^{T}}\norm{f^{S}-f^{T}}_{p}^{p}\}.\label{ineq:yy}
\end{equation}
Combining equations~\eqref{ineq:yy'} and \eqref{ineq:yy}, we have
the conclusion of part (iv).
\end{proof}

\subsection{Proof of Theorem 5}

Before providing the proof of Theorem 5, we first introduce a lemma
which facilitates our later arguments. 
\begin{lem}
\label{lemma:W_inequality} Let $\mu$ and $\nu$ be two probability
measures on $\mathbb{R}^{d}$. Denote by $\mu_{1}$ ($\nu_{1}$) and
$\mu_{2}$ ($\nu_{2}$) the marginal distributions of $\mu$ ($\nu$)
on the first $k$ dimensions and the last $d-k$ dimensions, respectively,
where $0\leq k\leq d$. We have 
\begin{equation}
\mathcal{W}_{p}^{p}(\mu,\nu)\geq\mathcal{W}_{p}^{p}(\mu_{1},\nu_{1})+\mathcal{W}_{p}^{p}(\mu_{2},\nu_{2})
\end{equation}
\end{lem}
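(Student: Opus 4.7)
The plan is to exploit the fact that the cost function $\|x-y\|_p^p$ decomposes additively across disjoint blocks of coordinates, so that any coupling of $\mu$ and $\nu$ transports the two marginal blocks independently in terms of cost accounting.

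First, I would fix an arbitrary coupling $\gamma \in \Gamma(\mu,\nu)$ on $\mathbb{R}^d \times \mathbb{R}^d$. Writing $x=(x^{(1)},x^{(2)})$ and $y=(y^{(1)},y^{(2)})$ with $x^{(1)},y^{(1)} \in \mathbb{R}^k$ and $x^{(2)},y^{(2)} \in \mathbb{R}^{d-k}$, the key identity is
\begin{equation*}
\|x-y\|_p^p = \|x^{(1)}-y^{(1)}\|_p^p + \|x^{(2)}-y^{(2)}\|_p^p,
\end{equation*}
which splits the transport cost into two additive pieces. Let $\pi_1 : \mathbb{R}^d \times \mathbb{R}^d \to \mathbb{R}^k \times \mathbb{R}^k$ be the projection $(x,y)\mapsto(x^{(1)},y^{(1)})$ and $\pi_2$ the analogous projection onto the last $d-k$ coordinates. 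I would set $\gamma_1 = (\pi_1)_{\#}\gamma$ and $\gamma_2 = (\pi_2)_{\#}\gamma$, and verify the straightforward fact that $\gamma_1 \in \Gamma(\mu_1,\nu_1)$ and $\gamma_2 \in \Gamma(\mu_2,\nu_2)$: this is simply because marginalizing $\gamma$ out onto the first $k$ coordinates of $x$ (respectively $y$) yields $\mu_1$ (respectively $\nu_1$), and similarly for the second block.

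Next, using the additive decomposition of the cost and the change-of-variables under a pushforward, I would obtain
\begin{align*}
\int \|x-y\|_p^p\,d\gamma(x,y) &= \int \|x^{(1)}-y^{(1)}\|_p^p\,d\gamma(x,y) + \int \|x^{(2)}-y^{(2)}\|_p^p\,d\gamma(x,y) \\
&= \int \|u-v\|_p^p\,d\gamma_1(u,v) + \int \|u-v\|_p^p\,d\gamma_2(u,v) \\
&\geq \mathcal{W}_p^p(\mu_1,\nu_1) + \mathcal{W}_p^p(\mu_2,\nu_2),
\end{align*}
where the last inequality uses the definition of the Wasserstein distance as the infimum over couplings applied to $\gamma_1$ and $\gamma_2$ separately.

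Finally, taking the infimum over all $\gamma \in \Gamma(\mu,\nu)$ on the left-hand side yields $\mathcal{W}_p^p(\mu,\nu) \geq \mathcal{W}_p^p(\mu_1,\nu_1) + \mathcal{W}_p^p(\mu_2,\nu_2)$, which is exactly the claim. There is no substantive obstacle here; the only subtlety to keep track of is the verification that $\pi_i{}_{\#}\gamma$ has the correct marginals, which is a routine consequence of the definition of marginalization. The edge cases $k=0$ and $k=d$ reduce to the trivial bound $\mathcal{W}_p^p(\mu,\nu) \geq 0 + \mathcal{W}_p^p(\mu,\nu)$.
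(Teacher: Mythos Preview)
Your proposal is correct and follows essentially the same approach as the paper: both split the $\ell_p^p$ cost additively over the two coordinate blocks, push the coupling forward onto each block to obtain admissible couplings of the marginals, and bound each piece below by the corresponding Wasserstein cost. The only cosmetic difference is that the paper starts directly with the optimal coupling (so the first line is an equality), whereas you work with an arbitrary coupling and take the infimum at the end; the content is identical.
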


\begin{proof}
Let $\pi$ be the optimal coupling of $\mu$ and $\nu$, and $(X_{1},\dots,X_{d},Y_{1},\dots,Y_{d})$
is a random vector having law $\pi$, we have $(X_{1},\dots,X_{d})\sim\mu,(Y_{1},\dots,Y_{d})\sim\nu$.
Denote by $\pi_{1}$ and $\pi_{2}$ the marginal distribution of $(X_{1},\dots,X_{k},Y_{1},\dots,Y_{k})$
and $(X_{k+1},\dots,X_{d},Y_{k+1},\dots,Y_{d})$, respectively. It
can be seen that $\pi_{1}$ is a coupling of $\mu_{1}$ and $\nu_{1}$
while $\pi_{2}$ is a coupling of $\mu_{2}$ and $\nu_{2}$. Hence,
\begin{align*}
\mathcal{W}_{p}^{p}(\mu,\nu) & =\int_{\mathbb{R}^{d}\times\mathbb{R}^{d}}\|x-y\|_{p}^{p}d\pi(x,y)\\
 & =\int_{\mathbb{R}^{k}\times\mathbb{R}^{k}}\|x_{1}-y_{1}\|_{p}^{p}d\pi_{1}(x,y)+\int_{\mathbb{R}^{d-k}\times\mathbb{R}^{d-k}}\|x_{2}-y_{2}\|_{p}^{p}d\pi_{2}(x,y)\\
 & \geq\mathcal{W}_{p}^{p}(\mu_{1},\nu_{1})+\mathcal{W}_{p}^{p}(\mu_{2},\nu_{2}),
\end{align*}
where $x_{1}\in\mathbb{R}^{k}$ is a vector including the first $k$
coordinates of $x$, whereas $x_{2}\in\mathbb{R}^{d-k}$ contains
the last $d-k$ elements of $x$, and similar definitions apply for
$y_{1}$ and $y_{2}$. 
\end{proof}
Now, we come back to the proof of Theorem 2. 

\paragraph{Proof of Theorem 5:} \paragraph{(i)}

We consider two random vectors $(X_{1},\dots,X_{M})\sim\mathbb{P}_{f^{S}}$
and $(Y_{1},\dots,Y_{M})\sim\mathbb{P}_{f^{T}}$. Recall that $\mathbb{Q}_{S}$
and $\mathbb{Q}_{T}$ are the marginal distributions of $\mathbb{P}_{f^{S}}$
and $\mathbb{P}_{f^{T}}$ on their first $C-1$ dimensions, respectively,
while $\mathbb{Q}_{S\setminus T}$ denotes the marginal of $\mathbb{P}_{f_{S}}$
on the space of variables having labels in the set $\mathcal{Y}_{S}\setminus\mathcal{Y}_{T}$.
Since $\mathcal{Y}_{T}\subset\mathcal{Y}_{S}$, we have 
\begin{enumerate}
\item $(X_{1},\ldots,X_{C-1})\sim\mathbb{Q}_{S}$, $(Y_{1},\ldots,Y_{C-1})\sim\mathbb{Q}_{T}$; 
\item $(X_{C+1},\ldots,X_{M})\sim\mathbb{Q}_{S\setminus T}$, $(Y_{C+1},\ldots,Y_{M})\sim\delta_{\mathbf{0}_{M-C}}$, 
\end{enumerate}
where $\mathbf{0}_{m}$ denotes the zero vector in $\mathbb{R}^{m}$
for $m\in\mathbb{N}$. 

Let $\mu$ be the distribution of $(X_{1},\dots,X_{C-1},X_{C+1},\dots,X_{M})$,
$\nu$ the distribution of $(Y_{1},\dots,Y_{C-1},$\\$Y_{C+1},\dots,Y_{M})$.
As the simplex $\Delta^{M-1}$ is an $M-1$ dimensional manifold in
$\mathbb{R}^{M}$, the Wasserstein distance between $\mathbb{P}_{f^{S}}$
and $\mathbb{P}_{f^{T}}$ can be written as 
\[
\mathcal{W}_{p}^{p}(\mathbb{P}_{f^{S}},\mathbb{P}_{f^{T}})=\inf_{\gamma\in\Gamma(\mu,\nu)}\int_{\mathbb{R}^{C-1}\times\mathbb{R}^{C-1}}\sum_{i=1}^{M}|x_{i}-y_{i}|^{p}d\gamma(x_{\widehat{C}},y_{\widehat{C}}),
\]
where $x_{\widehat{C}}=(x_{1},\dots,x_{C-1},x_{C+1},\dots,x_{M}),y_{\widehat{C}}=(y_{1},\dots,y_{C-1},y_{C+1},\dots,y_{M})$
and $x_{C}:=1-\sum_{k\neq C}x_{k},$\\$y_{C}:=1-\sum_{k\neq C}y_{k}$.
As $|x_{C}-y_{C}|^{p}\geq0$, it can be deduced that 
\[
\mathcal{W}_{p}^{p}(\mathbb{P}_{f^{S}},\mathbb{P}_{f^{T}})\geq\mathcal{W}_{p}^{p}(\mu,\nu).
\]
Besides, according to Lemma \ref{lemma:W_inequality}, we get 
\begin{align*}
\mathcal{W}_{p}^{p}(\mu,\nu) & \geq\mathcal{W}_{p}^{p}(\mathbb{Q}_{S},\mathbb{Q}_{T})+\mathcal{W}_{p}^{p}(\mathbb{Q}_{S\setminus T},\delta_{\mathbf{0}_{M-C}})=\mathcal{W}_{p}^{p}(\mathbb{Q}_{S},\mathbb{Q}_{T})+\mathbb{E}_{X\sim\mathbb{Q}_{S\setminus T}}\left[\norm X_{p}^{p}\right].
\end{align*}
Putting the above two inequalities together, we obtain the conclusion
that 
\begin{equation}
\mathcal{W}_{p}^{p}(\mathbb{P}_{f^{S}},\mathbb{P}_{f^{T}})\geq\mathcal{W}_{p}^{p}(\mathbb{Q}_{S},\mathbb{Q}_{T})+\mathbb{E}_{X\sim\mathbb{Q}_{S\setminus T}}\left[\norm X_{p}^{p}\right].
\end{equation}

\paragraph{(ii)}

Part (ii) is done similarly to part (i). Therefore, it is omitted.

\paragraph{(iii)}

Let $(X_{1},\ldots,X_{M})\sim\mathbb{P}_{f^{S}}$ and $(Y_{1},\ldots,Y_{M})\sim\mathbb{P}_{f^{T}}$.
Assume that $\mathcal{Y}_{S}=\{1,\ldots,C,C+1,\ldots,D\}$ and $\mathcal{Y}_{T}=\{1,\ldots,C,D+1,\ldots,M\}$.
It follows from the definitions of $\mathbb{Q}_{S},\mathbb{Q}^{T},\mathbb{Q}_{S\setminus T}$
and $\mathbb{Q}_{T\setminus S}$ that 
\begin{enumerate}
\item $(X_{1},\ldots,X_{C-1})\sim\mathbb{Q}_{S}$, $(Y_{1},\ldots,Y_{C-1})\sim\mathbb{Q}_{T}$; 
\item $(X_{C+1},\ldots,X_{D})\sim\mathbb{Q}_{S\setminus T}$, $(Y_{C+1},\ldots,Y_{D})\sim\delta_{\mathbf{0}_{D-C}}$; 
\item $(X_{D+1},\ldots,X_{M})\sim\delta_{\mathbf{0}_{M-D}}$, $(Y_{D+1},\ldots,Y_{M})\sim\mathbb{Q}_{T\setminus S}$. 
\end{enumerate}
Let $\mu$ be the distribution of $(X_{1},\dots,X_{C-1},X_{C+1},\dots,X_{M})$,
and $\nu$ be the distribution of $(Y_{1},\dots,Y_{C-1},$\\$Y_{C+1},\dots,Y_{M})$.
By using the same arguments as in part (i), we get $\mathcal{W}_{p}^{p}(\mathbb{P}_{f^{S}},\mathbb{P}_{f^{T}})\geq\mathcal{W}_{p}^{p}(\mu,\nu)$.
Next, applying Lemma \ref{lemma:W_inequality} twice, we obtain 
\begin{align*}
\mathcal{W}_{p}^{p}(\mu,\nu) & \geq\mathcal{W}_{p}^{p}(\mathbb{Q}_{S},\mathbb{Q}_{T})+\mathcal{W}_{p}^{p}(\mathbb{Q}_{S\setminus T},\delta_{\mathbf{0}_{D-C}})+\mathcal{W}_{p}^{p}(\delta_{\mathbf{0}_{M-D}},\mathbb{Q_{T\setminus S}})\\
 & =\mathcal{W}_{p}^{p}(\mathbb{Q}_{S},\mathbb{Q}_{T})+\mathbb{E}_{Y\sim\mathbb{Q}_{S\setminus T}}\left[\norm Y_{p}^{p}\right]+\mathbb{E}_{X\sim\mathbb{Q}_{T\setminus S}}\left[\norm X_{p}^{p}\right].
\end{align*}
As a consequence, we have proved our claim in part (iii). 

\subsection{Proofs of claims in paragraph "Remark on shifting term"}
\begin{lem}
\label{lem:shifting_term} The following holds: 
\begin{align*}
    \mathcal{W}_{d_{X}}(g\#\mathbb{P}^{S},g\#\mathbb{P}^{T})=\mathcal{W}_{d_{X}}(\mathbb{P}_{h^{S}}^{S},\mathbb{P}_{h^{T}}^{T}).
\end{align*}
\end{lem}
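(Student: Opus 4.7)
The plan is to peel off the two coordinates separately: first reduce the coupling on $(\bx, h(\bx))$-pairs to a coupling on $\bx$-pairs (using that $h^S,h^T$ are deterministic functions of $\bx$), then push that coupling forward by $g$.

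For the first step, I would interpret the cost $d_X$ in $\mathcal{W}_{d_X}(\mathbb{P}_{h^S}^S, \mathbb{P}_{h^T}^T)$ as the cost on the product space $\mathcal{X}\times\mathcal{Y}_\Delta$ given by $((\bx^S,\bu^S),(\bx^T,\bu^T))\mapsto d_X(g(\bx^S),g(\bx^T))$, consistent with how it appears in the definition of $d$ in Eq.~(8). Since $\mathbb{P}_{h^S}^S$ is supported on the graph of $h^S$ and $\mathbb{P}_{h^T}^T$ on the graph of $h^T$, every coupling $\gamma \in \Gamma(\mathbb{P}_{h^S}^S, \mathbb{P}_{h^T}^T)$ is supported on pairs of the form $((\bx^S, h^S(\bx^S)),(\bx^T, h^T(\bx^T)))$, so projecting $\gamma$ onto the $\bx$-coordinates yields some $\gamma' \in \Gamma(\mathbb{P}^S, \mathbb{P}^T)$ and $\gamma$ is recovered by appending $h^S, h^T$. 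Conversely, any $\gamma' \in \Gamma(\mathbb{P}^S, \mathbb{P}^T)$ lifts to such a $\gamma$. Because the cost ignores the $\bu$-coordinates, this bijection preserves the integrated cost, giving
\[
\mathcal{W}_{d_X}(\mathbb{P}_{h^S}^S, \mathbb{P}_{h^T}^T) = \inf_{\gamma' \in \Gamma(\mathbb{P}^S, \mathbb{P}^T)} \int d_X\bigl(g(\bx^S), g(\bx^T)\bigr)\, d\gamma'(\bx^S,\bx^T).
\]

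For the second step, I want to identify the right-hand side with $\mathcal{W}_{d_X}(g\#\mathbb{P}^S, g\#\mathbb{P}^T)$. One direction is immediate: for any $\gamma' \in \Gamma(\mathbb{P}^S,\mathbb{P}^T)$, the push-forward $(g,g)\#\gamma'$ belongs to $\Gamma(g\#\mathbb{P}^S, g\#\mathbb{P}^T)$ and preserves the integral, giving the $\geq$ inequality. For the reverse direction, I would invoke the disintegration/gluing argument: given any $\eta \in \Gamma(g\#\mathbb{P}^S, g\#\mathbb{P}^T)$, disintegrate $\mathbb{P}^S$ along $g$ as $\mathbb{P}^S = \int \mathbb{P}^S_{\mid \bz^S}\, d(g\#\mathbb{P}^S)(\bz^S)$ and similarly for $\mathbb{P}^T$, then define
\[
\gamma'(A \times B) := \int \mathbb{P}^S_{\mid \bz^S}(A)\,\mathbb{P}^T_{\mid \bz^T}(B)\, d\eta(\bz^S,\bz^T),
\]
which lies in $\Gamma(\mathbb{P}^S,\mathbb{P}^T)$ and satisfies $(g,g)\#\gamma' = \eta$. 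This yields equality and completes the proof.

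The only subtle point is the lifting step via disintegration, which requires $(\mathcal{X}, \mathbb{P}^S)$ and $(\mathcal{X}, \mathbb{P}^T)$ to be nice enough (e.g., atomless Polish, as already assumed in Section~\ref{sec:label_shift_optimal_transport}) for regular conditional distributions to exist. Given this, the entire argument is a purely measure-theoretic bookkeeping, and I do not expect any genuine obstacle beyond writing it cleanly.
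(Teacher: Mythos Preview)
Your proof is correct and follows essentially the same two-step structure as the paper: first reduce $\mathcal{W}_{d_X}(\mathbb{P}_{h^S}^S,\mathbb{P}_{h^T}^T)$ to $\inf_{\gamma'\in\Gamma(\mathbb{P}^S,\mathbb{P}^T)}\int d_X(g(\bx^S),g(\bx^T))\,d\gamma'$, then identify this with $\mathcal{W}_{d_X}(g\#\mathbb{P}^S,g\#\mathbb{P}^T)$ via a pushforward/lifting argument. The only cosmetic difference is that for the first step the paper detours through the Monge formulation (invoking the Proposition~\ref{thm:label_shift}-style argument and then the Monge--Kantorovich equivalence), whereas you stay at the Kantorovich level using the graph-support bijection directly; and for the second step the paper cites its gluing lemma (Lemma~\ref{lem:my_gluing}) while you spell out the disintegration construction that underlies it. Your route is arguably a little cleaner since it avoids the extra hypothesis needed for Monge--Kantorovich equivalence, but the substance is the same.
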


\begin{proof}
\textbf{(i)} Applying the same arguments as in Proposition 1, we have
\begin{equation}
\mathcal{W}_{d_{X}}(\mathbb{P}_{h^{S}}^{S},\mathbb{P}_{h^{T}}^{T})=\inf_{H:H\#\mathbb{P}_{h^{T}}^{T}=\mathbb{P}_{h^{S}}^{S}}\mathbb{E}_{\mathbb{P}_{h^{T}}^{T}}\left[d_{X}[g(\bx),g(H_{1}(\bx))]\right]=\inf_{L:L\#\mathbb{P}^{T}=\mathbb{P}^{S}}\mathbb{E}_{\mathbb{P}^{T}}\left[d_{X}(g(\bx),g(L(\bx)))\right],
\end{equation}
where $H((\bx,h^{T}(\bx)))=(H_{1}((\bx,h^{T}(\bx))),H_{2}((\bx,h^{T}(\bx))))$
such that $H_{1}((\bx,h^{T}(\bx)))\in\mathcal{X}^{S}$. So now we
only need to prove that 
\begin{equation}
\mathcal{W}_{d_{X}}(g\#\mathbb{P}^{S},g\#\mathbb{P}^{T})=\inf_{L:L\#\mathbb{P}^{T}=\mathbb{P}^{S}}\mathbb{E}_{\mathbb{P}^{T}}\left[d_{X}(g(\bx),g(L(\bx)))\right].\label{claim:lemishifting}
\end{equation}
Due to the equivalence of Monge and Kantorovich problem, we can write
the RHS as 
\begin{equation}
\inf_{L:L\#\mathbb{P}^{T}=\mathbb{P}^{S}}\mathbb{E}_{\mathbb{P}^{T}}\left[d_{X}(g(\bx),g(L(\bx)))\right]=\inf_{\gamma\in\Gamma(\mathbb{P}^{S},\mathbb{P}^{T})}\mathbb{E}_{(\bx^{S},\bx^{T})\sim\gamma}\left[d_{X}(g(\bx^{S}),g(\bx^{T}))\right].
\end{equation}
To prove Eq.~\eqref{claim:lemishifting}, we will show that RHS is
not less than LHS and inversely. Indeed, for any coupling $\gamma\in\Gamma(\mathbb{P}^{S},\mathbb{P}^{T})$,
we have $\gamma'=(g,g)\#\gamma$ as a coupling of $(g\#\mathbb{P}^{S},g\#\mathbb{P}^{T})$,
therefore 
\[
\mathbb{E}_{(\bx^{S},\bx^{T})\sim\gamma}\left[d_{X}(g(\bx^{S}),g(\bx^{T}))\right]=\mathbb{E}_{(\mathbf{y}^{S},\mathbf{y}^{T})\sim\gamma'}\left[d_{X}(\mathbf{y}^{S},\mathbf{y}^{T})\right]\geq\inf_{\gamma'\in\Gamma(g\#\mathbb{P}^{S},g\#\mathbb{P}^{T})}\mathbb{E}_{(\mathbf{y}^{S},\mathbf{y}^{T})\sim\gamma'}\left[d_{X}(\mathbf{y}^{S},\mathbf{y}^{T})\right].
\]
Taking the infimum with respect to $\gamma$, 
\begin{equation}
\inf_{\gamma\in\Gamma(\mathbb{P}^{S},\mathbb{P}^{T})}\mathbb{E}_{(\bx^{S},\bx^{T})\sim\gamma}\left[d_{X}(g(\bx^{S}),g(\bx^{T}))\geq\mathcal{W}_{d_{X}}(g\#\mathbb{P}^{S},g\#\mathbb{P}^{T})\right].\label{claim:RHSgeqLHS}
\end{equation}
Conversely, thanks to Lemma \ref{lem:my_gluing}, for each coupling
$\gamma'$ of $(g\#\mathbb{P}^{S},g\#\mathbb{P}^{S})$, there exists
a coupling $\gamma$ of $(\mathbb{P}^{S},\mathbb{P}^{S})$ such that
$\gamma'=(g,g)\#\gamma$, which deduces that 
\[
\mathbb{E}_{(\mathbf{y}^{S},\mathbf{y}^{T})\sim\gamma'}\left[d_{X}(\mathbf{y}^{S},\mathbf{y}^{T})\right]=\mathbb{E}_{(\bx^{S},\bx^{T})\sim\gamma}\left[d_{X}(g(\bx^{S}),g(\bx^{T}))\right]\geq\inf_{\gamma\in\Gamma(\mathbb{P}^{S},\mathbb{P}^{T})}\mathbb{E}_{(\bx^{S},\bx^{T})\sim\gamma}\left[d_{X}(g(\bx^{S}),g(\bx^{T}))\right].
\]
Taking the infimum with respect to $\gamma'$, 
\begin{equation}
\mathcal{W}_{d_{X}}(g\#\mathbb{P}^{S},g\#\mathbb{P}^{T})\geq\inf_{\gamma\in\Gamma(\mathbb{P}^{S},\mathbb{P}^{T})}\mathbb{E}_{(\bx^{S},\bx^{T})\sim\gamma}\left[d_{X}(g(\bx^{S}),g(\bx^{T}))\right].\label{claim:LHSgeqRHS}
\end{equation}
Inequalities~\eqref{claim:RHSgeqLHS} and \eqref{claim:LHSgeqRHS}
together imply equation \eqref{claim:lemishifting} and finish the proof.

\end{proof}

\section{Additional experiment results \label{sec:additional_exps}}

\subsection{More analysis about rationale of the terms used in the objective function of LDROT}\label{more_analysis}
\textbf{Comparing to DeepJDOT} \cite{damodaran2018deepjdot}. The $\mathcal{W}_{d}(\mathbb{P}_{h^{S}}^{S},\mathbb{P}_{h^{T}}^{T})$ with $d=\lambda d_X+d_Y$ was investigated in DeepJDOT \cite{damodaran2018deepjdot}. However, ours
is different from that work in some aspects: (i) \textit{similarity based dynamic weighting}, (ii) \textit{clustering loss for enforcing clustering assumption for target classifier}, and (iii) \textit{entropic dual form} for training rather than Sinkhorn as in \cite{damodaran2018deepjdot}.

This objective function consists of three losses: (i) standard loss
$\mathcal{L}^{S}$, (ii) shifting loss $\mathcal{L}^{shift}$, and
(iii) clustering loss $\mathcal{L}^{clus}$. The standard loss $\mathcal{L}^{S}$
is trained on the labeled source domain. The shifting loss aims to
reduce both data and label shift simultaneously on the latent space
by minimizing $\mathcal{W}_{d}\left(\mathbb{P}_{h^{S}}^{S},\mathbb{P}_{h^{T}}^{T}\right)$
where $d=\lambda d_{X}+d_{Y}$ for which $d_{X}$ is data distance
on the latent space and $d_{Y}$ is distance on the label simplex.
Based on the theory developed, we demonstrate that minimizing this term
helps to reduce both data shift (i.e., $\mathcal{W}_{d_{X}}\left(g_{\#}\mathbb{P}^{S},g_{\#}\mathbb{P}^{T}\right)$
and label shift (i.e., $\mathcal{W}_{d_{Y}}\left(\mathbb{P}_{h^{S}}^{S},\mathbb{P}_{h^{T}}^{T}\right)$).
Finally, the $\mathcal{L}^{clus}$ assists us in enforcing the clustering
assumption to boost the generalization of the target classifier $\bar{h}^{T}$.
By enforcing the clustering assumption, classifiers are encouraged
to preserve the cluster structure and give the same predictions for data
representations in the same cluster. It appears that when pushing
target latent toward source representations via minimizing $\mathcal{W}_{d}\left(\mathbb{P}_{h^{S}}^{S},\mathbb{P}_{h^{T}}^{T}\right)$,
source and target representations tend to group in clusters, hence
we can strengthen and boost generalization of target classifier $\bar{h}^{T}$
by enforcing it to preserve the predictions in the same clusters.

In addition, we propose a dynamic weighting for $\lambda$ using
similarities of pairs between source and target examples. For our
similarity-based weighting distance, we base on pre-trained similarities
to decide if we push more or fewer pairs of source and target latent
representations together to reduce data and label shifts more efficiently.
Definitely, if we can push groups of source and target representations
with the same labels together more efficiently, we can certainly reduce
both data and label shifts simultaneously. 

\subsection{Data preparation and pre-processing \label{subsec:data-preparation-and-preprocessing}}

\textbf{Digits}. We resize the resolution of each sample in the dataset
to $32\times32$, and normalize the value of each pixel to the range
of $[-1,1]$.

\textbf{Office-31, Office-Home, }and\textbf{ ImageCLEF-DA}. We use
2048-dimensional features extracted from ResNet-50 \cite{he2016resnet}
pretrained on ImageNet.

\subsection{Algorithm of LDROT}

We present peusocode of LDROT in Algorithm \ref{alg:ldrot}.

\begin{algorithm}[h]
\begin{algorithmic}

\REQUIRE A source batch{\small{}{} $\mathcal{B}^{S}=\left\{ \left(\bx_{i}^{S},y_{i}^{S}\right)\right\} _{i=1}^{Mb}$},
a target batch{\small{}{} $\mathcal{B}^{T}=\left\{ \left(\bx_{j}^{T},y_{j}^{T}\right)\right\} _{j=1}^{b}$},
$b$ is the batch size.

\ENSURE Classifier $\bar{h^{*}}^{S}\bar{h^{*}}^{T}$, generator $g^{*}$.

\STATE Evaluate $\left\{ \boldsymbol{r}_{i}^{S}\right\} _{i=1}^{Mb}$
and $\left\{ \boldsymbol{r}_{j}^{T}\right\} _{j=1}^{b}$ based on
$\mathcal{B}^{S}$ and $\mathcal{B}^{T}$ respectively.

\STATE Compute the weights $\bw_{ij}$ ibased on $\left\{ \boldsymbol{r}_{i}^{S}\right\} _{i=1}^{Mb}$
and $\left\{ \boldsymbol{r}_{j}^{T}\right\} _{j=1}^{b}$ .

\FOR {number of training iterations}

\FOR {$k$ steps}

\STATE Update $\phi$ according to Eq. (13).

\ENDFOR

\STATE Update $\bar{h}^{S},\bar{h}^{T}$ and $g$ according to Eq.
(10).

\ENDFOR

\end{algorithmic}

\caption{Pseudocode for training our LDROT.\label{alg:ldrot}}
\end{algorithm}

\subsection{Network architecture}

There are 2 types of the architecture described in Table \ref{tab:model-architecture},
which are small (\textbf{S}) and large (\textbf{L}) networks. We use
\textbf{L} network for \emph{Digits} and \textbf{S} network for the
other datasets. Additionally, excluding dense layers in the $\phi$
network, we add the batch normalization layers on top of convolutional
and dense layers to reduce the overfitting problem. Finally, we implement
our LDROT in Python using TensorFlow (version 1.9.0) \cite{abadi2016tensorflow},
an open-source software library for Machine Intelligence developed
by the Google Brain Team. All experiments are run on a computer with
an NVIDIA Tesla V100 SXM2 with 16 GB memory.

\begin{table}
\centering{}\caption{Small and large networks for LDROT. The Leaky ReLU (lReLU) parameter
$a$ is set to 0.1.\label{tab:model-architecture}}
\begin{tabular}{>{\raggedright}m{1.8cm}cc}
\hline 
Architecture  & S  & L\tabularnewline
\hline 
\multicolumn{1}{c}{Input size} & $2048$  & $32\times32\times3$\tabularnewline
\hline 
Generator $g$  &  & instance normalization\tabularnewline
 & $256$ dense, ReLU  & $3\times3$ conv. 64 lReLU\tabularnewline
 & dropout, $p=0.5$  & $3\times3$ conv. 64 lReLU\tabularnewline
 & Gaussian noise, $\sigma$ = 1  & $3\times3$ conv. 64 lReLU\tabularnewline
 &  & $2\times2$ max-pool, stride 2\tabularnewline
 &  & dropout, $p=0.5$\tabularnewline
 &  & Gaussian noise, $\sigma$ = 1\tabularnewline
 &  & $3\times3$ conv. 64 lReLU\tabularnewline
 &  & $3\times3$ conv. 64 lReLU\tabularnewline
 &  & $3\times3$ conv. 64 lReLU\tabularnewline
 &  & $2\times2$ max-pool, stride 2\tabularnewline
 &  & dropout, $p=0.5$\tabularnewline
 &  & Gaussian noise, $\sigma$ = 1\tabularnewline
 &  & $3\times3$ conv. 8 lReLU\tabularnewline
 &  & $2\times2$ max-pool, stride 2\tabularnewline
\hline 
Classifier $\bar{h}^{S},\bar{h}^{T}$  & \emph{\#classes} dense, softmax  & $3\times3$ conv. 8 lReLU\tabularnewline
 &  & $3\times3$ conv. 8 lReLU\tabularnewline
 &  & $3\times3$ conv. 8 lReLU\tabularnewline
 &  & global average pool\tabularnewline
 &  & \emph{\#classes} dense, softmax\tabularnewline
\hline 
$\phi$ & 1 dense, linear  & 100 dense, ReLU\tabularnewline
 &  & 1 dense, linear\tabularnewline
\hline 
\end{tabular}
\end{table}

\subsection{Implementation details}

We first present our procedure to compute the weights $\bw_{ij}$,
which derives from the feature extraction process. For \emph{Digits},
we design a network to train from scratch on labeled source examples.
Then source and target features are extracted via this pretrained
model. For the other datasets, we use extracted ResNet-50 features
\cite{he2016resnet} and design a small network to train LDROT. During
training, the features are used for first computing pairwise similarity
scores and the weights $\bw_{ij}$ after that.

For LDROT, we find that some hyper-parameters contributes substantially
to the model performance, namely $\tau$ and $\epsilon$. The temperature
parameter $\tau$, which contributes to sharpening and contrasting
the weights $\bw_{ij}$, is fixed to $0.5$. Tweaking the regularization
rate $\epsilon$ is vital for scaling $\phi_{\epsilon}^{c}\left(\bx\right)$
and we select $\epsilon=0.1$. For trade-off parameters $\alpha,\beta$,
we choose $\alpha=0.1$ and $\beta=0.5$ for all settings. We apply
Adam optimizer \cite{kingma2014adam} ($\beta_{1}=0.5,\beta_{2}=0.999$)
with Polyak averaging. The learning rate is set to $0.001$ and $0.0001$
for Digits and the other datasets respectively. Additionally, in nature,
our model solves the minimax optimization problem (see Eq. (13) in
the main paper) in which $\phi$ and $\bar{h}^{S},\bar{h}^{T},g$
are updated sequentially in each iteration with five times for $\phi$
and one time for $\bar{h}^{S},\bar{h}^{T},g$. Finally, we use the
cosine distance for $d_{X}$ and Kullback-Leibler (KL) divergence
for $d_{Y}$.

\subsection{Additional results for Digits and ImageCLEF-DA}

\begin{table}[t]
\centering{}\caption{Classification accuracy (\%) on Digits dataset for unsupervised domain
adaptation.\label{tab:digit-image-datasets}}
\begin{tabular}{ccccc}
\hline 
Method & S$\rightarrow$M  & M$\rightarrow$U  & U$\rightarrow$M  & Avg\tabularnewline
\hline 
DANN \cite{Ganin2015}  & 85.5  & 84.9  & 86.3  & 85.6\tabularnewline
ADDA \cite{tzeng2017ADDA}  & 89.2  & 85.4  & 96.5  & 90.4\tabularnewline
DeepCORAL \cite{sun2016coral}  & 88.3  & 84.1  & 93.6  & 88.7\tabularnewline
CDAN \cite{long2018cdan}  & 89.2  & 95.6  & 98.0  & 94.3\tabularnewline
TPN \cite{pan2019tpn}  & 93.0  & 92.1  & 94.1  & 93.1\tabularnewline
rRevGrad+CAT \cite{deng2019cluster}  & 98.8  & 94.0  & 96.0  & 96.3\tabularnewline
SWD\textbf{ }\cite{chenyu2019swd}  & 98.9  & 98.1  & 97.1  & 98.0\tabularnewline
DeepJDOT \cite{damodaran2018deepjdot}  & 96.7  & 95.7  & 96.4  & 96.3\tabularnewline
DASPOT\textbf{ }\cite{yujia2019onscalable}  & 96.2  & 97.5  & 96.5  & 96.7\tabularnewline
ETD \cite{li2020enhanceOT}  & 97.9  & 96.4  & 96.3  & 96.9\tabularnewline
RWOT \cite{xu2020reliable}  & 98.8  & \textbf{98.5}  & 97.5  & 98.3\tabularnewline
\hline 
\textbf{LDROT}  & \textbf{99.0}  & 98.2  & \textbf{99.1}  & \textbf{98.8}\tabularnewline
\hline 
\end{tabular}
\end{table}
\begin{table}
\centering{}\caption{Classification accuracy (\%) on ImageCLEF-DA dataset for unsupervised
domain adaptation (ResNet-50).\label{tab:imageclef-da}}
\begin{tabular}{cccccccc}
\hline 
Method  & I$\rightarrow$P  & P$\rightarrow$I  & I$\rightarrow$C  & C$\rightarrow$I  & C$\rightarrow$P  & P$\rightarrow$C  & Avg\tabularnewline
\hline 
ResNet-50 \cite{he2016resnet}  & 74.8  & 83.9  & 91.5  & 78.0  & 65.5  & 91.2  & 80.7\tabularnewline
DeepCORAL \cite{sun2016coral}  & 75.1  & 85.5  & 92.0  & 85.5  & 69.0  & 91.7  & 83.1\tabularnewline
DANN \cite{ganin2016domain}  & 75.0  & 86.0  & 96.2  & 87.0  & 74.3  & 91.5  & 85.0\tabularnewline
ADDA \cite{tzeng2017ADDA}  & 75.5  & 88.2  & 96.5  & 89.1  & 75.1  & 92.0  & 86.0\tabularnewline
CDAN \cite{long2018cdan}  & 77.7  & 90.7  & 97.7  & 91.3  & 74.2  & 94.3  & 87.7\tabularnewline
TPN \cite{pan2019tpn}  & 78.2  & 92.1  & 96.1  & 90.8  & 76.2  & 95.1  & 88.1\tabularnewline
SymNets \cite{zhang2019symsnet}  & 80.2  & 93.6  & 97.0  & 93.4  & 78.7  & 96.4  & 89.9\tabularnewline
SAFN \cite{xu2019sfan}  & 79.3  & 93.3  & 96.3  & 91.7  & 77.6  & 95.3  & 88.9\tabularnewline
rRevGrad+CAT \cite{deng2019cluster}  & 77.2  & 91.0  & 95.5  & 91.3  & 75.3  & 93.6  & 87.3\tabularnewline
DeepJDOT \cite{damodaran2018deepjdot}  & 77.5  & 90.5  & 95.0  & 88.3  & 74.9  & 94.2  & 86.7\tabularnewline
ETD \cite{li2020enhanceOT}  & 81.0  & 91.7  & \textbf{97.9}  & 93.3  & 79.5  & 95.0  & 89.7\tabularnewline
RWOT \cite{xu2020reliable}  & 81.3  & 92.9  & \textbf{97.9}  & 92.7  & 79.1  & 96.5  & 90.0\tabularnewline
\hline 
\textbf{LDROT}  & \textbf{81.7}  & \textbf{96.7}  & 97.5  & \textbf{94.2}  & \textbf{80.4}  & \textbf{96.7}  & \textbf{91.2}\tabularnewline
\hline 
\end{tabular}
\end{table}
We additionally present the experimental results for Digits and ImageCLEF-DA
datasets in Tables \ref{tab:digit-image-datasets} and \ref{tab:imageclef-da}.
It can be observed that LDROT also outperforms the baselines on these
datasets.\textbf{ }

\subsection{Ablation studies \label{subsec:Ablation-studies}}

\textbf{Effects of the label shift term $d_{Y}$ and the weights $\bw_{ij}$}:\label{dy-w}
To answer the question of \emph{how will the model performance be
affected when the weights }\textbf{\emph{$\bw_{ij}$ }}\emph{or }\textbf{\emph{$d_{Y}$
}}\emph{in Eq. (\ref{eq:entropic_WS-1}) is removed?}, we evaluate
our model on four different settings: LDROT without both the weights
\textbf{$\bw_{ij}$} and the label shift\textbf{ }$d_{Y}(\cdot,\cdot)$
(LDROT$-wd$), without the weights \textbf{$\bw_{ij}$} (LDROT$-w$),
without minimizing the label shift\textbf{ }$d_{Y}(\cdot,\cdot)$
(LDROT$-d$) and a complete model (LDROT$+wd$). The results on \emph{Office-Home
}in Table \ref{tab:effect-dy-wij} dedicate the significance of \textbf{$\bw_{ij}$
}and\textbf{ $d_{Y}(\cdot,\cdot)$}, where these components remarkably
contribute to reducing the data and label shifts with 4.1\% improvements
on average.

\begin{figure}[!t]
\centering{}\includegraphics[width=0.6\textwidth]{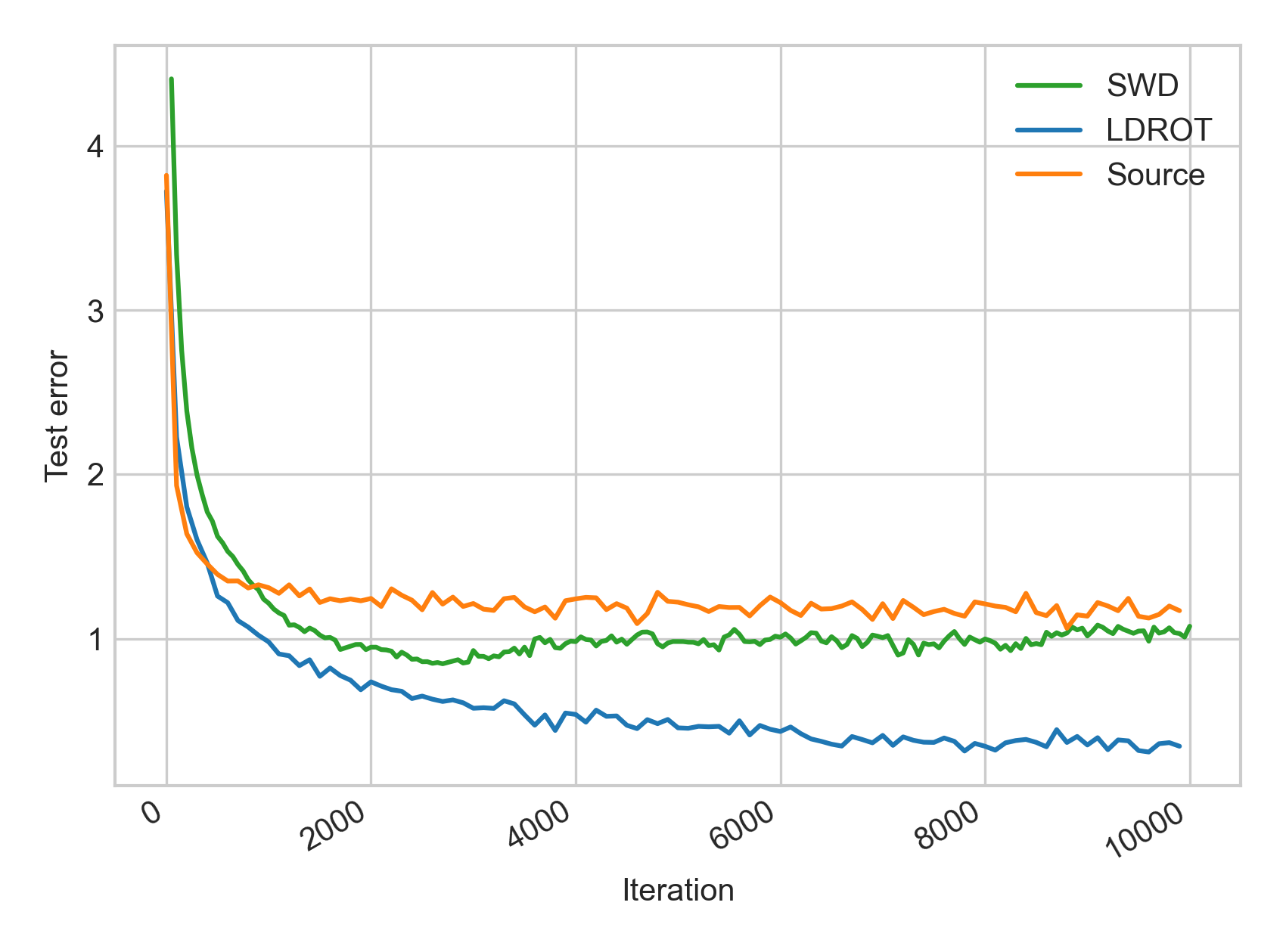}\vspace{-3mm}
 \caption{Comparision of convergence performance between LDROT and other approaches
on the transfer task \textbf{A}\textrightarrow \textbf{D}.\label{fig:convergence}}
\end{figure}

\begin{table}[h]
\begin{centering}
\caption{Accuracy (\%) of ablation study on \emph{Office-Home}.\label{tab:effect-dy-wij}}
\begin{tabular}{ccccccc}
\hline 
Method  & Ar$\rightarrow$Cl  & Cl$\rightarrow$Ar  & Cl$\rightarrow$Rw  & Pr$\rightarrow$Ar  & Rw$\rightarrow$Ar  & Avg\tabularnewline
\hline 
LDROT$-wd$  & 51.7  & 62.6  & 78.5  & 63.3  & 66.1  & 64.4\tabularnewline
LDROT$-w$  & 56.7  & 62.3  & 79.3  & 64.0  & 66.8  & 65.8\tabularnewline
LDROT$-d$  & 55.4  & 65.5  & 80.1  & 65.0  & 68.4  & 66.9\tabularnewline
LDROT$+wd$  & \textbf{57.4}  & \textbf{67.2}  & \textbf{80.7}  & \textbf{66.5}  & \textbf{70.9}  & \textbf{68.5}\tabularnewline
\hline 
\end{tabular}
\par\end{centering}
\centering{} 
\end{table}

\paragraph{Rationale of weigh strategy.}

In Figure \ref{fig:sim_scores}, we visualize the similarity scores
of a randomly selected target example $\bx_{j}^{T}$ in a batch w.r.t.
source examples $\bx_{i}^{S}$ using the source and target domains
\emph{Amazon} and \emph{Dslr} of \emph{Office-31 }dataset, respectively.
The orange points represent the similarity scores for the same class,
while the blue points represent those for different classes. It is
evident that the orange values tend to bigger than the blue ones except
in some outlier cases, hence if we choose $\mu_{i}$ as indicated,
we can separate well the orange and blue values.

\begin{figure}[h]
\centering{}\includegraphics[width=0.5\textwidth]{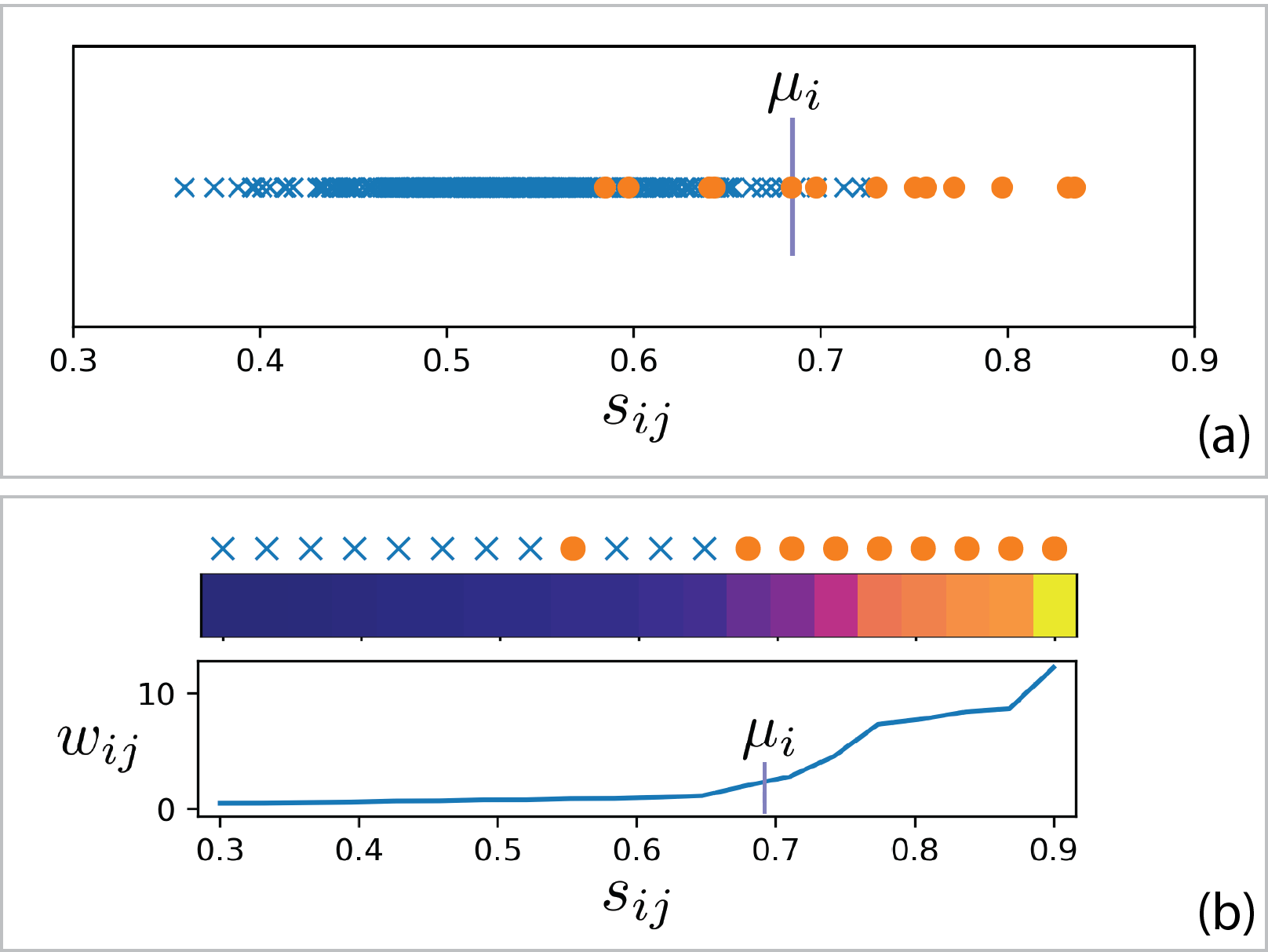}\caption{(a) 1D visualization of finding an appropriate $\mu_{i}$. It is able
to split same-label pairs (orange points) and different-label pairs
(blue points) if $\mu_{i}$ is set to $\frac{M-1}{M}$-percentile
of this array. (b) To observe the weight values $w_{ij}$ of those
pairs, we randomly picked a target example to compute similarity scores
with 20 representative source points in a batch, and then sort them
in ascending order. After computing the weights, the figures for the
same-label pairs tend to be much higher than that for different-label
pairs. A heat-map color is used to represent the weights magnitude
(the brighter means higher value).\label{fig:sim_scores}}
\end{figure}
\begin{figure}[h]
\centering \subfloat[ResNet.\label{fig:ResNet-AD}]{\includegraphics[width=0.4\textwidth]{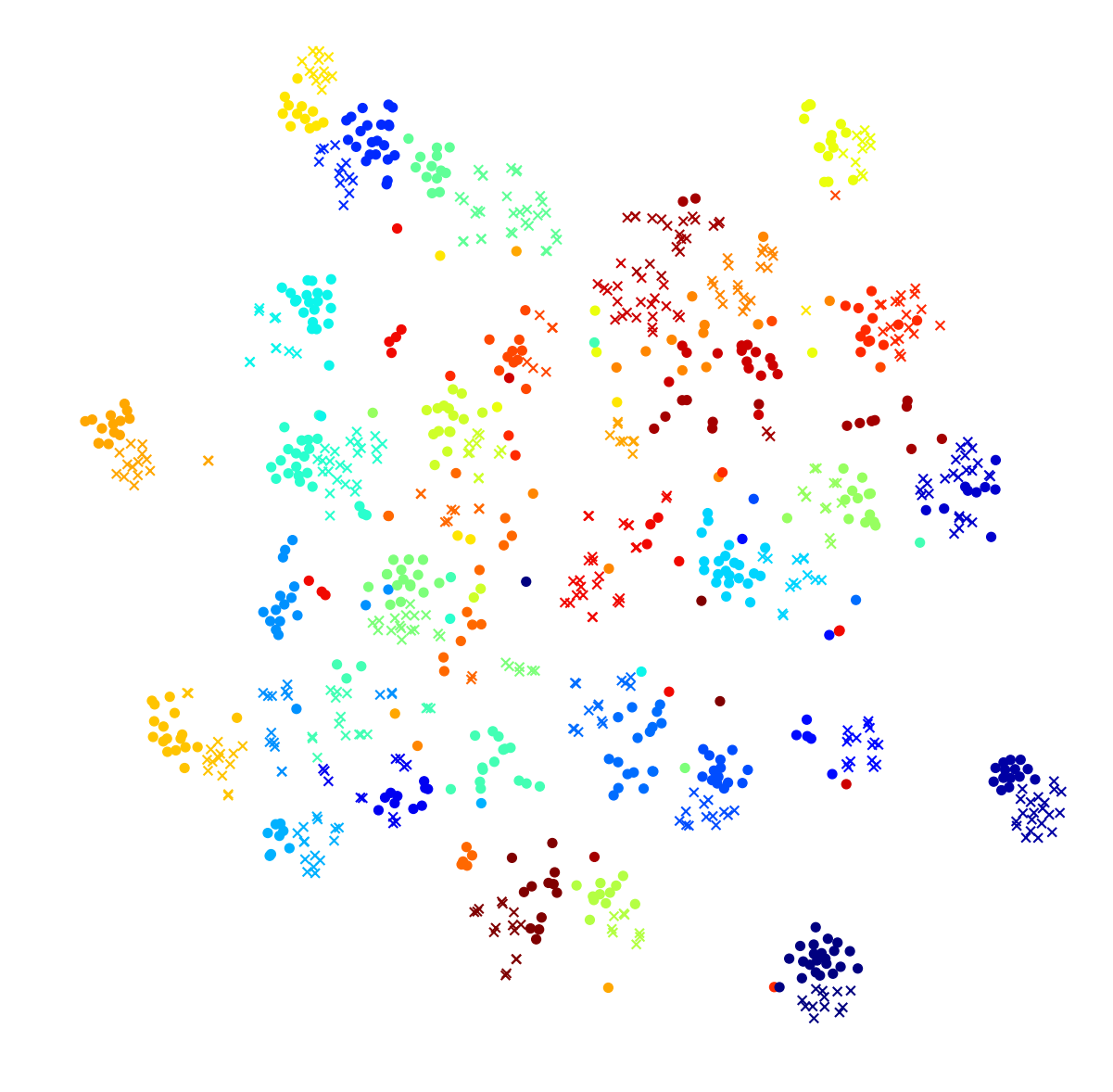}

}~~~~~~~~~~~~~~ \subfloat[LDROT.\label{fig:LDROT-AD}]{}{\includegraphics[width=0.4\textwidth]{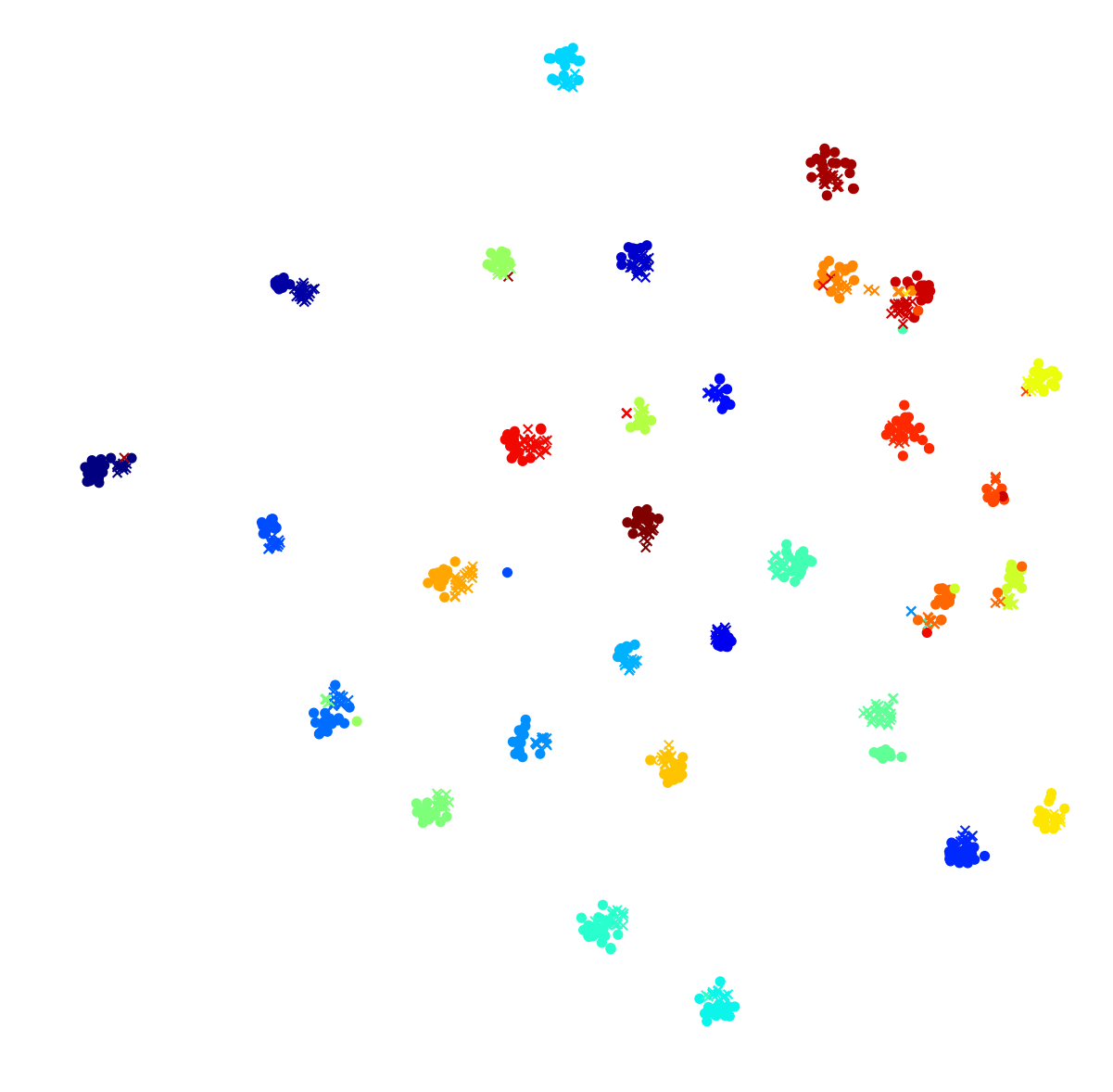}}\caption{The t-SNE visualization of \textbf{A}\textrightarrow \textbf{D} (Figure
a, b) tasks with label and domain information. Each color denotes
a class while the circle and cross markers represent the source and
target data respectively.\label{fig:The-t-SNE-visualization_AD}}
\vspace{-3mm}
 
\end{figure}

\paragraph{Feature visualization. }

We visualize the features of ResNet-50 and our methods on \textbf{A}\textrightarrow \textbf{D}
(\emph{Office-31}) and \textbf{P}\textrightarrow \textbf{C} (\emph{ImageCLEF-DA})
tasks by \emph{t}-SNE \cite{vanDerMaaten2008} in Figure \ref{fig:The-t-SNE-visualization_AD}.
The visualizations in Figure \ref{fig:ResNet-AD} and \ref{fig:ResNet-PC}
show that ResNet-50 classifies quite well on source domains (\textbf{A}
and \textbf{P}) but poorly on target domains (\textbf{D} and \textbf{C}).
While the representation in Figure \ref{fig:LDROT-AD} and \ref{fig:LDROT-PC}
is generated by our method with better alignment. LDROT achieves exactly
31 and 12 clusters corresponding to 31 and 12 classes of \emph{Office-31}
and \emph{ImageCLEF-DA}, which represents generalization ability of
our model in which the classifier generalizes well not only on the
source domain but also on the target domain.

\begin{figure}[h]
\centering \subfloat[ResNet.\label{fig:ResNet-PC}]{ 

}{\includegraphics[width=0.4\columnwidth]{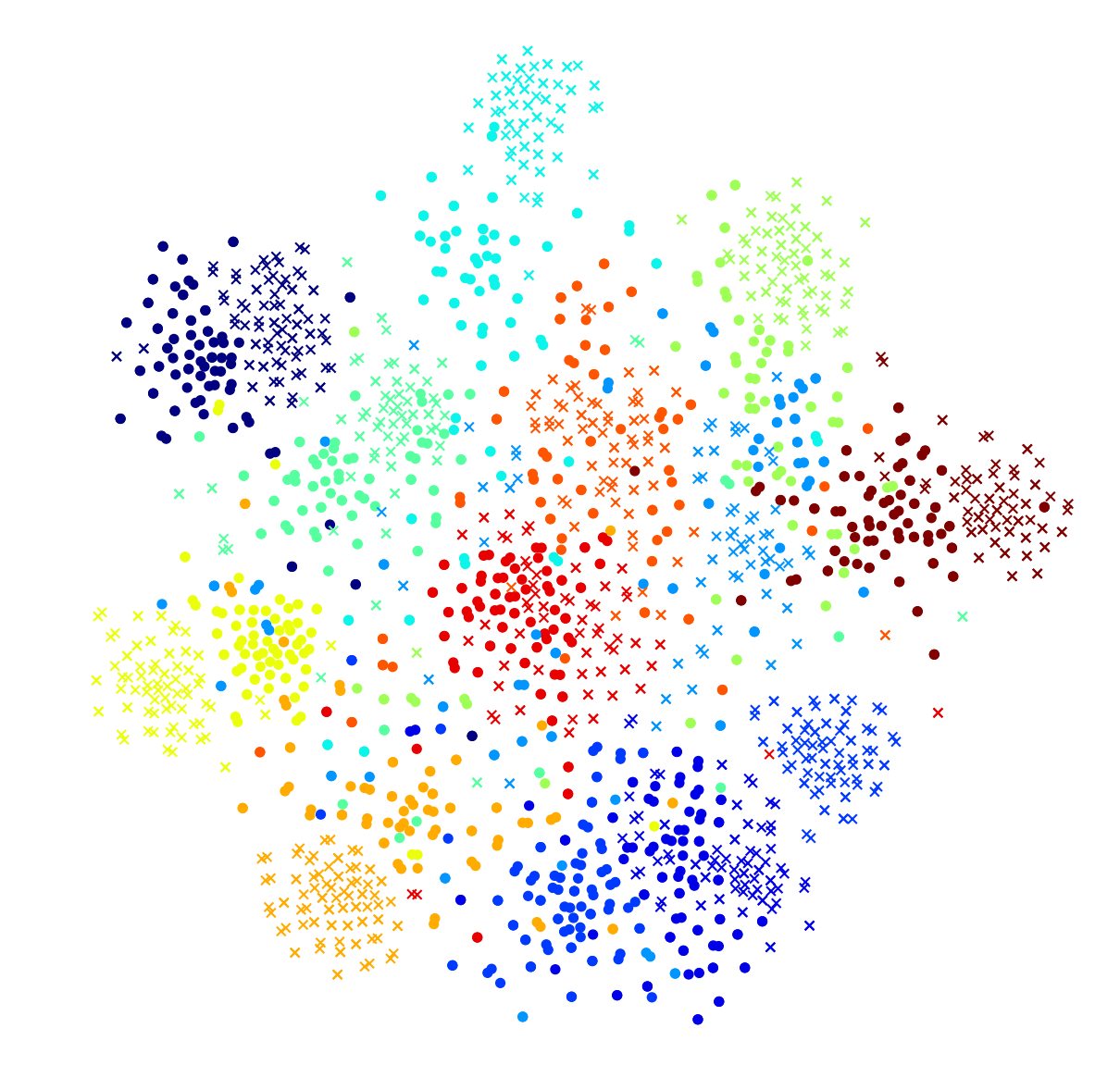}}
~~~~~~~~~~~~~~ \subfloat[LDROT.\label{fig:LDROT-PC}]{ 

}{\includegraphics[width=0.4\columnwidth]{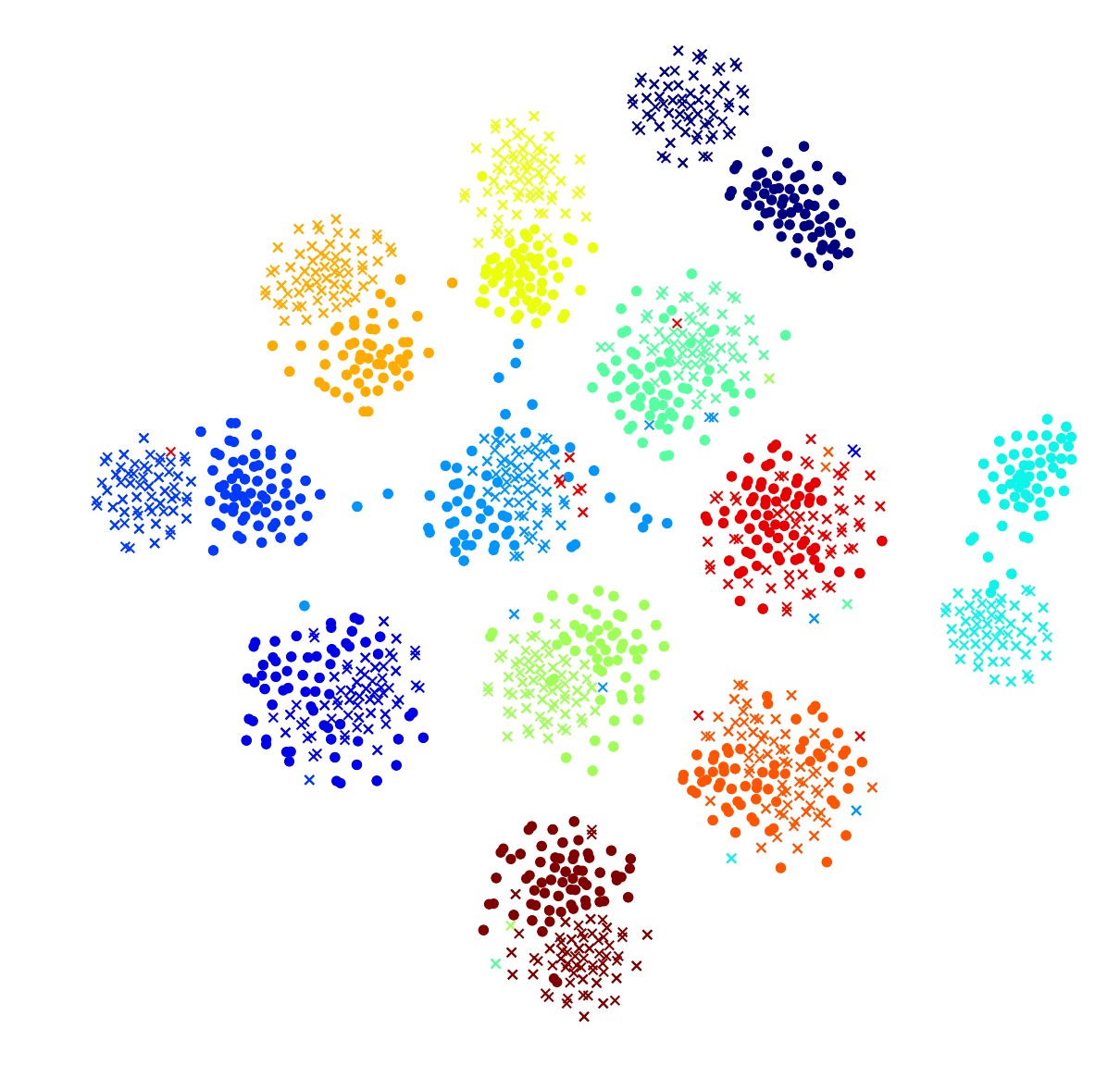}}
\caption{The t-SNE visualization of and \textbf{P}\textrightarrow \textbf{C}
(Figure a, b) tasks with label and domain information. Each color
denotes a class while the circle and cross markers represent the source
and target data respectively.\label{fig:The-t-SNE-visualization-PC}}
\vspace{-3mm}
 
\end{figure}

\paragraph{Convergence. }

We testify the convergence of our LDROT with the test errors on \textbf{A}\textrightarrow \textbf{D}
task, as shown in Figure \ref{fig:convergence}. We conduct experiments
on three methods including \emph{Source} (test error is achieved with
classifier trained on source data without adaptation), \emph{SWD}
\cite{chenyu2019swd}, and our \emph{LDROT}. For fair comparison,
the methods are applied the same optimizer (Adam with learning rate
of $0.0001$) and batch size. The results show that the error of \emph{LDROT}
on the target domain is remarkably lower, which illustrates better
generalization capability of the source classifier. During training,
our method encourages a target sample to actively moving to a suitable
group or cluster of source examples in a similarity-aware manner.
This phenomenon implies that \emph{LDROT} enjoys faster and stable
convergence than the other settings. 

\end{document}